\documentclass[11pt,a4paper]{article}
\usepackage[margin=2.5cm]{geometry}
\usepackage[utf8]{inputenc}
\usepackage{authblk} 
\usepackage{newtxtext,newtxmath}
\usepackage{url}
\usepackage{graphicx}
\usepackage{amssymb}
\usepackage{amsthm}

\usepackage{subcaption} 
\usepackage{caption} 
\usepackage{csquotes} 

\usepackage[round]{natbib}
\usepackage{algorithm}
\usepackage[noend]{algpseudocode}
\usepackage{wrapfig}
\usepackage{import}

\usepackage[dvipsnames]{xcolor}
\definecolor{mycherryred}{RGB}{194,37,68}
\definecolor{standsoutabit}{RGB}{85, 28, 128}
\definecolor{nicegreen}{RGB}{50, 153, 57}
\definecolor{rubenscomment}{RGB}{29, 168, 50}
\usepackage[colorlinks=true,
  citecolor=mycherryred,
]{hyperref}
\usepackage{enumitem}
\usepackage{braket}
\usepackage{rubens}

\usepackage{xpatch}
\xapptocmd\normalsize{%
 \abovedisplayskip=12pt plus 3pt minus 9pt
 \belowdisplayskip=12pt plus 3pt minus 9pt
 \abovedisplayshortskip=0pt plus 3pt
 \belowdisplayshortskip=7pt plus 3pt minus 4pt
}{}{}

\title{Predictive Representations for \\Skill Transfer in Reinforcement Learning}

\author[1]{Ruben Vereecken}
\author[2]{Luke Dickens}
\author[1]{Alessandra Russo}
\affil[1]{Computer Science, Imperial College, UK}
\affil[2]{Information Studies, UCL, UK}
\graphicspath{{./img/}}

\usepackage[normalem]{ulem}

\begin{document}

\title{Predictive Representations for \\Skill Transfer in Reinforcement Learning}



\maketitle

\begin{center}
    \itshape
    Research conducted: September 2018 -- June 2021. \\
    This manuscript represents the work as of June 2021.
\end{center}

\begin{abstract}
A key challenge in
scaling up
Reinforcement Learning
is generalizing learned behaviour.
Without the ability to carry forward acquired knowledge
an agent is doomed to learn each task from scratch.
In this paper we develop a new formalism for
transfer by virtue of state abstraction.
Based on task-independent,
compact observations (outcomes)
of the environment,
we introduce
\emph{Outcome-Predictive State Representations}
(OPSRs),
agent-centered
and task-independent
abstractions that are made up of predictions of outcomes.
We show formally and empirically
that they have the potential
for optimal but limited transfer,
then overcome this trade-off
by introducing OPSR-based \emph{skills},
i.e. abstract actions (based on options)
that can be reused between tasks
as a result of state abstraction.
In a series of empirical studies,
we learn OPSR-based skills from demonstrations
and show how they speed up learning considerably
in entirely new and unseen tasks
without any pre-processing.
We believe that the framework
introduced in this work
is a promising step towards
transfer in RL in general,
and towards
transfer through
combining state and action abstraction
specifically.

\end{abstract}

\section{Introduction}
\label{Introduction}
One core feature humans seem to share
is the tendency to think of a task
in terms of subtasks;
the act of navigating through a door
may require taking out a key,
putting it to a lock and
turning a handle
before moving through the doorway.
However, at no point while reading this would you think
of the minute motor actions involved.
There appear to be levels of abstraction,
higher-level more abstract actions such as
\emph{unlocking a door},
described in terms of
\emph{taking out a key},
\emph{putting it in the lock} etc,
while intermediate level actions such as
\emph{taking out a key}
might decompose into
lower-level more primitive actions such as muscle contractions.
It is this phenomenon that gave rise to
\emph{hierarchical reinforcement learning} (HRL),
a popular area of present-day research
concerned with the decomposition of tasks
and learning of subtasks.

Not only are humans adept at decomposing tasks,
there is also the notion that we can reuse abstract behavior
across a variety of settings
where subtasks can be said to reoccur.
The act of unlocking a door is, intuitively,
mostly independent of which door is being unlocked
or the shape of the key,
and the differences can be thought of as variations on a common theme.
In this paper, we refer to abstract actions in this context as \emph{skills},
and suggest that this reusability of behavior,
i.e. this \emph{transfer} of skills,
is the core advantage of action abstraction.
This notion it not new
and others have discussed action abstraction
as a basis for transfer in RL
\citep{Konidaris2012a, Barto2013},
yet there is one obstacle standing in the way of \emph{skill transfer:}
a behavior learnt in one particular task's state space
can not readily be used in a different one.
There is a fundamental need for \emph{state abstraction}.
The skill of opening a door is independent
of the color of the door or the shape of the key;
it acts on the abstract concepts \emph{key} and \emph{door}
and ignores specifics.
In a recent opinion piece,
\citeauthor{Konidaris2019}
advocates for
joint state and action abstraction
\citep{Konidaris2019}.
In this paper
we show
that state and action abstraction
naturally go hand in hand.

\paragraph{}
This paper presents a new formalism
for thinking about how to transfer
learned behavior from one MDP to another
(Section~\ref{sec:general}).
This formalizes the concept of
\emph{state abstraction}
and
both
describes how to transfer a policy
between MDPs
by means of the \emph{transfer cover},
i.e. the overlap between state abstractions,
and the impact on performance.
Based on
\emph{outcomes}
(Section~\ref{sec:outcomes}),
environmental features
that are reward-relevant
and task-agnostic,
we introduce
\emph{outcome equivalent state abstractions}
which can be constructed from
\emph{predictions of outcomes}
(Section~\ref{sec:outcome_state_abstraction}).
We define
\emph{plannable} MDPs as MDPs where
reactive, state-dependent policies,
can not outperform
non-reactive \emph{plans},
and prove that outcome equivalent state abstractions
allow optimal transfer between plannable MDPs
(Section~\ref{sec:transfer_optimality}).

To overcome the limiting factor of transfer cover,
we introduce the more practical
\emph{Outcome-Predictive State Representations}
(OPSR)
and integrate these ideas with action abstraction
in the form of \emph{skills}
(Section~\ref{sec:skills_that_transfer}).
We show how these skills can be learned in
a fully automated way
such that combined,
they achieve full transfer between tasks
by virtue
of the OPSR state abstraction.
In a collection of empirical studies,
we both learn these skills
from demonstrations
and show that they speed up learning considerably
in entirely new and unseen tasks
with zero pre-processing
(Section~\ref{sec:experimental_evaluation}).
Section~\ref{sec:related_work}
describes the work on state abstractions and
HRL that form the rich backdrop for this work,
even detailing a cognitive motivation
for OPSRs.
We conclude that the framework introduced in this work
constitutes a promising step
towards a general approach for transfer
(Section~\ref{sec:conclusion}).

\section{General Framework}%
\label{sec:general}

This section
builds the theoretical framework
of transfer based on state abstraction
that we use as a foundation
to build and assess a new
type of state abstraction
in Section~\ref{sec:predictive_representations}.
We develop formalisms for comparing state abstractions,
detail how transfer can be conducted
in both the single-
and multi-task setting,
and describe in each setting
how state abstractions
impact performance following transfer.

\subsection{Markov Decision Processes and Problems}%
\label{sub:markov_decision_processes}
Reinforcement Learning is concerned with
sequential decision-making
in an environment
which is perceived through states and rewards.
It is conventional
to describe a Reinforcement Learning task
as a Markov Decision Process.
%
\begin{definition}[Markov Decision Process]
A Markov Decision Process (MDP)
is defined as a 4-tuple
$\mdp \defas \langle \sset, \aset, p, r \rangle$,
where $\sset$ is the discrete set of states
and $\aset$ the set of actions.
The MDP dynamics are described by the two functions
$p$ and $r$:
\begin{itemize}
  \item
    We write
    the probability to transition between states $s, s' \in \sset$
    following action $a \in \aset$
    as
    $$p(s'|s, a) \defas \Pr(S_{t+1}=s'| S_{t}=s, A_{t}=a).$$
    This function is called the \emph{transition dynamics} of the MDP.
  \item
    We write the expected reward
    for a transition $(s,a,s') \in \sset \times \aset \times \sset$
    as
    $$r(s, a, s') \defas \expected \left[ R_{t+1} \mid S_{t}=s, A_{t}=a, S_{t+1}=s' \right].$$
    We will also rely on
    the notion of expected reward for an action
    in a given state:
    $$r(s, a) \defas \expected \left[ R_{t+1} \mid S_{t}=s, A_{t}=a \right].$$
    Both functions can be referred to as the \emph{reward dynamics} of the MDP
    and which one is actually in use
    can be inferred from the context.
\end{itemize}
\end{definition}
The goal in Reinforcement Learning is then
to find an optimal behaviour in an MDP.
A behavior is represented by a policy
$\pi(a|s) = \Pr(A_{t}=a | S_{t}=s)$.
Optimality is defined in terms of
\emph{expected long-term accumulated reward}.
To properly define this,
we need to instantiate a
Markov Decision Problem.
A Markov Decision Problem
is simply a Markov Decision Process
extended with a
\emph{discount factor}
$\gamma$,
so that we can define
the long-term accumulated reward
as a discounted sum of rewards.
\begin{equation}
  G_{t} \defas \sum_{k=1}^{\infty} \gamma^{k-1} R_{t+k}.
\end{equation}
We can now define the
\emph{value} of a policy for a given state
as the expected accumulated reward
from that state.
\begin{equation}
  v_\pi(s) \defas \expected \left[ G_t \mid S_t = s \right]
  \label{eq:policy_value}
\end{equation}
Where necessary
to avoid ambiguity,
we include the MDP $\mdp$
in the value function and instead write
$v_\pi(s; \mdp)$.
The task in Reinforcement Learning now consists of
finding a policy $\pi$
that maximizes the quantity
\eqref{eq:policy_value}
for some state $s$,
which is usually a fixed start state.

\paragraph{}
One last note on notation.
We sometimes rely on partially applied functions,
i.e.
functions where some of the parameters are given and some are left out.
For example,
$\pi(\cdot | s)$
is the policy for state $s$.
This allows us to say
for example
that a policy
has the same distribution over actions
for two different states $s$ and $s'$,
denoted $\pi(\cdot | s) = \pi(\cdot | s')$.
We often rely on comparing functions
in this fashion.
For another example,
$v_{\pi}(\cdot; \mdp) \geq v_{\pi'}(\cdot; \mdp)$
denotes that the value function
of $\pi$
is greater or equal than that of $\pi'$
for all states in its domain,
that being some state set $\sset$.
If the MDP $\mdp$ is clear from context,
this can again be simplified to
$v_\pi \geq v_{\pi'}$.
Such comparisons are only defined when
the functions are defined over the same domain.

\subsection{Representation and Abstraction}%
\label{sub:representation_and_abstraction}
\subsubsection{State representation}
Representations
-- and in particular, state representations --
are inextricably linked with
problem descriptions in contemporary literature.
For our purposes,
it is important to define explicitly what a representation is
in order to critically compare different representation approaches
and to develop our own.
A representation
\emph{stands in}
for the real thing.
Put another way,
it is a proxy for the raw, real-world thing that it represents.
It follows that a \emph{state} representation
is a description for states,
standing in for the underlying
environment's configurations.
Whether manually defined
or somehow learned,
state representations
help enumerate large or continuous state sets.
Not only that,
state representations
can also provide a common language
to describe tasks that we might think of as related.
For example,
we can represent
the states of all Gridworlds
as coordinate tuples
(see Example~\ref{ex:representation}).

\begin{definition}[state representation]
  A \emph{state representation} $\rep$
  is a function
  mapping from a set of
  \emph{raw states}
  $\rawsset$
  to a state representation set $\repsset$
  \[\rep: \rawsset \mapsto {\repsset}\]
  We term
  ${\repsset}$
  the \emph{state set} associated with $\rep$.
  Given a raw state
  $x \in \rawsset$,
  we term $s \defas \rep(x)$
  the representation of $x$,
  overloading the meaning of the term \emph{representation}
  to mean both the function and a raw state's image.
  Where the relation with the representation $\rep$ is clear,
  we drop the subscript from
  $\repsset$
  and simply write $\sset$.
\end{definition}

Since we cannot access
the raw states $\rawsset$
of the true environment being modelled,
state representations are always used
when defining a MDP,
albeit usually implicitly.
It is also possible
to define further state representations,
possibly based on the existing ones.

It has become common to
make use of
the notion of structure of a state set,
for example
by employing a similar policy
in states that \emph{look similar}.
To facilitate this,
\emph{state spaces}
can be used in place of
state sets.
A state space is simply
a state set with added structure
(such as a metric).
The type of space
is dependent on the problem at hand
and is often assumed implicitly.
Metric spaces are common
as they employ a distance metric
which can be used to measure the distance between states.
For example,
if raw Gridworld states
were reprsented by coordinate tuples,
neighboring states would be closer
than non-neighboring states
under the Euclidean metric on the coordinate space.

\begin{definition}[state space]
  Given a state representation
  $\rep: \rawsset \mapsto {\sset}$,
  if $\sset$
  is a \emph{space},
  i.e. a set with some added structure,
  then
  we term it the \emph{state space} associated with $\rep$
  and denote it with a non-caligraphic
  $S$.
\end{definition}

\noindent
A state representation
can be as simple
as a scheme for enumerating states
such that each has a unique identifier
which one can use to point at
(to represent)
that state.
We refer to
a rudimentary representation
that uniquely identifies each state
as a \emph{ground representation}.
\begin{definition}[ground representation]
  A state representation
  $\rep_0: \rawsset \mapsto {\sset_{\rho_0}}$
  is a
  \emph{ground representation}
  if and only if
  $\rep_0$ is injective,
  that is,
  raw states that are distinct
  have distinct representations.
\end{definition}

\begin{example}
  \label{ex:representation}
  Take the simple Gridworld in
  \figref{fig:example_state_rep}.
  There are 5 states
  corresponding to the location of the agent.
  In order to refer to these states
  we can employ a representation.
  A simple enumeration is feasible:
  we can number the states
  in ascending order
  according to nearness to the goal location,
  with $s_1$ denoting the state depicted in
  \figref{fig:example_state_rep}
  (with the agent located in the upper left cell)
  and $s_5$ denoting the state where the agent
  occupies the goal location.
  This is a valid ground representation,
  as is every permutation
  --- we could have numbered them the other way around.

  \begin{figure}[H]
    \centering
    \includegraphics[width=0.25\linewidth]{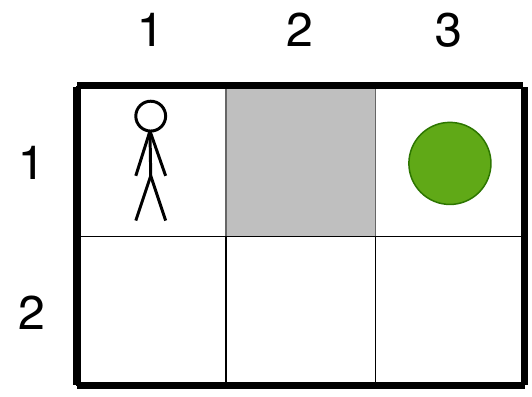}
    \caption{A simple Gridworld labelled to construct a representation.
    The shaded area is inaccessible to the agent
    and the goal cell with positive reward is indicated by the green circle.}
    \label{fig:example_state_rep}
  \end{figure}

  We have already established that multiple
  ground representations are possible for our simple Gridworld example,
  yet we could have followed another representation scheme altogether.
  Since the agent's locations are conveniently arranged in a grid,
  we could label each state with
  a $(x, y)$ coordinate pair.
  Employing the numbering
  in \figref{fig:example_state_rep},
  we can identify the following 5 states:
  $(1,1), (1,2), (2,2), (3,2), (3,1)$
  --
  another ground representation.
  We can summarise its signature as
  $\rep_0: \sset \mapsto \{1,2,3\} \times \{1, 2\}$.
  Some caution should be exercised with this description though:
  the codomain of $\rep_0$ does not span the entire image.
  That is, the state $(2,1)$ does not actually exist.
  Likewise we could have written
  $\rep_0: \sset \mapsto \naturalnumbers^2$,
  with the same caution.

  Given that states are laid out in a grid,
  it is tempting to use a metric
  that describes similarity or `closeness' of grid cells.
  Using the Euclidean metric for example,
  $(1,1)$ is closer to
  {$(1,3)$}
  than it is to
  {$(2,3)$}.
  Similarity between states under some metric
  is sometimes used as a basis to transfer behavior.
  It is far from obvious however
  that states that are close together
  in this example would benefit from similar behavior.
  Note, for instance, that $(2,3)$ is reachable from $(1,1)$ in fewer (non-diagonal) steps than is $(1,3)$ in spite of its greater Euclidean distance.
  \comment{(Now that I added this paragraph, I need to add a citation of an example where similarity between states is used for transfer)}.
\end{example}



\noindent
Many representations with distinct images
are ground representations,
yet they all fulfill the same purpose of uniquely labelling states.
For instance,
we could have easily devised a different numbering scheme
in the above example.
\emph{Isomorphism}
neatly captures this notion of similarity
and allows us to focus only on meaningful differences between representations.
\begin{definition}[isomorphic ground state representations]
  Given two ground state representations
  $\rep^\alpha: \rawsset \mapsto \sset_\alpha$
  and
  $\rep^\beta: \rawsset \mapsto \sset_\beta$,
  if there exists a bijection
  between
  $\sset_\alpha$
  and
  $\sset_\beta$
  (i.e. an isomorphism),
  then we say that
  $\rep_\alpha$
  and
  $\rep_\beta$
  are
  \emph{isomorphic}
  or
  \emph{equal up to an isomorphism}:
  \begin{equation*}
    \rep^\alpha \isomorph \rep^\beta
  \end{equation*}
\end{definition}

\noindent
It follows that every bijective ground representation
is isomorphic with the
identity state representation.

\subsubsection{State Abstraction}
Abstractions themselves are representations of a sort.
Yet
abstractions are more than that:
they {aim to} focus on just the
\emph{relevant information}
and ignore the irrelevant.
In the context of transfer,
good abstractions should focus
on relevant commonalities between tasks.
We see abstractions throughout this paper
as a molding of distinctions,
masking those that are less relevant.
To remove all that distinguishes two things
is to abstract away from them
and equate them as a result.

A state abstraction is a special form of state representation
which is potentially non-injective:
it supports
the mapping of distinct raw states onto the same abstract state.
%

\begin{definition}[state abstraction]
  Given an MDP
  $\mdp=\langle \sset, \aset, p, r \rangle$,
  a
  \emph{state abstraction}
  $\phi:~\sset~\mapsto~\asset$
  maps from a state representation set $\sset$
  to an abstract state set $\asset$.
  We term
  $\phi(s) \in \asset$
  the \emph{abstract state}
  corresponding to the
  \emph{ground state} $s \in \sset$.
  The inverse abstraction
  $\phi^{-1}:\asset \mapsto \powerset{(\sset)}$
  maps an abstract state
  $\phi(s) \in \asset$
  onto the set of corresponding ground states
  $\set{s' \in \sset | \phi(s') = \phi(s)}$
  and where
  $\powerset(X)$ denotes the powerset of $X$.
\end{definition}

Since there are many possible state abstractions for any given state set,
it is useful to again rely on the notion of isomorphism
between state abstractions to describe state abstractions
that are the same in every relevant aspect.
\begin{definition}[isomorphic state abstractions]
  Let
  $\phia: \sset \mapsto \Phia$
  and
  $\phib: \sset \mapsto \Phib$
  be two state abstractions.
  We say that
  $\phia$
  and
  $\phib$
  are
  \emph{isomorphic}
  or
  \emph{equal up to an isomorphism},
  denoted
  $
    \phia \isomorph \phib
  $,
  if all sets of ground states
  that correspond to the abstract states
  of $\phia$
  are the same for $\phib$.
  That is,
  if
  $
  \Set{ \phiai(\astate_\alpha) | \astate_\alpha \in \Phia }
   =
  \Set{ \phibi(\astate_\beta) | \astate_\beta \in \Phib }
  $.
\end{definition}

\subsubsection{Loss of the Markov Property.}
It is tempting to see a state abstraction
as inducing a new decision process
based on the abstract state set.
This can however be far from straightforward.
It is possible
that this \emph{abstract decision process}
no longer maintains the Markov property.

\begin{example}
  \begin{figure}[ht]
    \centering
    \begin{subfigure}[t]{.49\textwidth}
      \centering
      \includegraphics[scale=.6]{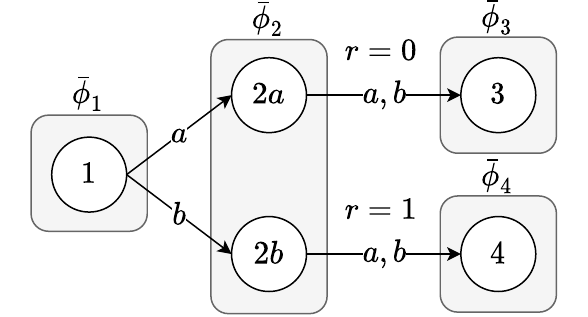}
      \caption{}
      \label{fig:nomdp_ground}
    \end{subfigure}
    \begin{subfigure}[t]{.49\textwidth}
      \centering
      \includegraphics[scale=.6]{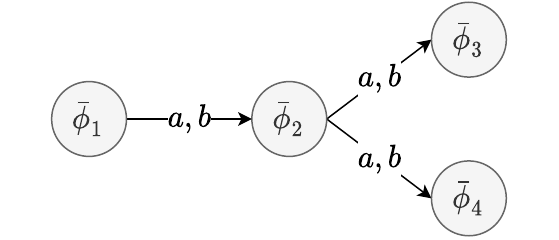}
      \caption{}
      \label{fig:nomdp_abstract}
    \end{subfigure}
    \caption{On the left hand side is depicted
    an MDP
    with 5 states and the action set $\{a,b\}$.
    Arcs between states denote
    deterministic transitions following the labelled action,
    receiving the annotated reward $r$.
    An abstraction is shown as grouping together states.
    The resulting abstract states are depicted on the right.}
    \label{fig:abstraction_nomdp}
  \end{figure}
  \figref{fig:nomdp_ground}
  depicts an MDP to which a state abstraction
  is applied.
  This simple state abstraction $\phi$
  groups states
  $2a$ and $2b$ together:
  $\phi(2a) = \phi(2b) = \astate_2$.
  The other states remain ungrouped under the abstraction,
  i.e.,
  each is mapped to a different abstract state.
  \figref{fig:nomdp_abstract} shows the resulting abstract
  states,
  along with transitions labelled with actions
  in an attempt at
  fidelity to the original MDP.
  There is a problem describing the
  resulting \emph{abstract}
  MDP however:
  it is impossible to express
  the transition probabilities from $\astate_2$
  unambiguously
  in such a way that it accurately reflects the transition
  probabilities of the ground MDP.
  The result of an action
  from $\astate_2$
  depends on the action
  that was taken before (at $\astate_1$).
  This is because $\astate_2$
  is the abstraction of the ground states
  $2a$ and $2b$.
  It is
  no longer possible to distinguish between
  $2a$ and $2b$,
  yet they lead into different states:
  $3$ and $4$ respectively,
  with abstract states
  $\astate_3$ and $\astate_4$ respectively.
  As a result,
  it is not clear whether a certain action
  from $\astate_2$ will lead to
  $\astate_3$ or $\astate_4$.
  Not without knowing what preceded.
  In short, the Markov property is violated.

  Consider for example
  $ p(\astate_3 \mid \astate_2, a) $
  which has two candidates.
  The resulting state of action $a$
  in $\astate_2$
  is either $\astate_3$
  or $\astate_4$ (in which case it is not $\astate_3$).
  The candidates depend on different histories over actions:
  \begin{align*}
      &\Pr(\astaterv_{t+1} = \astate_3 \mid \astaterv_t=\astate_2, A_t=a, \astaterv_{t-1}=\astate_1, A_{t-1}=a) = 1\\
    \neq\;
      &\Pr(\astaterv_{t+1} = \astate_3 \mid \astaterv_t=\astate_2, A_t=a, \astaterv_{t-1}=\astate_1, A_{t-1}=b) = 0
  \end{align*}
  That is,
  in order to know
  the outcome of action $a$
  from
  $\astate_2$,
  it is necessary to know
  which action
  was taken from state $\astate_1$.
  Similarly,
  it has also become impossible to describe the expected reward function
    $r(\astate_2, a)$:
  \begin{align*}
    &\expected\left[R_{t+1} \mid \astaterv_t=\astate_2, A_t=a, \astaterv_{t-1}=\astate_1, A_{t-1}=a\right] = 0 \\
    \neq\;
    &\expected\left[R_{t+1} \mid \astaterv_t=\astate_2, A_t=a, \astaterv_{t-1}=\astate_1, A_{t-1}=b\right] = 1
  \end{align*}
  Both transition and reward dynamics
  have become dependent on the history,
  making it impossible to construct
  an abstract Markov Decision Process
  from this state abstraction
  that is faithful to the original MDP.
\end{example}

\paragraph{}
As the example shows,
a state abstraction does not necessarily lead to an obvious choice
for an abstract MDP.
Instead, it
results in a range of possible abstract MDPs,
each a candidate that we can learn a policy for.
We will delve into ways of assessing different state abstractions
and abstract MDPs after we define some basic machinery.

\begin{definition}[abstract MDP]
  Given an MDP
  $\mdp = \langle \sset, \aset, p, r \rangle$
  and state abstraction
  $\phi: \sset \mapsto \acodomain$,
  we say that
  $\mdp_\phi = \langle \Phi, \aset, p_\phi, r_\phi \rangle$
  is an
  \emph{abstract MDP}
  for $\mdp$ and $\phi$
  if and only if
  there exists a function
  $w_\phi: \sset \mapsto [0, 1]$
  such that for all
  $\astate, \astate' \in \asset$,
  it holds that
  $\sum_{s\in\astate}w_{\phi}(s)=1$,
  and
\begin{align}
  p_\phi(\astate' | \astate, a) &=
  \sum_{s \in \phii(\astate)}
  \sum_{s' \in \phii(\astate')}
    w_{\phi}(s) p(s' | s, a)
  \label{eq:reformulated_transition}
  \\
  r_\phi(\astate, a) &=
  \sum_{s \in \phii(\astate)}
    w_{\phi}(s) r(s, a)
  \label{eq:reformulated_reward}%
\end{align}
By reuse of notation we may write the set of abstract MDPs
for $\mdp$ and $\phi$
as $\phi(\mdp)$, such that
$\mdp_\phi \in \phi(\mdp)$
and
$\phi^{-1}(\mdp_\phi) = \mdp$.
\end{definition}

\paragraph{}
As a result of the freedom permitted by $w_{\phi}(s)$,
many abstract MDPs are possible
for a particular MDP and state abstraction.
The abstract transition and reward dynamics
are constructed as weighted averages over the
original transition and reward dynamics.
The weighting $w_{\phi}(s)$
is the mechanism through which a new valid MDP
is created,
albeit only an approximation of the original.
The cost of state abstraction
is a loss in fidelity to the original MDP.
In this paper we will discuss particular classes
of state abstractions
that result in abstract MDPs
where the freedom permitted by $w_\phi$
does not play a role and
that are particularly useful for
transfer.

\subsubsection{Comparing Abstractions}
State abstractions operating on the same state set
can be compared to one another.
The first state abstraction property that allows comparison
is \emph{coarseness}
(or conversely \emph{fineness}),
where one state abstraction is finer than another
if it makes more distinctions between states.
The following definition of \emph{coarseness}
is adapted from
the work of
\citet{Li2006}
to take into account isomorphism.

\begin{definition}[coarseness]
  Let
  $\phi_1: \sset \mapsto \Phi$
  and
  $\phi_2: \sset \mapsto \Phi'$
  be two state abstractions.
  We say that
  $\phi_1$ is \emph{finer} than
  $\phi_2$,
  or equivalently that
  $\phi_2$ is \emph{coarser} than
  $\phi_1$,
  written $\phi_1 \finereq \phi2$,
  if
  all states that are equivalent under $\phi_1$
  are also equivalent under $\phi_2$.
  Formally:
  \[\phi_1 \finereq \phi_2 \quad \text{iff} \quad \forall s, s' \in \sset: \phi_1(s) = \phi_1(s') \Rightarrow \phi_2(s) = \phi_2(s') \]


  We say \emph{strictly finer} ($\finer$)
  or
  \emph{strictly coarser} ($\coarser$)
  if in addition
  $\phi_1$ and $\phi_2$
  are not isomorphic.
  We also refer to the
  \emph{coarseness}
  of an abstraction
  as its
  \emph{expressivity};
  an abstraction that is
  \emph{finer}
  is
  \emph{more expressive}.
\end{definition}
The
\emph{coarser} relation
imposes a partial ordering
over a set of state abstractions
$\abstractionsset_\mdp$
for an MDP $\mdp$.
If there is no state abstraction
$\phi' \in \abstractionsset_\mdp$
that is \emph{finer} (coarser)
than $\phi \in \abstractionsset_\mdp$,
we say that $\phi$ is a
\emph{maximally fine} (maximally coarse) state abstraction in $\abstractionsset_\mdp$.
If a state abstraction
$\phi \in \abstractionsset_\mdp$
is \emph{finer} (coarser)
than all other state abstractions
$\abstractionsset_\mdp$,
we say that
$\phi$
is
\emph{finest} (coarsest).
The finest abstraction
for $\abstractionsset_\mdp$
is isomorphic with the identity state abstraction.
The coarsest abstraction
for $\abstractionsset_\mdp$,
on the other hand,
maps every state
onto a single abstract state,
equating all states and
making no distinctions.
It follows
that there is only one isomorphically distinct
finest and coarsest abstraction
for any MDP.

\subsection{Transfer Through State Abstraction}%
\label{sub:transfer}
The power of state abstractions
becomes apparent when multiple ground states
are mapped to the same abstract state.
When a behavior is learned for a single abstract state,
it can apply automatically for
all ground states for this abstract state.
It now becomes useful
to see a state abstraction
as inducing an equivalence relation,
where two states are equivalent
when they
share an abstraction.
An equivalence relation
$\equi \subseteq \sset \times \sset$
is a binary relation
that is reflexive, symmetric and transitive.
If
a state pair
$(s,s')$
is in $\equi$,
we say that
$s$ and $s'$ are equivalent.
We employ the shorthand
$\equi(s,s')$
to denote
$(s,s') \in \equi$.
The view of
state abstractions as
inducing equivalence relations
is due to
\citet{Givan2003},
the language and what follows is our own.

\begin{definition}[abstraction-induced equivalence relation]
  A state abstraction
  $\phi$
  induces an equivalence relation
  $\equi_\phi$
  over a state set.
  If $\equi_\phi(s,s')$
  we write $s \aequiv s'$.
  For states $s, s' \in \sset$
  this relation is defined as
  \begin{equation}
    s \aequiv s'
    \iff
    \phi(s) = \phi(s')
    \label{eq:abstraction_equivalence_relation}
  \end{equation}
  If $s \aequiv s'$
  we say that
  \emph{$s$ is equivalent to $s'$ under the abstraction $\phi$}.
\end{definition}

An equivalence relation in turn
results in a partitioning of the state set.
States that
share the same abstraction
belong to the same
\emph{equivalence class}.
For an equivalence relation induced by
a state abstraction $\phi$,
we can denote the equivalence class
of a ground state $s$
under $\phi$
as
$[s]_\phi \defas \Set{ s' \in \sset | s \aequiv s' }$.
For convenience of notation
we overload $\phi(s)$,
so far used to denote an abstract state,
to also denote the equivalence class of $s$ under $\phi$,
i.e.
$\phi(s) \defas [s]_\phi$.

An equivalence relation
$\equi$ on $\sset$
induces a \emph{partition}
$P = \Set{ [s_1], \dots, [s_m] }$.
For an abstraction-induced equivalence relation
we can write the partition as
$\Phi = \Set{ \astate | \astate \defas \phi(s) \; \forall s \in \sset }$.
In addition,
we have
$\bigcup_{\astate} = \sset$
and
$\bigcap_{\astate} = \emptyset$.
It follows that
if there is at least one
equivalence class with more than one member,
i.e. if two states share an abstraction,
then
the partition will be smaller than the original state set.

\subsubsection{Single-Task Transfer.}
The view of a state abstraction
as inducing an equivalence relation
between ground states
is useful
because it gives us a way to transfer behavior
learned for an abstract state
to all ground states
in the corresponding equivalence class.
Where before
behavior had to be learned for each ground state separately,
now we only need to learn a policy
for the abstract states.
Since the abstract MDP has fewer states to contend with,
it can be easier to learn than the ground MDP.

\paragraph{}
If we have an MDP
$\mdp$
and a state abstraction $\phi$
which induces the abstract MDP $\mdp_\phi$,
then we can learn a policy
$\pi_\phi : \sset_\phi \mapsto \probset(\aset)$
for this abstract MDP.
To transfer
this \emph{abstract policy}
to the ground task,
we make use of
a process called
\emph{policy derivation}.

\begin{definition}[derived policy]
  Let
  $\mdp =\langle \sset, \aset, p, r \rangle$
  be an MDP,
  let
  $\phi: \sset \mapsto \Phi$
  a state abstraction,
  and
  let
  $\mdp_\phi=\langle \Phi, \aset, p_\phi, r_\phi \rangle$
  be
  an abstract MDP constructed
  from $\mdp$ and $\phi$.
  Given an abstract policy
  $\pi_\phi: \Phi \mapsto \probset(\aset)$,
  we define
  the \emph{policy derived from} $\pi_\phi$ as
  $\derived: \sset \mapsto \probset(\aset)$ s.t.
  $\forall s \in \sset, a \in \aset$:
  \begin{equation*}
    \derived (a \mid s) =
    \pi_\phi(a \mid \phi(s))
  \end{equation*}
  Note that we use the symbol $\delta$
  to denote a derived policy
  rather than the customary $\pi$
  to suggest that it is derived
  from an abstract policy.

  Let
  $\Pi_\phi$
  be the set of all abstract policies
  $\pi_\phi: \Phi \mapsto \probset(\aset)$.
  We write the
  \emph{set of derived policies}
  for state abstraction $\phi$
  and MDP $\mdp$
  as
  \begin{equation*}
    \Pi_\delta(\phi, \mdp)
    \defas
    \Set{
      \derived
      |
      \derived(s, a) = \pi_\phi(\phi(s), a)
      \quad \forall \pi_\phi \in \Pi_\phi
    }
  \end{equation*}
  If $\mdp$ is clear from the context,
  we also refer to $\Pi_\delta(\phi, \mdp)$
	as
  \emph{the derived policies of $\phi$}.
\end{definition}

\paragraph{}
Policy derivation gives us a simple construction
to translate an abstract policy to a policy for a ground MDP.
It is interesting to investigate
what manner of ground policies can be derived
given a particular state abstraction,
even before considering any abstract MDPs.
As we shall see shortly,
state abstraction is restrictive.
Only the most trivial of state abstractions
allow access to the whole set
of ground policies by means of derivation.
This means that different state abstractions
will result in different sets of derived policies.
Some sets of derived policies
might not even include the optimal policy
because of restriction by the state abstraction.
We start with one straightforward result
that describes how state abstraction
limits possible derived policies.
After that we will consider
whether this limitation is harmful
to the end goal:
attaining good policies.

\paragraph{}
When a state abstraction
assigns two ground states to the same abstract state,
any abstract policy can no longer distinguish between those two states
as it only sees the combined abstract state.
As a result, any derived policy
is also unable to distinguish between these two states,
effectively decreasing \emph{controllability}
or \emph{control granularity}.
Increased control is only possible
by distinguishing between states.

\begin{proposition}
  \label{thm:abstraction_control}
    Let
  $\mdp =\langle \sset, \aset, p, r \rangle$
  be an MDP
  and let
  $\phi: \sset \mapsto \Phi$
  be a state abstraction
  that maps two states
  $s, t \in \sset$
  to the same equivalence class $\astate$.
  Then all derived policies for $\phi$
  must have the same distributions over actions
  for both $s$ and $t$.
  Formally,
  $
      \forall s, t \in \sset:
  $
  \begin{equation}
      \phi(s) = \phi(t)
      \;\Leftrightarrow\;
      \forall \delta \in \Pi_\delta(\phi, \mdp):
      \delta(\cdot\mid s) = \delta(\cdot\mid t)
      \label{eq:thm_state_abstraction_control}
  \end{equation}

\end{proposition}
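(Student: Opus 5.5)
The plan is to prove the two directions of \eqref{eq:thm_state_abstraction_control} separately, using only the definition of derived policies and of $\Pi_\delta(\phi,\mdp)$.

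For the forward direction, assume $\phi(s)=\phi(t)$ and take an arbitrary $\delta\in\Pi_\delta(\phi,\mdp)$. By definition there is an abstract policy $\pi_\phi\in\Pi_\phi$ with $\delta(a\mid s)=\pi_\phi(a\mid\phi(s))$ for every $a\in\aset$. Substituting $\phi(s)=\phi(t)$ gives $\delta(a\mid s)=\pi_\phi(a\mid\phi(t))=\delta(a\mid t)$ for every $a$. Hence $\delta(\cdot\mid s)=\delta(\cdot\mid t)$. This direction is immediate.

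For the reverse direction I argue by contraposition. Suppose $\phi(s)\neq\phi(t)$; I will exhibit one derived policy that distinguishes $s$ from $t$. This needs $|\aset|\geq 2$. If there is only one action, every policy is trivially identical on all states, so the ``$\Leftarrow$'' direction fails whenever $\phi$ is nontrivial. I will therefore state this assumption explicitly, noting that it is implicit in the statement.

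Pick distinct actions $a_1,a_2\in\aset$ and define the abstract policy $\pi_\phi(\cdot\mid\phi(s))$ as the point mass on $a_1$, and $\pi_\phi(\cdot\mid\astate)$ as the point mass on $a_2$ for every other $\astate\in\Phi$. This is a well-defined member of $\Pi_\phi$, since abstract policies are arbitrary maps $\Phi\to\probset(\aset)$. Its derived policy $\delta$ lies in $\Pi_\delta(\phi,\mdp)$ and satisfies $\delta(a_1\mid s)=1$ and $\delta(a_1\mid t)=\pi_\phi(a_1\mid\phi(t))=0$. So $\delta(\cdot\mid s)\neq\delta(\cdot\mid t)$, which negates the right-hand side.

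The only real obstacle is this degenerate single-action case. The rest consists of unpacking definitions and checking that the constructed $\pi_\phi$ is a legitimate element of $\Pi_\phi$.
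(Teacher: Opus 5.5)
Your proof is correct and follows the same route as the paper. The forward direction substitutes $\phi(s)=\phi(t)$ into the definition of a derived policy, and the reverse direction exhibits an abstract policy assigning different distributions to $\phi(s)$ and $\phi(t)$; the paper phrases both directions as contradictions, but the content is the same. Your explicit point-mass construction and the caveat $|\aset|\geq 2$ are more careful than the paper. Its step ``consider an abstract policy which assigns different distributions over actions'' silently relies on the same assumption, and the reverse implication is indeed false for a single-action MDP with non-constant $\phi$.
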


\seeproof{abstraction_control}

\paragraph{}
Since different state abstractions
result in different restrictions on possible derived policies,
the level of control or
\emph{control granularity}
is a useful property to characterise
and compare between state abstractions.

\begin{definition}[control granularity]
  Let
  $\mdp = \langle \sset, \aset, p, r \rangle$
  be an MDP
  and let
  $\phi: \sset \mapsto \Phi$
  and
  $\phi': \sset \mapsto \Phi'$
  be two state abstractions.
  We say
  that $\phi$
  is
  \emph{more control granular}
  than $\phi'$
  if all derived policies contained in $\Pi_\delta(\phi', \mdp)$
  are also contained in
  $\Pi_\delta(\phi', \mdp)$.
  Formally:
\begin{equation*}
\Pi_\delta(\phi', \mdp) \subseteq \Pi_\delta(\phi, \mdp)
\;\Longleftrightarrow\;
\forall \delta' \in \Pi_\delta(\phi', \mdp):
\exists \delta \in \Pi_\delta(\phi, \mdp):
\delta = \delta'
\end{equation*}
If in addition
we have that
$\Pi_\delta(\phi, \mdp) \neq \Pi_\delta(\phi', \mdp)$,
we say that
$\phi$
is
\emph{strictly more control granular}
than $\phi'$.
\end{definition}

State abstraction control granularity
sounds a lot like state abstraction fineness or coarseness.
It turns out both properties
are different sides of the same coin.
Coarser state abstractions give up more control;
finer state abstractions have more control, \emph{are more control granular}.
The reason we introduced the secondary notion of control granularity
is that it is more amenable to the context of control
that we are faced with.

\begin{proposition}[control granularity--coarseness trade-off]
  \label{thm:control_coarseness_tradeoff}
    Let
  $\mdp =\langle \sset, \aset, p, r \rangle$
  be an MDP
  and let
  $\phi: \sset \mapsto \Phi$
  and
  $\phi': \sset \mapsto \Phi'$
  be two state abstractions.
  Then
  $\phi$
  is
  \emph{finer}
  than $\phi$
  if and only if
  $\phi$
  is more control granular than $\phi'$.
  Conversely,
  if there exists a derived policy for $\phi$
  that cannot be derived for $\phi'$
  then $\phi$
  must be finer,
  meaning
  that there must be two states
  $s,t \in \sset$
  that are mapped to different abstract states
  by $\phi$
  but to a single abstract state
  by $\phi'$.
  \comment{}
  Formally:
  \begin{equation}
  \begin{gathered}
\phi \finereq \phi'
\;\Longleftrightarrow\;
\Pi_\delta(\phi, \mdp)
\supseteq
\Pi_\delta(\phi', \mdp)
  \end{gathered}
  \label{eq:thm_derivability}
  \end{equation}

\end{proposition}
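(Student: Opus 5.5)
We prove the two directions of \eqref{eq:thm_derivability} separately. Throughout we use the following description of derived policies, which follows directly from the definition: a ground policy $\delta$ lies in $\Pi_\delta(\phi,\mdp)$ if and only if it is constant on the equivalence classes of $\phi$. That is, $\delta(\cdot\mid s)=\delta(\cdot\mid t)$ whenever $\phi(s)=\phi(t)$. One direction is immediate from the definition: if $\delta=\delta_{\pi_\phi}$, then $\delta(\cdot\mid s)=\pi_\phi(\cdot\mid\phi(s))$. Conversely, given such a class-constant $\delta$, we set $\pi_\phi(\cdot\mid\astate)\defas\delta(\cdot\mid s)$ for any representative $s\in\phii(\astate)$. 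This is well defined precisely because $\delta$ is constant on the class, and by construction $\delta_{\pi_\phi}=\delta$.

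\emph{($\Rightarrow$)} Assume $\phi\finereq\phi'$ and let $\delta'\in\Pi_\delta(\phi',\mdp)$. Take $s,t$ with $\phi(s)=\phi(t)$. Fineness gives $\phi'(s)=\phi'(t)$, and hence $\delta'(\cdot\mid s)=\delta'(\cdot\mid t)$. So $\delta'$ is constant on the classes of $\phi$, and by the characterisation above $\delta'\in\Pi_\delta(\phi,\mdp)$. This shows $\Pi_\delta(\phi',\mdp)\subseteq\Pi_\delta(\phi,\mdp)$.

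\emph{($\Leftarrow$)} Assume $\Pi_\delta(\phi,\mdp)\supseteq\Pi_\delta(\phi',\mdp)$, and take $s,t$ with $\phi(s)=\phi(t)$. By Proposition~\ref{thm:abstraction_control}, the ``$\Rightarrow$'' direction of \eqref{eq:thm_state_abstraction_control}, every $\delta\in\Pi_\delta(\phi,\mdp)$ satisfies $\delta(\cdot\mid s)=\delta(\cdot\mid t)$. By the inclusion, every $\delta'\in\Pi_\delta(\phi',\mdp)$ then satisfies the same equality. The ``$\Leftarrow$'' direction of Proposition~\ref{thm:abstraction_control}, applied to $\phi'$, now yields $\phi'(s)=\phi'(t)$, which is exactly $\phi\finereq\phi'$.

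The ``conversely'' sentence in the statement is a reformulation of the $(\Leftarrow)$ direction, read as a contrapositive. It says that if $\phi$ admits a derived policy that $\phi'$ does not, then $\phi'$ cannot be finer than $\phi$, so some pair $s,t$ is separated by $\phi$ but merged by $\phi'$. We would note this alongside the $(\Leftarrow)$ argument.

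The only delicate point is the reverse direction of Proposition~\ref{thm:abstraction_control}, which we are invoking. It needs $|\aset|\ge 2$: with a single action every policy is trivially constant, and the inclusion of policy sets carries no information. If the proposition is not taken for granted, we would reprove that direction directly. Suppose $\phi'(s)\neq\phi'(t)$, and pick distinct actions $a\neq b$. Define an abstract policy $\pi_{\phi'}$ that is deterministic on $a$ at $\phi'(s)$ and deterministic on $b$ everywhere else. Its derived policy $\delta'$ separates $s$ from $t$, which contradicts the equality obtained above. We would state the assumption $|\aset|\ge 2$ explicitly.
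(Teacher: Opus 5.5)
Your argument is correct and is essentially the paper's. The paper proves the contrapositive of \eqref{eq:thm_derivability} with $\phi$ and $\phi'$ interchanged, each direction by contradiction, using the same two ingredients: a finer abstraction can reproduce every policy derived from a coarser one, and a pair separated by $\phi$ but merged by $\phi'$ yields a $\phi$-derived policy separating $s$ from $t$. You obtain the latter through Proposition~\ref{thm:abstraction_control} rather than constructing it inline as the paper does.

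Your caveat $|\aset|\ge 2$ is a real hypothesis. The paper uses it silently when it asserts an abstract policy with $\pi_\phi(\cdot\mid\phi(s))\neq\pi_\phi(\cdot\mid\phi(t))$.

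One small correction: the ``conversely'' sentence is the contrapositive of your $(\Rightarrow)$ direction with $\phi$ and $\phi'$ swapped, not of $(\Leftarrow)$. This is also why it needs no assumption on $|\aset|$. Since you prove both directions, nothing is lost.
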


\seeproof{control_coarseness_tradeoff}

\paragraph{}
Potential for transfer
through state abstraction
and policy derivation
does not necessarily mean
that it is beneficial
--
the resulting derived policy
is not necessarily a good one.
For example,
a policy derived from an optimal
abstract policy
is not necessarily optimal in the ground MDP.
Now that we know that the set of derived policies
$\Pi_\delta(\phi, \mdp)$
is a subset of the set
of all ground policies,
it pays to look at the value functions
of those derived policies.
This will tell us
what solution qualities
are excluded by a state abstraction,
impossible to be attained
regardless of the learning algorithm
that might eventually be used,
constrained as it must be by
the control granularity permitted by the state abstraction.
What this will come down to eventually
is the question of how
a state abstraction's
potential for transfer
and
the potential solution quality permitted by that state abstraction
are related.

\paragraph{}
Consider the value functions
associated with the derived policies
$\Pi_\delta(\phi, \aset)$.
These value functions
constitute a partial order without the guaranteed existence of
a greatest element
as there can be multiple policies that are optimal for the set of derived policies.
Still, this set of value functions can be useful
as we can compare sets of value functions of derived policies
between state abstractions.
If one state abstraction allows a derived policy
that trumps all derived policies of another state abstraction,
then that state abstraction
is potentially better.

\begin{definition}[transfer value]
  Let
  $\mdp = \langle \sset, \aset, p, r, \gamma \rangle$
  be an MDP
  and let
  $\phi: \sset \mapsto \Phi$
  and
  $\phi': \sset \mapsto \Phi'$
  be two state abstractions.
  We say that
  $\phi$
  has \emph{greater transfer value for $\mdp$}
  than
  $\phi'$,
  denoted
  $v_\phi(\mdp) \geq v_{\phi'}(\mdp)$,
  if
  there exists a derived policy for
  $\phi$
  with a value function greater than or equal
  all derived policies' value functions
  for $\phi'$.
  Formally:
  \begin{equation*}
    v_\phi(\mdp) \geq v_{\phi'}(\mdp)
    \;\Longleftrightarrow\;
    \exists \delta \in \Pi_\delta(\phi, \mdp):
    \forall \delta' \in \Pi_\delta(\phi', \mdp): v_\delta(\cdot; \mdp) \geq v_{\delta'}(\cdot; \mdp)
  \end{equation*}
  We say that
  $\phi$ has \emph{strictly greater transfer value}
  than $\phi'$,
written
$v_\phi(\mdp) > v_\phi'(\mdp)$,
if there is a derived policy in $\Pi_\delta(\phi, \mdp)$
that has a strictly greater value function
than any derived policy in $\Pi_\delta(\phi', \mdp)$.
\end{definition}

We should add that
comparing transfer values
can be conservative
for some applications
as the above definition
requires value dominance at all states.
In practice
there is often a set of start states for which
a higher value is far more important than for other states.
In those scenarios,
transfer values
could be compared for a subset of states
rather than for all states.
In this paper we limit ourselves to comparing
values for all states but limiting this comparison to a subset of states
is a straightforward extension.

\paragraph{}
The set of all possible ground policies
may contain multiple optimal policies
but they will all have the same greatest value function.
The same is not necessarily true for the set of derived policies.
Because policy derivation restricts the space of available policies,
the optimal policy might no longer be available
which will be reflected in the transfer value of
that state abstraction.
Instead of a single optimal derived policy,
it is possible to have
multiple derived policies
with maximal value,
all suboptimal,
all with value functions that are incomparable
(neither greater nor less).
Since transfer value is defined in terms of value functions,
it is also possible
for one state abstraction's
transfer value
to not be comparable to that of another state abstraction.

\paragraph{}
In effect,
we now have two partial orderings over
state abstractions:
coarseness and control granularity on one side and transfer value on the other.
The two sides are of course related.
To describe how they relate,
it is instructive
to understand when one state abstraction
has \emph{a higher transfer value than} another.
It is only possible for a state abstraction
to have a strictly higher transfer value than another
if it allows for more control where the other does not \
(and for that improved control to support a greater value at one or more states).
If two states are grouped together under an abstraction
that an optimal policy would distinguish between,
the value of the best available policy under this abstraction suffers
and this is reflected in the transfer value of this abstraction.
This leads to the following Theorem.

\begin{theorem}[{transfer value--control trade-off}]
  \label{thm:transfer_value_control_tradeoff}
    Let
  $\phi: \sset \mapsto \Phi$
  and
  $\phi': \sset \mapsto \Phi'$
  be two state abstractions
  such that
  $\phi$
  has
  \emph{strictly higher transfer value than}
  $\phi'$
  for some MDP
  $\mdp \defas \langle \sset, \aset, p, r \rangle, \gamma$.
  Then there must
  exist two states
  $s, t \in \sset$
  such that
  $\phi'$ does not distinguish between them
  while $\phi$ does,
  and there exists a derived policy for
  $\phi$
  that takes advantage of this distinction
  by controlling $s$ and $t$ differently
  in a way that no derived policy for $\phi'$ can do.
  By giving up representational capacity
  and increasing potential for transfer,
  $\phi'$ sacrifices transfer value.
  Formally:
  \begin{gather*}
    v_\phi(\mdp) > v_{\phi'}(\mdp)
    \\
    \Downarrow \\
      \exists s, t \in \sset:\\
    \begin{aligned}
      &1.\; \phi(s) \neq \phi(t) \\
      &2.\; \phi'(s) = \phi'(t) \\
      &3.\; \exists \delta \in \Pi_\delta(\phi, \mdp):
      \delta(\cdot\mid s) \neq \delta(\cdot\mid t)
      \\
      &4.\; \forall \delta' \in \Pi_\delta(\phi', \mdp):
      \delta'(\cdot\mid s) = \delta'(\cdot\mid t)
    \end{aligned}
  \end{gather*}

\end{theorem}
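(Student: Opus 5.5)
We argue by contraposition and use Proposition~\ref{thm:control_coarseness_tradeoff} and Proposition~\ref{thm:abstraction_control}. Suppose there are no states $s,t$ satisfying conditions 1--4. First observe that conditions 3 and 4 follow from conditions 1 and 2. If $\phi(s)\neq\phi(t)$, the reverse direction of \eqref{eq:thm_state_abstraction_control} gives a derived policy $\delta\in\Pi_\delta(\phi,\mdp)$ with $\delta(\cdot\mid s)\neq\delta(\cdot\mid t)$. Concretely, if $|\aset|\geq 2$, we take an abstract policy that puts different point masses on $\phi(s)$ and $\phi(t)$; this is condition 3. If $\phi'(s)=\phi'(t)$, the forward direction of \eqref{eq:thm_state_abstraction_control} gives condition 4. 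So the negated conclusion reduces to the following: for all $s,t$, $\phi'(s)=\phi'(t)\Rightarrow\phi(s)=\phi(t)$. That is exactly $\phi'\finereq\phi$.

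Next, Proposition~\ref{thm:control_coarseness_tradeoff} turns $\phi'\finereq\phi$ into $\Pi_\delta(\phi',\mdp)\supseteq\Pi_\delta(\phi,\mdp)$. Now unpack the hypothesis $v_\phi(\mdp)>v_{\phi'}(\mdp)$. It gives some $\delta\in\Pi_\delta(\phi,\mdp)$ whose value function is strictly greater than that of every $\delta'\in\Pi_\delta(\phi',\mdp)$. In particular we may choose $\delta'=\delta$, since $\delta$ lies in $\Pi_\delta(\phi',\mdp)$ by the inclusion. This yields $v_\delta>v_\delta$, a contradiction.

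The comparison $v_\delta > v_\delta$ is contradictory whichever reading of ``strictly greater value function'' is used: strict at some state while dominating everywhere, or strict at every state. Hence some pair $s,t$ must have $\phi(s)\neq\phi(t)$ and $\phi'(s)=\phi'(t)$, and by the first paragraph that pair also satisfies conditions 3 and 4.

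The main obstacle is the degenerate case $|\aset|=1$, where condition 3 cannot hold. I would handle it first: with one action every policy is identical, so both sets of derived policies are the same singleton. Strictly greater transfer value is then impossible, and the theorem holds vacuously. Beyond that, the only delicate point is making sure the definition of strict transfer value is strong enough to exclude $\delta$ being compared with itself; under either reading above it is.
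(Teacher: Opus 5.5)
Your proof is correct and follows the paper's skeleton. Both arguments show that the witnessing $\delta$ cannot lie in $\Pi_\delta(\phi',\mdp)$, then use Proposition~\ref{thm:control_coarseness_tradeoff} in the same direction ($\phi'\finereq\phi \Rightarrow \Pi_\delta(\phi',\mdp)\supseteq\Pi_\delta(\phi,\mdp)$, contrapositively) to get a pair satisfying conditions 1 and 2, and Proposition~\ref{thm:abstraction_control} for conditions 3 and 4. The differences are that you obtain $\delta\notin\Pi_\delta(\phi',\mdp)$ from the one-line observation that $\delta$ cannot strictly beat itself, where the paper uses a Bellman-equation reachability argument restricted to deterministic policies, and that you explicitly handle the case $|\aset|=1$, which the paper's appeal to Proposition~\ref{thm:abstraction_control} silently excludes.
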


Note that $\phi$ need not be
\emph{finer}
than $\phi'$,
so there can still exist two states
$\bar{s}, \bar{t} \in \sset$
where
$\phi(\bar{s}) = \phi(\bar{t})$
and
$\phi'(\bar{s}) \neq \phi'(\bar{t})$
(to contrast with the consequent above).
This result just means that there is at least one
pair of states where the opposite is true.

\paragraph{}
\seeproof{transfer_value_control_tradeoff}

\paragraph{}
As a direct result of
Theorem~\ref{thm:transfer_value_control_tradeoff}
we also have the following corollary.
While it is weaker than the theorem
it does enjoy some brevity.

\begin{corollary}
  \label{thm:transfer_value_control_tradeoff_corollary}
    Let
  $\phi: \sset \mapsto \Phi$
  and
  $\phi': \sset \mapsto \Phi'$
  be two state abstractions
  such that
  $\phi$
  has
  \emph{strictly higher transfer value than}
  $\phi'$
  for some MDP $\mdp$.
  Then
  $\phi'$ can both be \emph{no finer}
  than $\phi$
  and \emph{no more control granular}
  than $\phi$.
  Formally:
  \begin{equation}
    v_\phi(\mdp) > v_{\phi'}(\mdp)
    \Longrightarrow
    \phi' \not \finereq \phi
    \;\wedge\;
    \Pi_\delta(\phi, \mdp)
    \not \subseteq \Pi_\delta(\phi', \mdp)
    \label{eq:transfer_value_control_tradeoff_corollary1}
  \end{equation}
  The following also holds.
  If $\phi$ is coarser
  and less control granular than $\phi'$,
  it is impossible for
  $\phi$
  to have a higher transfer value than
  $\phi'$.
  Formally:
  \begin{equation}
    \phi' \finereq \phi
    \;\wedge\;
    \Pi_\delta(\phi, \mdp)
    \subseteq \Pi_\delta(\phi', \mdp)
    \Longrightarrow
    v_\phi(\mdp) \not > v_{\phi'}(\mdp)
    \label{eq:transfer_value_control_tradeoff_corollary2}
  \end{equation}

\end{corollary}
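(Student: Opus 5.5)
The corollary follows from Theorem~\ref{thm:transfer_value_control_tradeoff} and Proposition~\ref{thm:control_coarseness_tradeoff} by elementary logic, and the plan is to derive \eqref{eq:transfer_value_control_tradeoff_corollary1} first and obtain \eqref{eq:transfer_value_control_tradeoff_corollary2} as its contrapositive.

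For \eqref{eq:transfer_value_control_tradeoff_corollary1}, assume $v_\phi(\mdp) > v_{\phi'}(\mdp)$. Theorem~\ref{thm:transfer_value_control_tradeoff} then supplies states $s,t \in \sset$ with $\phi(s)\neq\phi(t)$ and $\phi'(s)=\phi'(t)$. To get $\phi' \not\finereq \phi$, I will apply the definition of coarseness directly. If $\phi' \finereq \phi$ held, then $\phi'(s)=\phi'(t)$ would force $\phi(s)=\phi(t)$, which is a contradiction.

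To get $\Pi_\delta(\phi,\mdp) \not\subseteq \Pi_\delta(\phi',\mdp)$, I use items 3 and 4 of the theorem. Item 3 gives some $\delta \in \Pi_\delta(\phi,\mdp)$ with $\delta(\cdot\mid s)\neq\delta(\cdot\mid t)$. Item 4 says every element of $\Pi_\delta(\phi',\mdp)$ agrees on $s$ and $t$. Hence $\delta \notin \Pi_\delta(\phi',\mdp)$. Alternatively, this second conjunct follows from the first via Proposition~\ref{thm:control_coarseness_tradeoff} applied with the roles of $\phi$ and $\phi'$ swapped: $\phi'\finereq\phi \iff \Pi_\delta(\phi',\mdp)\supseteq\Pi_\delta(\phi,\mdp)$. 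I would mention this to show the two conjuncts are really one condition.

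For \eqref{eq:transfer_value_control_tradeoff_corollary2}, I take the contrapositive of \eqref{eq:transfer_value_control_tradeoff_corollary1}. The negation of $\phi'\not\finereq\phi \wedge \Pi_\delta(\phi,\mdp)\not\subseteq\Pi_\delta(\phi',\mdp)$ is a disjunction: $\phi'\finereq\phi$ or $\Pi_\delta(\phi,\mdp)\subseteq\Pi_\delta(\phi',\mdp)$. The hypothesis of \eqref{eq:transfer_value_control_tradeoff_corollary2} is the conjunction of these two, so it certainly implies the disjunction, and therefore $v_\phi(\mdp)\not> v_{\phi'}(\mdp)$.

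There is no real obstacle here. The only care needed is to keep the direction of $\finereq$ and of the inclusion straight, since the paper's statement of Proposition~\ref{thm:control_coarseness_tradeoff} contains a typo ("finer than $\phi$"). I will rely on its formal version \eqref{eq:thm_derivability}.
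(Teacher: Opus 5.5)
Your proposal is correct and follows the paper's proof. The paper gets \eqref{eq:transfer_value_control_tradeoff_corollary1} from Theorem~\ref{thm:transfer_value_control_tradeoff} plus the definitions of coarseness and control granularity, and obtains \eqref{eq:transfer_value_control_tradeoff_corollary2} from its contrapositive. The one small difference is that the paper invokes Proposition~\ref{thm:control_coarseness_tradeoff} to turn the disjunction into a conjunction, whereas you note that the conjunction trivially implies the disjunction, so that proposition is not needed for this step.
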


\seeproof{transfer_value_control_tradeoff_corollary}

\paragraph{}
We have established a general trade-off
between transfer value on one hand
and coarseness or control granularity
on the other hand.
However,
it's not as simple
as every coarsening resulting
in decreased transfer value and vice versa.
It is sometimes possible to coarsen a state abstraction
without sacrificing transfer value.
For example,
the identity state abstraction
has maximal transfer value,
yet there can be state abstractions
with the same maximal transfer value
yet that are coarser
that the identity state abstraction.
This presents an interesting opportunity,
where potential for transfer can be increased
with no drawback.

\begin{definition}[minimal control state abstraction]
  A state abstraction
  $\phi: \sset \mapsto \Phi$
  is a
  \emph{minimal control state abstraction}
  for an MDP
  $\mdp=\langle \sset, \aset, p, r, \gamma\rangle$
  if $\phi$
  has \emph{maximal transfer value}
  and there exists \emph{no state abstraction}
  coarser than $\phi$
  with higher transfer
  value than $\phi$.
\end{definition}

In other words,
there is no coarsening
a \emph{minimal control state abstraction}
without negatively impacting transfer value
because that would prevent the optimal policy from being derived.
Still,
there is the potential for \emph{some} increased coarseness
compared to the identity state abstraction
where distinguishing between
states does not impact the solution quality.




\begin{center}
  $\ast$~$\ast$~$\ast$
\end{center}

\subsubsection{Multi-Task Transfer.}
Now that we have considered transfer within a single task,
a natural extension would be to consider two tasks
$\mdp_\alpha=\langle \sset_\alpha, \aset, p_\alpha, r_\alpha \rangle$
and
$\mdp_\beta=\langle \sset_\beta, \aset, p_\beta, r_\beta \rangle$
with a shared action set $\aset$.
We can also consider two state abstractions
$\phi_\alpha: \sset_\alpha \mapsto \asset_\alpha$
and
$\phi_\beta: \sset_\beta \mapsto \asset_\beta$
such that
$\asset_\alpha \cup \asset_\beta \subseteq \acodomain$.
While the images of
$\phi_\alpha$
and
$\phi_\beta$,
i.e.
$\asset_{\alpha}$
and
$\asset_{\beta}$,
are not necessarily the same,
they might have a non-empty intersection.
Both state abstractions
define equivalence relations on their respective domains,
as discussed in the previous single-task setting.
On top of that
we can also consider
an equivalence relation
relating states from the original sets
$\sset_\alpha$
and
$\sset_\beta$
wherever they share an abstraction.
We term this equivalence relation
$\equi_{\phi_\alpha,\phi_\beta}$
over
$\sseta \cup \ssetb$
\emph{the equivalence relation imposed by $\phia$ and $\phib$}
and,
if we take
$\phi(s)$ to be
$\phi_\alpha(s)$ if $s \in \sset_\alpha$
and $\phi_\beta(s)$ otherwise,
define it
for states
$
s, s' \in \sset_\alpha \cup \sset_\beta
$
as:
\begin{equation}
  E_{\phi_\alpha,\phi_\beta}(s, s')
  \iff
  \phi(s) = \phi(s')
\end{equation}
This is a generalization of the single-task setting
in Eq.~\eqref{eq:abstraction_equivalence_relation}.
If we now learn a policy
$\pi_{\phi_\alpha}: \sset_{\phi_\alpha} \mapsto~\probset(\aset)$
for
the abstract MDP $\mdp_{\phi_\alpha}$,
it can be readily transferred to the ground task
$\mdp_\alpha$
using policy derivation.
Moreover,
if
$\Phi_\alpha$
and
$\Phi_\beta$ intersect,
there will be states
$s_\alpha \in \sset_\alpha$
and
$s_\beta \in \sset_\beta$
so that
$\phi(s_\alpha) = \phi(s_\beta)$.
For these states
$s_\beta$,
it is possible to again
construct a policy
by setting
$\pi_\beta(\cdot|s_\beta) = \pi_{\phi_\alpha}(\cdot|\phib(s_\beta))$.
This may not cover
the whole state set
$\sset_\beta$%
\textemdash
a policy
$\pi_\beta$
so constructed
would not necessarily constitute a full solution
to $\mdp_\beta$%
\textemdash
but it could still improve
learning in $\mdp_\beta$
by bootstrapping with a partial solution.
In this setting we refer to
$\mdp_\beta$ as the \emph{target task},
and to $\mdp_\alpha$ as the \emph{source task}.
The part of the target task's state set
that is covered by the overlap in state abstractions
we refer to as the \emph{transfer cover}.

\begin{figure}[h]
  \centering
  \includegraphics[width=0.4\linewidth]{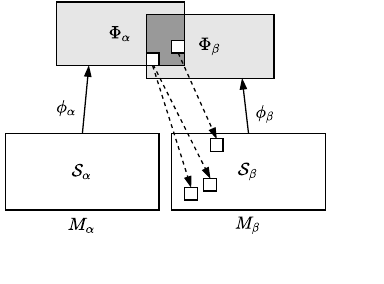}
  \caption{Diagram detailing the multi-task transfer setting.
  Rectangles depict state sets
  while squares denote states within those sets.
  The dark gray overlap denotes
  $\Phia \cap \Phib$
  and the dotted arrows
  starting from it point at ground states
  in the \emph{transfer cover}
  of $\phib$ given $\phia$.}
  \label{fig:multitask_setting}
\end{figure}

\begin{definition}[transfer cover]
  Let
  $\phi_\alpha: \sset_\alpha \mapsto \asset_\alpha$
  and
  $\phi_\beta: \sset_\beta \mapsto \asset_\beta$
  be two state abstractions.
  We refer to the set of
  all ground states
  $s_\beta \in \sset_\beta$
  with an abstraction in
  the intersection
  of the two abstractions
  $\Phi_\alpha \cap \Phi_\beta$
  as
  \emph{the transfer cover of $\phi_\beta$
  given
  $\phi_\alpha$}
  and denote it as
  \begin{align*}
    \phi^{-1}_\beta(\Phi_\alpha)
    &\defas
    \Set{
      s \in \phi^{-1}_\beta(\astate) | \astate \in \Phi_\alpha
    }
  \end{align*}
  This also implies a partial ordering
  over pairs of state abstractions.
  Let
  $\phi_\beta': \sset_\beta \mapsto \asset_\beta'$
  be another state abstraction.
  Given
  $\phi_\alpha$,
  $\phi_\beta$
  has greater transfer cover
  than
  $\phi_\beta'$
  if the transfer cover of $\phi_\beta'$
  is a subset of that of $\phi_\beta$.
  That is, if:
  \begin{equation*}
    {\phi_\beta'}^{-1}(\Phi_\alpha)
    \subseteq
    {\phi_\beta}^{-1}(\Phi_\alpha)
  \end{equation*}
\end{definition}

\paragraph{}
Now that we have identified the overlap between state abstractions
as an opportunity for transfer,
we can revisit \emph{policy derivation}
for the multi-task setting.

\begin{definition}[partially derived policy]
  Let
  $\mdp_\alpha=\langle \sset_\alpha, \aset, p_\alpha, r_\alpha \rangle$
  and
  $\mdp_\beta=\langle \sset_\beta, \aset, p_\beta, r_\beta \rangle$
  be two MDPs,
  let
  $\phi_\alpha: \sset_\alpha \mapsto \asset_\alpha$
  and
  $\phi_\beta: \sset_\beta \mapsto \asset_\beta$
  be two state abstractions
  such that
  $\asset_\alpha \cup \asset_\beta \subseteq$,
  and let
  $\pi_{\phi_\alpha}: \asset_\alpha \mapsto \probset(\aset)$
  be an abstract policy.
  A policy
  $\derived a\beta: \sset_\beta \mapsto \probset(\aset)$
  is
  \emph{partially derived}
  if it holds that
  $\forall s_\beta \in \phi_{\beta}^{-1}(\Phi_\alpha):$
  \begin{equation*}
    \derived a\beta(a | s_\beta) = \pi_{\phi_\alpha}(a | \phib(s_\beta))
  \end{equation*}
  Let
  $\Pi_{\phi_\alpha}$
  be the set of all abstract policies
  $\pi_{\phi_\alpha}: \Phi_\alpha \mapsto \probset(\aset)$.
  We write the
  \emph{set of partially derived policies
  from state abstraction
$\phi_\alpha$}
  for state abstraction $\phi_\beta$
  and MDP $\mdp_\beta$
  as
  \begin{equation*}
    \Pi_\delta(\phi_\beta, \mdp_\beta; \phi_\alpha)
    \defas
    \Set{
      \derived a\beta \in \Pi_\delta(\phi_\beta, M)
      |
        \exists \pi_{\phi_\alpha} \in \Pi_{\phi_\alpha}:
        \forall s_\beta \in \phi^{-1}_\beta(\Phi_\alpha):
        \derived a\beta(\cdot | s_\beta) = \pi_{\phi_\alpha}(\cdot | \phi_\beta(s_\beta))
    }
  \end{equation*}
  We can also consider the subset of partially derived policies
  for a single abstract policy.
  We write the
  \emph{set of partially derived policies from
    abstract policy
  $\pi_\phia$}
  for state abstraction $\phib$ and MDP $\mdpb$ as
  \begin{equation*}
    \Pi_\delta(\phi_\beta, \mdp_\beta; \pi_\phia)
    \defas
    \Set{
      \derived a\beta \in \Pi_\delta(\phi_\beta, M)
      |
        \forall s_\beta \in \phi^{-1}_\beta(\Phi_\alpha):
        \derived a\beta(\cdot | s_\beta) = \pi_{\phi_\alpha}(\cdot | \phi_\beta(s_\beta))
    }
  \end{equation*}
\end{definition}
It is straightforward to see that
single-task
\emph{derived policies}
form a special case
of cross-task
\emph{partially derived policies}
by setting
$\mdp_\alpha = \mdp_\beta$
and
$\phi_\alpha = \phi_\beta$.
The single-task setting is in effect a special case
of the multi-task setting
with full overlap between the
abstract codomains
resulting in the partially derived policy
making up a complete policy.

\paragraph{}
Whenever
$\Phib \not \subseteq \Phia$,
that is,
when $\phi_\beta$
does not have maximal transfer cover
given $\phi_\alpha$,
the above partially derived policy
construction does not constitute
a full policy
for all states in $\asset_\beta$,
hence the
\emph{partial}
in the name.
Since the region of the state space
described by the transfer cover
is the only place where transfer can occur,
we should want it to be as great as possible.
At the same time,
there exists an interesting interplay
between transfer cover
and concepts we have discussed before in the single-task setting:
control granularity and transfer value.
We will now re-visit
these concepts for the multi-task setting,
taking into account their interactions with transfer cover.

\begin{definition}[conditional coarseness]
  Let
  $\phi_\alpha: \sset_\alpha \mapsto \asset_\alpha$,
  $\phi_\beta: \sset_\beta \mapsto \asset_\beta$,
  $\phi_\beta': \sset_\beta \mapsto \asset'_\beta$
  be state abstractions
  such that
  $\asset_\alpha \cup \asset_\beta\cup \asset'_\beta$.
  We say that
  $\phi_\beta$
  is \emph{finer
  given
$\phi_\alpha$}
  than
  $\phi_\beta'$,
  denoted
  $
  \phi_\beta \underset{\phi_\alpha}{\finereq} \phi_\beta'
  $,
  if all states in the transfer cover of $\phi_\beta$ given $\phi_\alpha$
  that are equivalent under $\phi_\beta$
  are also in the transfer cover of $\phi_\beta'$ given $\phi_\alpha$
  and are also equivalent under $\phi_\beta'$.
 Formally:
  \begin{equation*}
    \begin{gathered}
      \phi_\beta \underset{\phi_\alpha}{\finereq} \phi_\beta'
      \\
      \Updownarrow
      \\
      \forall s, t \in \sset_\beta:
      \phi_\beta(s) = \phi_\beta(t) \wedge s, t \in \phi_\beta^{-1}(\Phi_\alpha)
      \;\Longrightarrow\;
      \phi_\beta'(s) = \phi_\beta'(t) \wedge s, t \in {\phi_\beta'}^{-1}(\Phi_\alpha)
    \end{gathered}
  \end{equation*}
  If, on the other hand,
  it holds that
  $
    \phi_\beta \underset{\phi_\alpha}{\coarsereq} \phi_\beta'
  $ (shorthand for $\phi'_\beta \underset{\phi_\alpha}{\finereq} \phi_\beta$),
  we say that $\phi_\beta$ is
  \emph{coarser given $\phi_\alpha$} than $\phi_\beta'$.
  If, in addition,
  $\phi_\beta$ is not isomorphic to $\phi_\beta'$,
  we say
  that $\phi_\beta$ is \emph{strictly finer}
  (resp. \emph{strictly coarser})
  than $\phi_\beta'$.
\end{definition}

The difference with the single state abstraction version
we used before
is that this definition of coarseness
takes into account
the transfer cover.
It is possible for one state abstraction to be strictly finer than another
without being strictly finer with respect to
a given state abstraction $\phi_\alpha$,
simply because only states in the transfer cover are relevant
for \emph{conditional coarseness}.
In effect,
in our transfer setting,
part of the abstract state set is left outside of consideration.
Specifically, the part of $\Phi_\beta$ that does not overlap with
$\Phi_\alpha$,
i.e. $\Phi_\beta - \Phi_\alpha$.

\begin{figure}[h]
  \centering
    \begin{subfigure}[t]{.49\textwidth}
      \centering
      \includegraphics[scale=.8]{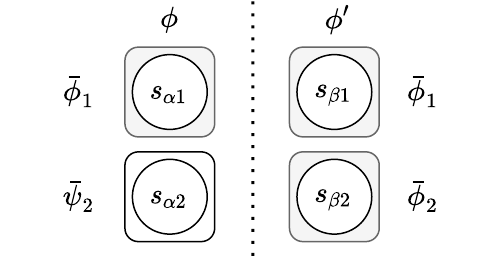}
      \caption{}
      \label{fig:conditional_coarseness_vs_transfer1}
    \end{subfigure}
    \begin{subfigure}[t]{.49\textwidth}
      \centering
      \includegraphics[scale=.8]{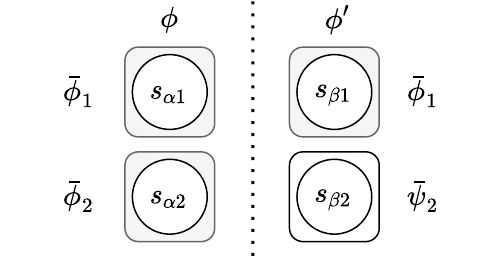}
      \caption{}
      \label{fig:conditional_coarseness_vs_transfer2}
    \end{subfigure}
  \caption{Two transfer settings,
  each showing the same two states with two different state abstractions.
  In each setting the abstract state $\bar\psi_2$ falls outside of the transfer cover.}
  \label{fig:conditional_coarseness_vs_transfer}
\end{figure}

Since transfer cover requires exact equality
between abstract states in $\Phi_\alpha$ and $\Phi_\beta$
and coarseness includes equality under isomorphism,
the interplay between transfer cover and coarseness
can be non-obvious.
It is for example possible
for a state abstraction to have different transfer cover
but the same level of conditional coarseness,
by means of a simple
relabelling of abstract states.
This relabelling is a bijection,
i.e. an isomorphism,
and preserves conditional coarseness.
\figref{fig:conditional_coarseness_vs_transfer}
shows two transfer settings
where shaded abstract states depict states in the transfer cover.
In each setting,
$\phi$ is \emph{equally fine} as $\phi'$.
However,
in the first setting
(\figref{fig:conditional_coarseness_vs_transfer1})
$\phi$
has less transfer cover
than $\phi'$,
and in the second setting
(\figref{fig:conditional_coarseness_vs_transfer2})
$\phi$
has more transfer cover.
The difference in each case is that one abstract state
is simply relabelled
as $\bar\psi_2$,
i.e. a state that is not in the transfer cover.
While conditional coarseness is preserved,
transfer cover can be affected as transfer cover requires exact equality.
From this perspective
transfer cover is a more stringent property,
one that is necessary for transfer.
This also means that there is no straightforward relation
between a change in coarseness and a change in transfer cover.
Just like in the single-task setting however,
there does exist
a direct relation between conditional coarseness and
and
\emph{conditional control granularity}.

\begin{definition}[conditional control granularity]
  Let
  $\phi_\alpha: \sset_\alpha \mapsto \asset_\alpha$,
  $\phi_\beta: \sset_\beta \mapsto \asset_\beta$,
  and
  $\phi_\beta': \sset_\beta \mapsto \asset_\beta'$
  be state abstractions
  such that
  $\asset_\alpha \cup \asset_\beta \cup \asset'_\beta$,
  and
  let
  $\mdp_\beta=\langle \sset_\beta, \aset, p_\beta, r_\beta \rangle$
  be an MDP.
  We say
  that $\phi_\beta$
  is
  \emph{more control granular
  for $\mdp_\beta$
given $\phi_\alpha$}
  than $\phi_\beta'$
  if all derived policies contained in
  $\Pi_\delta(\phi_\beta', \mdp_\beta; \phi_\alpha)$
  are also contained in
  $\Pi_\delta(\phi_\beta', \mdp_\beta; \phi_\alpha)$.
\begin{equation*}
\Pi_\delta(\phi_\beta', \mdp; \phi_\alpha)
\subseteq
\Pi_\delta(\phi_\beta, \mdp; \phi_\alpha)
\;\Longleftrightarrow\;
\forall \delta' \in\Pi_\delta(\phi_\beta', \mdp; \phi_\alpha):
\exists \delta \in\Pi_\delta(\phi_\beta, \mdp; \phi_\alpha):
\delta = \delta'
\end{equation*}
If in addition
$
\Pi_\delta(\phi_\beta', \mdp; \phi_\alpha)
\neq
\Pi_\delta(\phi_\beta, \mdp; \phi_\alpha)
$,
we say that
$\phi_\beta$
is
\emph{strictly more control granular}
than $\phi_\beta'$ given $\phi_\alpha$.
\end{definition}

\begin{proposition}[control granularity--coarseness trade-off]
  \label{thm:mt_control_coarseness_tradeoff}
  Let
  $\phi_\alpha: \sset_\alpha \mapsto \asset_\alpha$,
  $\phi_\beta: \sset_\beta \mapsto \asset_\beta$,
  and
  $\phi_\beta': \sset_\beta \mapsto \asset'_\beta$
  be state abstractions
  such that
  $\asset_\alpha \cup \asset_\beta \cup \asset'_\beta$
  and
  let
  $\mdp_\beta=\langle \sset_\beta, \aset, p_\beta, r_\beta \rangle$
  be an MDP.
  Then
  $\phi_\beta$
  is \emph{finer given $\phi_\alpha$}
  than $\phi_\beta'$
  if and only if
  $\phi_\beta$
  is more \emph{control granular
  given $\mdp_\beta$ and $\phi_\alpha$}
  than $\phi_\beta'$.
  Formally:
  \begin{equation*}
    \phi_\beta \underset{\phi_\alpha}{\finereq} \phi_\beta'
    \;\Longleftrightarrow\;
    \Pi_\delta(\phi_\beta, \mdp_\beta; \phi_\alpha)
    \supseteq
    \Pi_\delta(\phi_\beta', \mdp_\beta; \phi_\alpha)
  \end{equation*}
\end{proposition}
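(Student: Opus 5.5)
The plan is to mirror the single-task argument (Proposition~\ref{thm:control_coarseness_tradeoff}), but with everything restricted to the transfer cover. We first pin down what membership in $\Pi_\delta(\phi_\beta, \mdp_\beta; \phi_\alpha)$ really constrains. A partially derived policy is a derived policy of $\phi_\beta$ whose values on the transfer cover $C \defas \phi_\beta^{-1}(\Phi_\alpha)$ are dictated by an abstract policy over $\Phi_\alpha$. Outside $C$ it is free, up to being constant on $\phi_\beta$-classes. Since $\pi_{\phi_\alpha}$ ranges over all of $\Pi_{\phi_\alpha}$, the set is exactly the set of derived policies of $\phi_\beta$. The restriction to $C$ only records which policies are realizable from a source policy. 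The first step is a characterisation lemma in the spirit of Proposition~\ref{thm:abstraction_control}, restricted to $C$: for $s,t \in C$, we have $\phi_\beta(s)=\phi_\beta(t)$ iff every $\delta \in \Pi_\delta(\phi_\beta,\mdp_\beta;\phi_\alpha)$ satisfies $\delta(\cdot\mid s)=\delta(\cdot\mid t)$. The reverse direction is proved by choosing $\pi_{\phi_\alpha}$ to put different distributions on the distinct labels $\phi_\beta(s),\phi_\beta(t)\in\Phi_\alpha$; this needs $|\aset|\ge 2$.

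For ($\Rightarrow$), assume $\phi_\beta \underset{\phi_\alpha}{\finereq} \phi_\beta'$ and take $\delta' \in \Pi_\delta(\phi_\beta',\mdp_\beta;\phi_\alpha)$ induced by some $\pi'_{\phi_\alpha}$. We must show $\delta'$ is also a partially derived policy for $\phi_\beta$. We construct the witness $\pi_{\phi_\alpha}$ on each $\astate \in \Phi_\alpha \cap \Phi_\beta$ by picking any $s \in \phi_\beta^{-1}(\astate)$ and setting $\pi_{\phi_\alpha}(\cdot\mid\astate) \defas \delta'(\cdot\mid s)$. This is well defined because, by conditional fineness, all such $s$ lie in ${\phi'_\beta}^{-1}(\Phi_\alpha)$ with a common $\phi'_\beta$-label, so $\delta'$ agrees on them. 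On the remaining abstract states of $\Phi_\alpha$ we set $\pi_{\phi_\alpha}$ arbitrarily. Finally, $\delta'$ must be constant on the $\phi_\beta$-classes outside $C$ so that it lies in $\Pi_\delta(\phi_\beta,\mdp_\beta)$. This is the delicate point: outside $C$ the hypothesis says nothing. I would handle it by reading the definition of $\Pi_\delta(\cdot;\phi_\alpha)$ as constraining only the cover, which is the reading under which the proposition is stated and which the paper's discussion of ``part of the abstract state set left outside of consideration'' supports. Under that reading, the off-cover behaviour is unconstrained on both sides and can be copied directly.

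For ($\Leftarrow$), argue by contraposition. Suppose there are $s,t \in C$ with $\phi_\beta(s)=\phi_\beta(t)$ such that either they are not both in ${\phi'_\beta}^{-1}(\Phi_\alpha)$, or $\phi'_\beta(s)\neq\phi'_\beta(t)$. In either case we build $\delta' \in \Pi_\delta(\phi'_\beta,\mdp_\beta;\phi_\alpha)$ with $\delta'(\cdot\mid s)\neq\delta'(\cdot\mid t)$. If the labels differ, use the characterisation lemma, choosing $\pi'_{\phi_\alpha}$ or free off-cover values to separate them. If say $t$ lies outside the cover of $\phi'_\beta$, choose the free value at $t$ different from the one $\pi'_{\phi_\alpha}$ imposes at $s$. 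By the lemma applied to $\phi_\beta$, every $\delta$ in $\Pi_\delta(\phi_\beta,\mdp_\beta;\phi_\alpha)$ agrees on $s,t$, so $\delta'$ is not contained there, and the inclusion fails.

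The main obstacle I expect is the mismatch between the cover-restricted hypothesis and the requirement that partially derived policies also be derived policies of $\phi_\beta$ on all of $\sset_\beta$. The ($\Rightarrow$) direction hinges on the off-cover reading of the definition; if that proves untenable, I would add the hypothesis that $\phi_\beta$ and $\phi'_\beta$ agree (up to isomorphism) off their covers, and then invoke Proposition~\ref{thm:control_coarseness_tradeoff} there.
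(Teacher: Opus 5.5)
There is a genuine gap, and it cannot be closed: under the paper's definitions the proposition is false. The paper gives no argument beyond calling it a straightforward extension of Proposition~\ref{thm:control_coarseness_tradeoff}.

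Your own observation is the key. Since $\Pi_\delta(\phi_\beta,\mdp_\beta;\phi_\alpha)=\Pi_\delta(\phi_\beta,\mdp_\beta)$, Proposition~\ref{thm:control_coarseness_tradeoff} (with $|\aset|\ge 2$) turns the right-hand side into unconditional fineness $\phi_\beta\finereq\phi'_\beta$. That condition depends only on partitions. Conditional fineness, by contrast, ignores everything outside $C=\phi_\beta^{-1}(\Phi_\alpha)$ and depends on labels through the clause $s,t\in C'={\phi'_\beta}^{-1}(\Phi_\alpha)$. Both directions fail.

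\emph{Direction $(\Rightarrow)$.} Take $\sset_\beta=\{s,t\}$, let $\phi_\beta$ send both states to one label outside $\Phi_\alpha$, and let $\phi'_\beta$ be injective. Conditional fineness holds vacuously because $C=\emptyset$, but the inclusion fails.

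\emph{Direction $(\Leftarrow)$.} Let $\phi_\beta$ send $s\neq t$ to a common label in $\Phi_\alpha$, and let $\phi'_\beta$ be a relabelling of $\phi_\beta$ whose image misses $\Phi_\alpha$. The two policy sets coincide, yet conditional fineness fails at $(s,t)$. This is exactly where your step ``choose the free value at $t$'' breaks under the literal definition: $s,t\notin C'$ share a $\phi'_\beta$-label, so $\delta'$ must agree on them. Your fallback hypothesis (agreement off the covers) does not rescue $(\Leftarrow)$, because this relabelling induces the same partition everywhere.

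\emph{Your cover-only reading.} Here policies are required to be constant only on the $\phi_\beta$-classes inside $C$ and are unconstrained elsewhere. Under this reading your $(\Rightarrow)$ argument is complete, including the well-definedness of the witness $\pi_{\phi_\alpha}$. Your contrapositive for $(\Leftarrow)$ also works whenever the violating pair has $s\neq t$.

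However, conditional fineness instantiated at $s=t$ demands $C\subseteq C'$. A violation at a singleton class $\{s\}\subseteq C\setminus C'$ cannot be exhibited by any policy, since $\delta'(\cdot\mid s)\neq\delta'(\cdot\mid s)$ is impossible. Concretely, take $\sset_\beta=\{s\}$ with $\phi_\beta(s)\in\Phi_\alpha$ and $\phi'_\beta(s)\notin\Phi_\alpha$. Under either reading both policy sets consist of all policies, while conditional fineness fails.

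\emph{What you can actually prove.} Under the cover-only reading and $|\aset|\ge 2$, the inclusion holds if and only if every two \emph{distinct} $\phi_\beta$-equivalent states of $C$ lie in $C'$ and share a $\phi'_\beta$-label. Alternatively, keep the paper's conditional fineness but add $C\subseteq C'$ as a hypothesis for $(\Leftarrow)$. You should present this as a correction of the proposition, not as a proof of it under a reinterpreted definition.
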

The proof is a straightforward extension of the
proof of the
single-task state abstraction Proposition~\ref{thm:control_coarseness_tradeoff}.
We omit it here for the sake of brevity.

Just like in the single-task setting,
state abstractions
can impact attainable value functions
in the multi-task setting.
We can compare attainable value functions
between two state abstractions for an MDP,
conditional on another state abstraction
(based on transfer cover).

\begin{definition}[conditional transfer value]
  Let
  $\phi_\alpha: \sset_\alpha \mapsto \asset_\alpha$,
  $\phi_\beta: \sset_\beta \mapsto \asset_\beta$,
  $\phi_\beta': \sset_\beta \mapsto \asset'_\beta$
  be state abstractions
  such that
  $\asset_\alpha \cup \asset_\beta \cup \asset'_\beta$
  and
  let
  $\mdp_\beta=\langle \sset_\beta, \aset, p_\beta, r_\beta, \gamma \rangle$
  be an MDP.
  We say that
  $\phi_\beta$
  has \emph{greater transfer value for $\mdp_\beta$ given $\phi_\alpha$}
  than
  $\phi_{\beta'}$,
  denoted
  $v_{\phi_\beta}(\mdp_\beta; \phi_\alpha) > v_{\phi_\beta'}(\mdp_\beta; \phi_\alpha)$,
  if
  there exists a derived policy for
  $\phi_\beta$
  with a value function greater than or equal to
  all derived policies' value functions
  for $\phi_\beta'$:
  \begin{equation*}
    \begin{gathered}
      v_{\phi_\beta}(\mdp_\beta; \phi_\alpha) \geq v_{\phi_\beta'}(\mdp_\beta; \phi_\alpha)
      \\
      \Updownarrow
      \\
      \exists \delta_\beta \in \Pi_\delta(\phi_\beta, \mdp_\beta; \phi_\alpha):
      \forall \delta_\beta' \in \Pi_\delta(\phi_\beta', \mdp_\beta; \phi_\alpha):
      v_{\delta_\beta}(\cdot; \mdp_\beta) \geq v_{\delta_\beta'}(\cdot; \mdp_\beta)
    \end{gathered}
  \end{equation*}
  If the above inequality is strict,
  we say that
  $\phi_\beta$ has \emph{strictly greater transfer value}
  than $\phi_\beta'$.
\end{definition}

As is the case in the single-task setting,
so too in the multi-task setting are
control granularity and coarseness on one side
and transfer value on the other side
related.
This time it takes into account the transfer cover.

\begin{theorem}[{transfer value--control trade-off}]
  \label{thm:mt_transfer_value_control_tradeoff}
  Let
  $\phi_\alpha: \sset_\alpha \mapsto \asset_\alpha$,
  $\phi_\beta: \sset_\beta \mapsto \asset_\beta$,
  and
  $\phi_\beta': \sset_\beta \mapsto \asset'_\beta$
  be state abstractions
  such that
  $\asset_\alpha \cup \asset_\beta \cup \asset'_\beta$
  and such that
  $\phi_\beta$
  has
  \emph{strictly higher transfer value given $\phi_\alpha$}
  for an MDP
  $\mdp_\beta=\langle \sset_\beta, \aset, p_\beta, r_\beta \rangle$
  than
  $\phi_\beta'$.
  Then there must
  exist two states
  in the transfer cover of
  $\phi_\beta'$ given $\phi_\alpha$
  such that
  $\phi_\beta'$ does not distinguish between them
  while either $\phi_\beta$ does,
  or not both states are in the transfer cover of $\phi_\beta$.
  In addition,
  there exists a partially derived policy for
  $\phi_\beta$
  that takes advantage of this
  by controlling $s$ and $t$ differently
  in a way that no partially derived policy of $\phi_\beta'$ can do.
  Formally:
  \begin{gather*}
    v_{\phi_\beta}(\mdp_\beta; \phi_\alpha) > v_{\phi_\beta'}(\mdp_\beta; \phi_\alpha)
    \\
    \Downarrow \\
      \exists s, t \in \sset_\beta:\\
    \begin{aligned}
      &1.\; \phi_\beta(s) \neq \phi_\beta(t) \vee s, t \not \in {\phi_\beta}^{-1}(\Phi_\alpha) \\
      &2.\; \phi_\beta'(s) = \phi_\beta'(t) \wedge s, t \in {\phi_\beta'}^{-1}(\Phi_\alpha) \\
      &3.\; \exists \delta_{\beta} \in \Pi_\delta(\phi_\beta, \mdp_\beta; \phi_\alpha):
      \delta_\beta(\cdot\mid s) \neq \delta_\beta(\cdot\mid t)
      \\
      &4.\; \forall \delta_{\beta'} \in \Pi_\delta(\phi_\beta', \mdp_\beta; \phi_\alpha):
      \delta_{\beta'}(\cdot\mid s) = \delta_{\beta'}(\cdot\mid t)
    \end{aligned}
  \end{gather*}
\end{theorem}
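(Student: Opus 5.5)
The plan is to argue by contraposition, reducing to the conditional control granularity--coarseness trade-off (Proposition~\ref{thm:mt_control_coarseness_tradeoff}), in the same way the single-task Theorem~\ref{thm:transfer_value_control_tradeoff} reduces to Proposition~\ref{thm:control_coarseness_tradeoff}. Assume $v_{\phi_\beta}(\mdp_\beta;\phi_\alpha) > v_{\phi_\beta'}(\mdp_\beta;\phi_\alpha)$. First I show that strict dominance forces $\Pi_\delta(\phi_\beta,\mdp_\beta;\phi_\alpha) \not\subseteq \Pi_\delta(\phi_\beta',\mdp_\beta;\phi_\alpha)$. Strict dominance gives some $\delta_\beta \in \Pi_\delta(\phi_\beta,\mdp_\beta;\phi_\alpha)$ whose value function strictly exceeds that of every $\delta_\beta' \in \Pi_\delta(\phi_\beta',\mdp_\beta;\phi_\alpha)$. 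If $\delta_\beta$ belonged to the latter set, then taking $\delta_\beta'=\delta_\beta$ would give $v_{\delta_\beta} > v_{\delta_\beta}$, a contradiction. Hence $\delta_\beta$ is a partially derived policy for $\phi_\beta$ that no partially derived policy for $\phi_\beta'$ equals.

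Second, I apply Proposition~\ref{thm:mt_control_coarseness_tradeoff} with the roles of $\phi_\beta$ and $\phi_\beta'$ swapped. From $\Pi_\delta(\phi_\beta',\cdot;\phi_\alpha)\not\supseteq\Pi_\delta(\phi_\beta,\cdot;\phi_\alpha)$ it follows that $\phi_\beta' \underset{\phi_\alpha}{\finereq} \phi_\beta$ fails. Negating the definition of conditional coarseness yields $s,t\in\sset_\beta$ with $\phi_\beta'(s)=\phi_\beta'(t)$ and $s,t\in{\phi_\beta'}^{-1}(\Phi_\alpha)$, which is condition~2, while the conclusion $\phi_\beta(s)=\phi_\beta(t)\wedge s,t\in\phi_\beta^{-1}(\Phi_\alpha)$ fails. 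The failure of that conjunction is exactly condition~1, once the ``not both in the cover'' disjunct is read as $\neg(s,t\in\phi_\beta^{-1}(\Phi_\alpha))$.

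Third, for condition~4, I take any $\delta_{\beta'}$ in $\Pi_\delta(\phi_\beta',\mdp_\beta;\phi_\alpha)$. It is in particular a derived policy of $\phi_\beta'$, so Proposition~\ref{thm:abstraction_control} (or directly the defining constraint $\delta_{\beta'}(\cdot|s)=\pi_{\phi_\alpha}(\cdot|\phi_\beta'(s))$ on the cover) gives $\delta_{\beta'}(\cdot|s)=\delta_{\beta'}(\cdot|t)$.

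The main obstacle is condition~3. I need some partially derived $\delta_\beta$ for $\phi_\beta$ with $\delta_\beta(\cdot|s)\neq\delta_\beta(\cdot|t)$, and this must hold for the particular pair $(s,t)$ extracted in the second step. If $\phi_\beta(s)\neq\phi_\beta(t)$ and both states lie in the cover, I choose an abstract policy $\pi_{\phi_\alpha}$ assigning different distributions to the distinct abstract states $\phi_\beta(s)$ and $\phi_\beta(t)$; this requires $|\aset|\ge 2$, which I would state as a standing assumption. If instead, say, $t$ lies outside $\phi_\beta^{-1}(\Phi_\alpha)$, then the partially derived policy is unconstrained at $t$ beyond being derived from $\phi_\beta$. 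I can therefore choose its value on the abstract state $\phi_\beta(t)\notin\Phi_\alpha$ to differ from whatever is prescribed at $s$. The delicate subcase is where $\phi_\beta(s)=\phi_\beta(t)$ with neither state in the cover, since then derivation forces equal actions. Here I would fall back on the first step: $\delta_\beta$ must differ from every $\phi_\beta'$-partially-derived policy somewhere. So I would select the pair $(s,t)$ more carefully, as a witness of the specific discrepancy exhibited by $\delta_\beta$, rather than as an arbitrary witness of non-coarseness. If that refinement fails, I would flag the gap and weaken condition~1 accordingly.
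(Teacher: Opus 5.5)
Your skeleton matches the paper's proof, which is only a pointer to the single-task argument. That argument obtains a partially derived policy for $\phi_\beta$ that is not partially derived for $\phi_\beta'$, invokes the coarseness proposition to get (1)--(2), and then asserts that (3)--(4) ``follow naturally''. Your first step is correct, and cleaner than the paper's Bellman-style induction. Your argument for (4) is also fine. The gap is (3), which you leave open.

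Write $C_\beta={\phi_\beta}^{-1}(\Phi_\alpha)$ and $C_\beta'={\phi_\beta'}^{-1}(\Phi_\alpha)$. The pair you extract in step 2, from the failure of ``$\phi_\beta'$ finer than $\phi_\beta$ given $\phi_\alpha$'', is an \emph{arbitrary} witness, and for such a witness (3) can genuinely fail. One bad case is the subcase you flag: $\phi_\beta(s)=\phi_\beta(t)$ with $s,t\notin C_\beta$, where policies required to be $\phi_\beta$-derived everywhere must agree. The negated definition is also witnessed by a diagonal pair $s=t\in C_\beta'\setminus C_\beta$. For that pair $\delta_\beta(\cdot\mid s)\neq\delta_\beta(\cdot\mid t)$ is impossible, and your case ``$t$ outside the cover'' breaks there too. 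Your ``more careful selection'' is only announced, with a fallback of weakening the statement, so as written (3) is unproved.

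That selection is the right idea, and carrying it out makes the coarseness proposition unnecessary. Read partially derived policies as constrained only on the cover, as in the first sentence of the definition. Then $\delta\in\Pi_\delta(\phi_\beta',\mathcal{M}_\beta;\phi_\alpha)$ iff $\delta$ is constant on every $\phi_\beta'$-class contained in $C_\beta'$. For the converse, set $\pi_{\phi_\alpha}$ on $\Phi_\alpha\cap\Phi_\beta'$ to these common values and arbitrarily elsewhere. Your step 1 therefore directly gives $s\neq t$ in $C_\beta'$ with $\phi_\beta'(s)=\phi_\beta'(t)$ and $\delta_\beta(\cdot\mid s)\neq\delta_\beta(\cdot\mid t)$. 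Each condition now follows:
\begin{itemize}
\item (2) holds by choice of the pair.
\item (3) holds with $\delta_\beta$ itself as witness.
\item (4) follows from $\delta_{\beta'}(\cdot\mid s)=\pi_{\phi_\alpha}(\cdot\mid\phi_\beta'(s))=\delta_{\beta'}(\cdot\mid t)$.
\item (1) holds because $\delta_\beta$, being partially derived for $\phi_\beta$, cannot separate two states of $C_\beta$ that $\phi_\beta$ merges.
\end{itemize}
No assumption $|\mathcal{A}|\geq 2$ is needed.

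You do have to commit to this reading. The literal set definition imposes $\Pi_\delta(\phi_\beta,\mathcal{M}_\beta;\phi_\alpha)\subseteq\Pi_\delta(\phi_\beta,\mathcal{M}_\beta)$. Under it the existential over $\pi_{\phi_\alpha}$ is always satisfiable, so the conditional sets collapse to the ordinary derived-policy sets, and the theorem fails. Take $\phi_\beta$ the identity and $\phi_\beta'$ constant with $\Phi_\beta'\cap\Phi_\alpha=\emptyset$, in an MDP where every state-independent policy is strictly worse than an optimal one. Strict dominance then holds, yet (2) asks for states in the empty set $C_\beta'$. The same example also breaks the direction of the conditional coarseness proposition you use in step 2. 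So under that reading no cleverer choice of pair could close your flagged subcase.
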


The proof of this theorem proceeds exactly like
that of
Theorem~\ref{thm:transfer_value_control_tradeoff}
with the added limitation imposed by transfer cover.
Analogous to the single task setting we derive the following corollary.

\begin{corollary}
  \label{thm:mt_transfer_value_control_tradeoff_corollary}
  Let
  $\phi_\alpha: \sset_\alpha \mapsto \asset_\alpha$,
  $\phi_\beta: \sset_\beta \mapsto \asset_\beta$,
  and
  $\phi_\beta': \sset_\beta \mapsto \asset_\beta$
  be state abstractions
  such that
  $\asset_\alpha \cup \asset_\beta$
  and such that
  $\phi_\beta$
  has
  \emph{strictly higher transfer value given $\phi_\alpha$}
  for an MDP
  $\mdp_\beta=\langle \sset_\beta, \aset, p_\beta, r_\beta \rangle$
  than
  $\phi_\beta'$.
  Then
  $\phi_\beta'$ can both be \emph{no finer given $\phi_\alpha$}
  than $\phi_\beta$
  and \emph{no more control granular given $\phi_\alpha$}
  than $\phi_\beta$.
  Formally:
  \begin{equation}
    v_{\phi_\beta}(\mdp_\beta; \phi_\alpha) > v_{\phi_\beta'}(\mdp_\beta; \phi_\alpha)
    \Longrightarrow
    \phi_\beta' \not \underset{\phi_\alpha}{\finereq} \phi_\beta
    \;\wedge\;
    \Pi_\delta(\phi_\beta, \mdp_\beta; \phi_\alpha)
    \not \subseteq
    \Pi_\delta(\phi_\beta', \mdp_\beta; \phi_\alpha)
  \end{equation}
  The following also holds.
  If $\phi_\beta$ is coarser
  and less control granular given $\phi_\alpha$
  than $\phi_\beta'$,
  it is impossible for
  $\phi_\beta$
  to have a higher transfer value
  given $\phi_\alpha$
  than
  $\phi_\beta'$.
  Formally:
  \begin{equation}
    \phi_\beta' \underset{\phi_\alpha}{\finereq} \phi_\beta
    \;\wedge\;
    \Pi_\delta(\phi_\beta, \mdp_\beta; \phi_\alpha)
    \subseteq
    \Pi_\delta(\phi_\beta', \mdp_\beta; \phi_\alpha)
    \Longrightarrow
    v_{\phi_\beta}(\mdp_\beta; \phi_\alpha) \not> v_{\phi_\beta'}(\mdp_\beta; \phi_\alpha)
  \end{equation}
\end{corollary}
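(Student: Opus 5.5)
We prove the first implication by contraposition. By Proposition~\ref{thm:mt_control_coarseness_tradeoff}, $\phi_\beta' \underset{\phi_\alpha}{\finereq} \phi_\beta$ holds exactly when $\Pi_\delta(\phi_\beta, \mdp_\beta; \phi_\alpha) \subseteq \Pi_\delta(\phi_\beta', \mdp_\beta; \phi_\alpha)$. So the two conjuncts on the right-hand side say the same thing, and it suffices to prove the single statement
\[
  \Pi_\delta(\phi_\beta, \mdp_\beta; \phi_\alpha)
  \subseteq
  \Pi_\delta(\phi_\beta', \mdp_\beta; \phi_\alpha)
  \;\Longrightarrow\;
  v_{\phi_\beta}(\mdp_\beta; \phi_\alpha) \not> v_{\phi_\beta'}(\mdp_\beta; \phi_\alpha).
\]
This statement is precisely the second formula of the corollary, read without its redundant coarseness conjunct. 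Its contrapositive, combined with the equivalence above, yields the first formula. The whole corollary therefore reduces to this one implication.

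We can obtain it in two ways. The first is to derive it from Theorem~\ref{thm:mt_transfer_value_control_tradeoff}. Suppose strict transfer value held. The theorem would give states $s,t$ together with a policy $\delta_\beta \in \Pi_\delta(\phi_\beta, \mdp_\beta; \phi_\alpha)$ satisfying $\delta_\beta(\cdot\mid s) \neq \delta_\beta(\cdot\mid t)$, while every $\delta_{\beta'} \in \Pi_\delta(\phi_\beta', \mdp_\beta; \phi_\alpha)$ satisfies $\delta_{\beta'}(\cdot\mid s) = \delta_{\beta'}(\cdot\mid t)$. If the inclusion held, $\delta_\beta$ would also lie in $\Pi_\delta(\phi_\beta', \mdp_\beta; \phi_\alpha)$, and so $\delta_\beta(\cdot\mid s) = \delta_\beta(\cdot\mid t)$. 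This contradicts conditions 3 and 4 of the theorem.

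The second way is direct, with no dependence on the theorem, and I would write it as the primary argument. Strict transfer value means there is some $\delta_\beta \in \Pi_\delta(\phi_\beta, \mdp_\beta; \phi_\alpha)$ whose value function strictly exceeds that of every $\delta' \in \Pi_\delta(\phi_\beta', \mdp_\beta; \phi_\alpha)$. By the inclusion, $\delta_\beta$ is itself such a $\delta'$. Taking $\delta' = \delta_\beta$ gives $v_{\delta_\beta} > v_{\delta_\beta}$, which is impossible.

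The only delicate point is what "strictly greater" means for value functions. I read it, as in the single-task definition, as $v_\delta \geq v_{\delta'}$ everywhere with strict inequality somewhere, or else as strict everywhere. Under either reading the self-comparison fails, because $v_{\delta_\beta} \geq v_{\delta_\beta}$ holds with equality at every state, so no strict inequality occurs anywhere. I expect this to be the main thing to state carefully; the rest is bookkeeping with the set inclusions. The argument mirrors the proof of Corollary~\ref{thm:transfer_value_control_tradeoff_corollary}, with $\Pi_\delta(\cdot,\mdp)$ replaced by the partially derived sets $\Pi_\delta(\cdot,\mdp_\beta;\phi_\alpha)$.
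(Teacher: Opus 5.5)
Your proof is correct, and your primary argument takes a genuinely different route from the paper's.

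The paper obtains this corollary ``analogously'' to the single-task one, in two steps:
\begin{itemize}
\item The first formula is read off as a weakening of Theorem~\ref{thm:mt_transfer_value_control_tradeoff}. Conditions 1--2 give that $\phi_\beta'$ is not finer given $\phi_\alpha$ than $\phi_\beta$. Conditions 3--4 give a policy in $\Pi_\delta(\phi_\beta, \mdp_\beta; \phi_\alpha)$ that lies outside $\Pi_\delta(\phi_\beta', \mdp_\beta; \phi_\alpha)$.
\item The second formula is taken as the contrapositive of the first.
\end{itemize}
You instead collapse the two conjuncts via Proposition~\ref{thm:mt_control_coarseness_tradeoff}. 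You then discharge all the value-theoretic content by irreflexivity of strict dominance.

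Your route makes the Theorem unnecessary. Its multi-task proof is omitted in the paper. Its single-task proof spends its first half establishing that the dominating policy is not derivable for $\phi'$, using an informal induction over reachable states and only deterministic policies. Your self-comparison gives that fact in one line.

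The one dependency you keep is the direction ``$\phi_\beta' \underset{\phi_\alpha}{\finereq} \phi_\beta$ implies $\Pi_\delta(\phi_\beta, \mdp_\beta; \phi_\alpha) \subseteq \Pi_\delta(\phi_\beta', \mdp_\beta; \phi_\alpha)$''. The paper does not really avoid it either: the single-task Theorem extracts its pair $s,t$ by invoking exactly this direction of Proposition~\ref{thm:control_coarseness_tradeoff}. In return, the paper's route presents the corollary as a weakening of a statement with more content, namely an explicit pair of states on which control must differ.

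Two small points:
\begin{itemize}
\item The second formula needs no proposition at all. It follows directly from your stronger implication ``inclusion implies no strict transfer value''.
\item The paper omits the proof of Proposition~\ref{thm:mt_control_coarseness_tradeoff}, so the single direction you use is the step to check if you write this out in full. It does hold when partially derived policies are constrained only on the transfer cover.
\end{itemize}
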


\paragraph{}
Ideally we'd have maximal transfer cover
and maximal transfer value.
Maximal transfer cover
means that a partially derived policy is
\emph{complete}:
behavior is transferred
for every single target state.
Nothing more needs to be learned for the target task.
Maximum transfer value
in addition means
that it is possible for
a derived policy to be optimal everywhere.
This is no guarantee that it will be
but it does guarantee that the optimal policy is not excluded
by means of policy derivation.
Having both maximal transfer cover and transfer value
is an ideal situation,
though impossible in all but the most trivial cases.
For some intuition behind this,
consider when transfer cover
is highest.
If one task is a subtask of another
then behavior might be transferred readily
and beneficially to the subtask.
In less trivial settings
where this is not the case,
there simply are no source states
from which we could substitute the behavior for every single target state.
It is always possible to arbitrarily increase transfer cover
by increasing overlap between the abstract state sets
$\Phi_\alpha$
and $\Phi_\beta$,
arbitrarily equating states across tasks.
However, this overlap is not necessarily beneficial,
as would be reflected in the transfer value.
In general, there is no free lunch;
there is no way for a given transfer setting
to maximise both transfer cover and transfer value.
That summarised,
we end this section with a definition of those state abstractions
that do in fact not sacrifice on transfer value
while still being as coarse as possible.

\begin{definition}[minimal control state abstraction]
  A state abstraction
  $\phi_\beta: \sset_\beta \mapsto \asset_\beta$
  in a \emph{minimal control state abstraction}
  for an MDP
  $\mdp_\beta=\langle \sset_\beta, \aset, p_\beta, r_\beta \rangle$
  given a state abstraction
  $\phi_\alpha: \sset_\alpha \mapsto \asset_\alpha$
  if $\phi_\beta$
  has \emph{maximal transfer value}
  given $\phi_\alpha$
  and there exists no state abstraction coarser given $\phi_\alpha$
  than $\phi_\beta$
  that has higher transfer value for $\mdpb$
  than $\phi_\beta$.
\end{definition}

As in the single task setting,
a \emph{minimal control state abstraction}
cannot be coarsened without negatively impacting transfer value.
Since multi-task coarsening can result from increasing transfer cover,
this means that a minimal control state abstraction
can also not increase its transfer cover
without affecting transfer value.

\paragraph{}
We have described here a framework to assess
state abstractions
and the opportunities for transfer
that they provide.
What we have not gone into detail about
is how to actually arrive at such state abstractions.
It is this topic that forms
the heart of
Section~\ref{sec:predictive_representations}
and this paper as a whole.
Before we get there however,
we will briefly introduce the basic setting of
\emph{hierarchical reinforcement learning},
which will be the framework in which we will leverage
our newly developed and assessed state abstractions.

\subsection{Skills as Options and Hierarchical Reinforcement Learning}%
\label{sec:general_skills}
Now that we have discussed state abstraction
at length,
it is time we discuss its counterpart:
\emph{action abstraction}.
A complete discussion
building on the state abstractions we develop in the rest of this paper
can be found in
Section~\ref{sec:skills_that_transfer}.
This section will be limited
to laying the foundations.

\paragraph{}
In this paper we
rely on the well-known formalism of
\emph{options}
\citep{Sutton1999},
useful for its versatility and lack of assumptions.
Options can be seen as macro-actions
that encapsulate a particular behavior.
In particular,
we focus on options that
describe
\emph{local}
and
\emph{reusable}
behavior.
To capture these notions
we refer to such options
as \emph{skills},
as skills in common parlance
describe a very specific
(and repeatedly applicable) behavior.

\paragraph{}
To formalize our notion of \emph{skills},
we extend the traditional
definition of
\emph{options}
with a
state set that the option operates on.
\begin{definition}[option]
  An option
  $\omega$
  is a tuple
  $\langle \sset_\omega, \pi_\omega, \beta_\omega \rangle$,
  with
  $\sset_\omega$
  the option's
  \emph{domain},
  and
  \begin{align*}
    &\pi_\omega: \sset_\omega \mapsto \probset(\aset) \\
    &\beta_\omega: \sset_\omega \mapsto \probset(\{0,1\})
  \end{align*}
  the option control policy and termination policy respectively.
\end{definition}
\paragraph{}
Given an MDP
$\mdp$
with action
set
$\aset$
and an option set
$\Omega = \left\lbrace \omega \mid \omega = \langle \sset_\omega, \pi_\omega, \beta_\omega \rangle \right\rbrace$,
we construct
an option-enabled MDP
(also known as a Semi-MDP; \citep{Sutton1999})
$\mdp_\Omega$ with action set
$\aset \cup \Omega$.
Intuitively,
the agent expands its action set with the set of options
and can use them interchangeably,
making no distinction between primitive action ($a \in \aset$)
or option.
Upon invocation,
an option's control policy
selects primitive actions
until a $1$ is sampled from its termination policy
indicating termination.
Upon termination,
control is passed back to the agent,
for whom only a single step has passed.

There is a form of action abstraction at play here.
While multiple primitive steps may have been taken
by the option,
to the agent it appears as only one action.
This action abstraction leads to a hierarchy,
with the option-enabled agent
initiating options on the higher level
and the option operating on the lower level.
This view of options inducing a hierarchy
has led to the term
\emph{hierarchical reinforcement learning}
(HRL)
\citep{Barto2003HRL}.

A skill is defined for a particular state set
-- its domain.
When its domain is chosen to be task-independent
the skill itself becomes usable across tasks.
The hard part is of course to find such a task-independent domain.
Skills go hand in hand with state abstraction,
the latter providing the abstract state set for a skill to operate over.
We will return to combining
action abstraction and state abstraction
in Section~\ref{sec:skills_that_transfer}
when we define options over our state abstractions
that we introduce in Section~\ref{sec:outcome_state_abstraction}.

\section{State Abstractions for Transfer}
\label{sec:predictive_representations}
Whereas Section~\ref{sec:general} dealt mainly with
state abstractions and different ways to describe and compare them,
this section deals mainly with
the construction of useful state abstractions.
`Useful', as discussed in the previous section,
generally means high transfer cover
and high transfer value,
but that too will receive more nuance in this section
as we explore how a state abstraction construction
impacts transfer value.
In particular,
in this section we propose
a new method for state abstraction.
Based on predictions of \emph{outcomes},
i.e. common characteristics between tasks,
(Section~\ref{sub:outcomes_that_signal_reward})
outcome equivalent state abstractions
come with strong guarantees
of optimality
when used with a particular class of MDPs
(Section~\ref{sec:outcome_state_abstraction}).
We finish with an illustration
of what our outcome-based state abstraction affords us
in Section~\ref{sec:illustration_of_a_domain},
showing that outcome equivalent state abstractions
can be coarsened to increase opportunity for transfer
while incurring a cost in optimality.
This neatly leads into Section~\ref{sec:skills_that_transfer}
which illustrates that this cost can be overcome,
meaning that you \emph{can}
have your cake and eat it too.

\subsection{Outcomes as Reward Signals}%
\label{sec:outcomes}
So far we have not formally
defined the notion of a
\emph{domain} of related tasks.
We want to be able to describe tasks
that on the surface have something in common,
where transitions across tasks
follow \emph{the same rules}.
In order to do so we first need a way to describe transitions.
We start off with \emph{remarkable transitions}:
transitions that are especially \emph{meaningful} to the task at hand.
It will then be possible
to describe transitions that are remarkable
in the same way across tasks
---
a step closer to describing related tasks.

We first introduce
the notion of
an \emph{outcome}.
Our notion of \emph{outcomes}
is inspired by the
homonymous \emph{outcomes}
of \citet{Sherstov2005}
that describe differences in state,
tracing back to outcomes in planning
\citep{Boutilier2001},
and more recently
by the reward decomposition
of \citet{Barreto2017}.

A transition can be described in terms of multiple outcomes,
each denoting the occurrence of something remarkable.
In particular,
remarkable here is taken to mean
`relevant to reward':
the occurrence of an outcome
signals the presence of reward.
Or, from a different perspective,
a transition with no outcomes
is a transition with no reward.

Just like rewards can be randomly distributed for a given transition,
so are outcomes.
Where we use the random variable
$R_{t+1}$ to describe reward for a transition,
we use
$\Sigma_{t+1}$
to do the same for the outcome distribution.
The difference is that while
$R_{t+1}$ ranges over real numbers,
$\Sigma_{t+1}$
describes a distribution over vectors of outcomes.
The motivation for this is that multiple things may be relevant for reward,
each in a different way.
We rarely use $R_{t+1}$ directly
and usually use the expected reward for a transition
$(s, a, s')$ instead,
denoted $r(s, a, s')$.
In the same vein we use
$\sigma(s, a, s')$
to denote the expected outcome
for that transition.
In addition,
for simplicity we assume
that the expected outcome is itself an outcome.
We will discuss later
how this outcome function can be obtained.

\begin{definition}[outcome]
  The random variable describing
  \emph{outcomes} at time ${t+1}$
  is denoted $\Sigma_{t+1}$
  and is defined over the \emph{outcome domain} $\Sigma$.
Given a transition $(s, a, s') \in \sset \times \aset \times \sset$,
its
\emph{expected outcomes}
are given by
$\sigma(s, a, s')~\in~\Sigma$,
with
$\sigma(s, a, s')$
defined as
$\expected[\Sigma_{t+1} \mid S_t=s, A_t=a, S_{t+1}=s']$.
The
\emph{outcome domain}
$\Sigma$
is
defined as
$\Sigma \subseteq \realnumbers^d$.
Finally,
the expected reward
of a transition
$(s, a, s') \in \sset \times \aset \times \sset$
can be decomposed as
a product of the expected outcomes of that transition
and a task-specific
\emph{reward weights vector}
$\rw \in \realnumbers^d$
which describes how
different outcomes affect the reward
for that task:
\begin{equation}
  r(s, a, s') = \sigma(s, a, s')^\top \rw%
  ,
  \label{eq:reward_factored}
\end{equation}
\end{definition}

If this holds for all transitions we can also write
$r = \sigma^\top \rw$
which we define to only hold when the above equality
holds for all transitions in the domain $\sset \times \aset \times \sset$.
Since we do not work with MDPs that have randomly distributed reward
given an exact transition $(s, a, s')$,
we instead refer to
\emph{expected outcomes}
as \emph{outcomes}.

\paragraph{}
An outcome's presence is indicated
by a non-zero element in
$\sigma(s, a, s')$.
Similarly,
a zero element means the outcome did not occur.
If all elements are $0$
it means the transition is not
\emph{remarkable}
in any way,
i.e. it is not relevant for reward.

\begin{example}
\label{sub:outcomes_that_signal_reward}
\begin{figure}[h]
  \centering
  \begin{minipage}[t]{.6\linewidth}
    \includegraphics[height=0.3\linewidth]{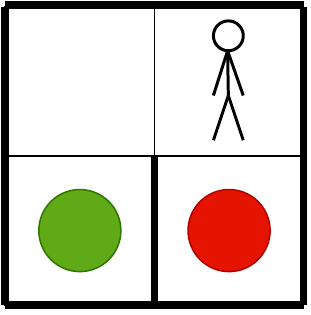}
    \hfill
    \includegraphics[height=0.3\linewidth]{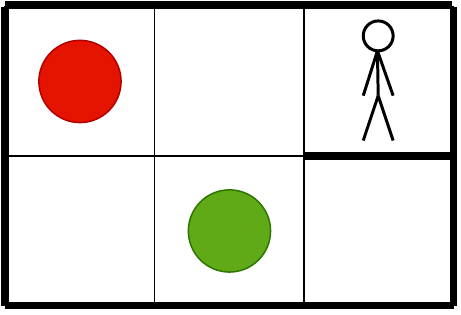}
    \hfill
  \end{minipage}%
    \captionof{figure}{Two gridworld tasks with different layouts
    that belong to the same domain.}
    \label{fig:gridworlds}
\end{figure}
For example,
\figref{fig:gridworlds}
describes two Gridworlds.
In both tasks
transitioning onto a green tile
yields a positive reward
of $10$ and terminates the episode.
Transitioning onto a red tile
incurs a negative reward
depending on the task at hand:
$-10$
in the first task
and $-1$
in the second task.
Both tasks seem to be composed of the same
\emph{elements};
red and green tiles
are present in both tasks
and both incur rewards,
albeit that the specific amount differs.
These elements are part of the language
of the domain that both tasks belong to
and outcomes are our tool
to describe them.

To continue the example,
we can describe the outcomes
for both tasks
as a binary-valued vector
\begin{equation*}
  \sigma(s, a, s') = (\texttt{green}(s'), \texttt{red}(s'))^\top,
\end{equation*}
where
$\texttt{green}(s)$
and
$\texttt{red}(s)$
take on a value of $1$
only if
the agent in state $s$
occupies a tile
of the corresponding colour.
That is,
there are two possible outcomes
and they describe where the agent ends up
after a transition.
In this example,
we have
$d=2$
because there are two outcomes
corresponding to the two different colors.
The reward weights vector for the first task
is
$w_{r_\alpha}=(10,-10)^\top$
and for the second task we have
$w_{r_\beta}=(10,-1)^\top$.
The difference between the reward weights
for both tasks
highlights the aforementioned difference in rewards;
the second task penalizes transitioning onto red cells
less heavily.
%

Transitions
with rewards
are not the only remarkable transitions
in this example.
Both tasks
contain impassable walls
and attempted movement in the direction of a wall will fail
in both tasks.
This, just like transitioning onto red or green cells,
is
also a kind of transition
that exists in both tasks
and is part of the
\emph{elements}
that make up the domain.
It may therefore be useful to expand the
outcome function
$\sigma$
to include other outcomes that are not relevant
to reward.
This irrelevance will show itself
in corresponding elements of $\rw$
which are set to $0$.
As a result,
the outcome function $\sigma$
can always be expanded
while maintaining correctness.
We will revisit this idea
of \emph{non-necessary outcomes}
later.
\end{example}

To summarize the above,
outcomes allow
describing reward functions of different tasks
in terms of invariant
commonalities.
These commonalities make up the language
of what we have so far informally referred to as
\emph{domains}.
We formalize a domain
for the purposes of our work as follows.

\begin{definition}[domain]
  A domain
  $\domain$
  is a set of tasks
  with the same
  \emph{outcome domain}
  $\Sigma$:
\begin{equation*}
  \domain \defas \{ \mdp |
  \mdp =\langle \sset, \aset, p, r \rangle
  \ \text{with} \ %
  r(s,a,s') = \sigma(s,a, s')^\top \rw
\}
\end{equation*}
with
$\sigma: \sset \times \aset \times \sset \mapsto \Sigma$
and $\rw \in \realnumbers^d$
specific to each task
$\mdp$.
\end{definition}

We can now rely on outcomes
to describe domains
in terms of domain-invariant commonalities.
However, they only help us to describe transitions
relevant for the one-step reward function.
Many transitions are unremarkable
and have no non-zero outcomes associated with them.
To overcome this
lack of descriptive power
we can talk about a sequence of outcomes
allowing us to go beyond a single time step:
specifically,
a sequence of ouctomes
given a sequence of actions
and a starting state.
Since such an outcome sequence is conditional
on future behavior (an action sequence),
this gives us a way to describe
the future of a state in terms of outcomes.
We formalize the following.

\begin{definition}[outcome sequence--test]
  Let
  $\mdp = \langle \sset, \aset, p, r \rangle$
  be an MDP
  and
  let
  $\sigma(s, a, S')$
  be the random variable describing the distribution over outcomes
  associated with the transition from state $s$ following action $a$,
  with $S'$ the random variable describing the resulting state.
  For a state $s_t \in \sset$
  and action sequence
  $\langle a \rangle = (a_1,\dots,a_n) \in \aset^n$,
  we term the resulting sequence of outcomes
  an
  \emph{outcome sequence--test}
  and denote it as
  \begin{equation}
    \sigmaseqrv (s_t, \langle a \rangle) \defas (\sigma(s_t, a_1, S_{t+1}),\dots, \sigma(S_{t+n-1}, a_n, S_{t+n}))
    \label{eq:sigma_sequence}
  \end{equation}
  Since the future states
  $S_{t+1},\dots,S_{t+n}$
  are random variables,
  the outcomes derived from those transitions
  are also random variables,
  and so is the resulting outcome sequence.
  Its distribution describes possible outcome sequences given an action sequence.
  To stress that it is a random variable,
  we use the blackboard character $\bbsigma$.
  Note that an \emph{outcome sequence--test}
  is effectively an \emph{outcome sequence} random variable
  that is conditioned on a state and action sequence.
  The probability that it takes on a value
  $(\sigma_1,...,\sigma_n) \in \Sigma$,
  denoted
  $p_\sigma((\sigma_1,...,\sigma_n) \mid s_t, \aseq)$
  to mirror the notation for the transition dynamics $p$,
  is defined as:
  \begin{equation*}
    p_\sigma((\sigma_1,...,\sigma_n) \mid s_t, \aseq)
    \defas
    \Pr_{p}\left(
      \sigmaseqrv(s_t, \aseq) = (\sigma_1,...,\sigma_n)
    \right)
  \end{equation*}
  Its computation can be expressed with the help of the shorthand
  $
  p_\sigma(\sigma_i \mid s, a, s') \defas
  \Pr(\Sigma_{t+1} = \sigma_i \mid S_t=s, A_t=a, S_{t+1}=s'))
      $,
      as:
  \begin{align*}
    \begin{split}
    p_\sigma((\sigma_1,...,\sigma_n) \mid s_t, \aseq)
      &=
      \sum_{s_{t+1} \in \sset} p(s_{t+1} \mid s_t, a_1)
      p_\sigma(\sigma_1 \mid s_{t}, a_1, s_{t+1})
      \\
      &\qquad \qquad
      \dots
      \sum_{s_{t+n} \in \sset}
      p(s_{t+n} \mid s_{t+n-1}, a_{n})
      p_\sigma(\sigma_n \mid s_{t+n-1}, a_n, s_{t+n})
    \end{split}
    \\
    \begin{split}
      &=
      \sum_{s_{t+1},...,s_{t+n} \in \sset^{n}}
      p(s_{t+1} \mid s_t, a_1)
      \dots
      p(s_{t+n} \mid s_{t+n-1}, a_{n})
      \\
      &\qquad \qquad
      p_\sigma(\sigma_1 \mid s_{t}, a_1, s_{t+1})
      \dots
      p_\sigma(\sigma_n \mid s_{t+n-1}, a_n, s_{t+n})
    \end{split}
  \end{align*}
\end{definition}

\begin{example}
\begin{figure}[h]
    \centering
    \includegraphics[width=.15\linewidth]{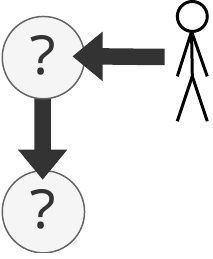}
    \caption{%
      An outcome sequence--test for the action sequence
      $(\texttt{left}, \texttt{down})$%
    .}
    \label{fig:gridworld_test}
\end{figure}
  We have already used
  outcomes
  to describe commonalities
  across tasks from the same domain.
  Consider now the outcome sequence--test
  $\sigmaseqrv(s, (\texttt{left}, \texttt{down}))$
  (depicted in
  \figref{fig:gridworld_test}),
  for the two states shown
  in our previous example in
  \figref{fig:gridworlds}.
  For both states
  the resulting outcome sequence--test
  assigns a probability of $1$
  to the outcome sequence
  $((0, 0), (1, 0))$.
  This sequence describes two transitions:
  the first transition
  resulting from the action
  \texttt{left}
  is unremarkable,
  denoted by $(0, 0)$,
  and the second transition
  resulting from the action
  \texttt{down}
  has a
  \texttt{green}
  outcome, denoted by $(1, 0)$.
\end{example}

The interesting thing about
the
outcome sequence--test
in the above example
is that it has the same distribution
over outcome sequences
for states
from different tasks.
In both states the same remarkable outcomes occur
after the same sequence of actions.
Building on outcomes,
outcome sequence--tests
give us a powerful tool to describe states
across tasks in a unified language.



\subsection{Outcome-Based State Abstractions That Transfer}
\label{sec:outcome_state_abstraction}
This section
develops state abstractions
based on our new notion of
\emph{outcomes}.
We will see that these state abstractions
lend themselves well for transfer,
though not all is straightforward.
By the end of this section we will have developed
a firm understanding of
outcome-based state abstractions
and their implications for transfer value,
as well as their applicability.

\paragraph{}
To start constructing new state abstractions,
consider first how outcome sequence--tests relate between states.
In particular,
when an outcome sequence--test
is identically distributed
for two different states and given action sequence,
we know that these two states
share the same distribution
over possible futures
for that particular action sequence
and
for all relevant purposes
-- that is, for reward.
For a collection of outcome sequence tests to all be identical for two different states, indicates that those states share the same distribution over relevant futures for any action sequence in the collection of tests.
In fact, we often only care about \emph{expected} reward
and need only focus on expected outcomes
and expected outcome sequences rather than the full distributions.

\begin{definition}[expected outcome sequence]
  Let
  $\mdp = \langle \sset, \aset, p, r \rangle$
  be an MDP
  with outcome function
  $\sigma$.
  The
  \emph{expected outcome sequence}
  for state $s \in \sset$
  and action sequence
  $\aseq = (a_1\dots a_n) \in \aset^n$,
  denoted
  $\sigmaseq(s, \aseq)$,
  is defined as the expected value of
  the outcome sequence--test
  $\sigmaseqrv(s, \aseq)$.
  Formally:
  \begin{align}
    \sigmaseq(s, \aseq)
    &\defas
    \expected_{p_\sigma} \left[
      \sigmaseqrv(s, a)
      \right]
    \\
  \intertext{
    For a continuous domain $\Sigma$,
    this can be expressed as
    the following n-dimensional volume differential:
  }
    \sigmaseq(s, \aseq)
    &\defas
    \int_{\Sigma}
    p_\sigma((\sigma_1,...,\sigma_n) \mid s_t, \aseq)
    (\sigma_1,...,\sigma_n)
    d^n(\sigma_1,...,\sigma_n)
    \label{eq:expected_outcome_seq_integral}
  \end{align}
  Since the outcome sequence--test $\sigmaseqrv(s, \aseq)$
  is a random variable describing a distribution over sequences,
  the \emph{expected outcome sequence}
  $\sigmaseq(s, \aseq)$
  is simply a sequence.
  Note the difference in notation to highlight this difference in nature.
\end{definition}

As stated before,
the interesting thing about expected outcome sequences
is that states can be described in an
entirely task-agnostic way
yet in a way that is sufficiently rich.
Two states
can now be equated based on
their expected outcome sequences,
despite being from different tasks
with entirely different transition and reward dynamics, and will respond in the same way (up to expectation) as one another in response to the action sequences included in the collection of expected outcome sequences. If the collection of outcome sequences includes all possible sequences of actions from the states, then equality of the expected outcome sequences corresponds to identical, action dependent, reward relevant futures for the pair of states.

\begin{definition}[outcome equivalence]
  Given two
  MDPs
  $\mdp_\alpha=\langle \sset_\alpha, \aset, p_\alpha, r_\alpha \rangle$
  and
  $\mdp_\beta = \langle \sset_\beta, \aset, p_\beta, r_\beta \rangle$
  with the same action set
  $\aset$,
  two states $s_\alpha \in \sseta$
  and $s_\beta \in \ssetb$
  are
  \emph{outcome equivalent}
  if and only if for every possible sequence of actions
  $\langle a \rangle = a_1,\dots,a_n$,
  the expected outcome sequences are equal.
  That is, if and only if
  $
  \forall n \in \naturalnumbers,
  \;
  \forall \langle a \rangle \in \aset^n,
  \forall s_\alpha \in \sseta, s_\beta \in \ssetb
  $:
  \begin{gather}
    \sigmaseqa(s_\alpha, \aseq) = \sigmaseqb(s_\beta, \aseq)
    \label{eq:outcome equivalence}
    \\
    \Updownarrow
    \nonumber
  \end{gather}
  \begin{equation*}
    \expected_{p_{\alpha}}\left[\sigmaseqrv_\alpha(s_\alpha, \aseq)\right]
    =
    \expected_{p_{\beta}}[\sigmaseqrv_\beta(s_\beta, \aseq)]
  \end{equation*}
\end{definition}

\paragraph{}
We will now develop the tools necessary
to compare expected outcome sequences
which will eventually allow us to construct
state abstractions based on them.
Note that
it is not in fact
necessary to consider
all action sequences
as there is a lot of duplicate
information between expected outcome sequences.
Proposition~\ref{thm:expected_outcome_sequence_sequence_expectations}
describes a basic simplification.
Proposition~\ref{thm:outcome_equivalent_length}
takes this further and allows us to simplify notation.

\begin{proposition}
  \label{thm:expected_outcome_sequence_sequence_expectations}
    Let
  $\mdp = \langle \sset, \aset, p, r \rangle$
  be an MDP
  with outcome function
  $\sigma$.
  The
  \emph{expected outcome sequence}
  for state $s \in \sset$
  and action sequence
  $\aseq = (a_1\dots a_n) \in \aset^n$
  can also be written as the sequence of
  expected outcomes after action sequences
  $(a_1\dots a_i)$ for
  $i$ ranging from $1$ to $n$.
  Formally:
  \begin{multline}
    \sigmaseq(s, \aseq)
    =
    \big(
    \expected_{p}
    \left[
      \sigma(S_t, A_t, S_{t+1}) \mid S_t=s, A_t=a_1
    \right]
    ,\dots,
    \\
    \expected_{p}
    \left[
      \sigma(S_{t+n-1}, A_{t+n-1}, S_{t+n}) \mid
      S_t=s, A_t=a_1,\dots,
      A_{t+n-1}=a_n
    \right]
  \big)
  \label{eq:expected_outcome_sequence_sequence_expectations}
  \end{multline}

\end{proposition}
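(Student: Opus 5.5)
The plan is to use the fact that the expectation of a random sequence (a random vector in $\Sigma^n \subseteq \realnumbers^{dn}$) is taken componentwise. We then identify the $i$-th component's expectation with the conditional expectation in \eqref{eq:expected_outcome_sequence_sequence_expectations}. Concretely, starting from the integral form \eqref{eq:expected_outcome_seq_integral} (or its sum analogue in the discrete case), the integrand $(\sigma_1,\dots,\sigma_n)$ is a tuple. So the $i$-th coordinate of $\sigmaseq(s,\aseq)$ is
\[
\int_{\Sigma^n} p_\sigma((\sigma_1,\dots,\sigma_n)\mid s,\aseq)\,\sigma_i \, d^n(\sigma_1,\dots,\sigma_n).
\]
This is simply linearity of expectation applied coordinatewise.

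Next we marginalise over the other coordinates. We substitute the explicit product expansion of $p_\sigma$ given in the definition of the outcome sequence--test. This writes $p_\sigma$ as a sum over $s_{t+1},\dots,s_{t+n}$ of the transition probabilities $p(s_{t+k}\mid s_{t+k-1},a_k)$ times the factors $p_\sigma(\sigma_k\mid s_{t+k-1},a_k,s_{t+k})$. We exchange sum and integral (all terms are nonnegative, or we assume absolute integrability). Then for $k\neq i$ each factor $p_\sigma(\sigma_k\mid\cdot)$ integrates to $1$. For $k>i$, the transition factors $p(s_{t+k}\mid s_{t+k-1},a_k)$ then sum to $1$ over $s_{t+k}$, eliminating the tail of the sequence. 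What remains is
\[
\sum_{s_{t+1},\dots,s_{t+i}} \prod_{k=1}^{i} p(s_{t+k}\mid s_{t+k-1},a_k)\int_\Sigma p_\sigma(\sigma_i\mid s_{t+i-1},a_i,s_{t+i})\,\sigma_i\,d\sigma_i .
\]

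Finally we recognise this expression. The inner integral is $\sigma(s_{t+i-1},a_i,s_{t+i})$ by definition of the expected outcome. The product of transition probabilities is the joint law of $(S_{t+1},\dots,S_{t+i})$ given $S_t=s$ and the actions $A_t=a_1,\dots,A_{t+i-1}=a_i$. The whole expression is therefore $\expected_p[\sigma(S_{t+i-1},A_{t+i-1},S_{t+i})\mid S_t=s,A_t=a_1,\dots,A_{t+i-1}=a_i]$, which matches the $i$-th entry of \eqref{eq:expected_outcome_sequence_sequence_expectations}. Ranging $i$ from $1$ to $n$ gives the claim.

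The main obstacle is the marginalisation step: keeping the nested-sum bookkeeping straight and justifying the interchange of summation and integration. The latter needs $\Sigma$-valued outcomes with finite expectation; the state set is discrete, so only the integral over $\Sigma$ needs care. I would handle this by first treating $\Sigma$ as discrete, where everything is a finite or absolutely convergent sum, and then noting that the continuous case follows by Fubini under the implicit integrability assumption.
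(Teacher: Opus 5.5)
Your proposal is correct and follows essentially the same route as the paper's proof. Both take the expectation coordinatewise, expand $p_\sigma$ as a sum over intermediate states of products of transition probabilities and outcome densities, integrate out the outcome factors for $k\neq i$, sum out the transitions after step $i$, and recognise the remaining inner integral as $\sigma(s_{t+i-1},a_i,s_{t+i})$. Your explicit justification of the sum--integral interchange, your integration domain $\Sigma^n$ rather than $\Sigma$, and your action indexing $A_{t+i-1}=a_i$ (which avoids the paper's $A_{t+i}$ slip) are small gains in rigour but do not change the argument.
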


\seeproof{expected_outcome_sequence_sequence_expectations}

\begin{proposition}
  \label{thm:outcome_equivalent_length}
    Given two
  MDPs
  $\mdp_\alpha=\langle \sset_\alpha, \aset, p_\alpha, r_\alpha \rangle$
  and
  $\mdp_\beta = \langle \sset_\beta, \aset, p_\beta, r_\beta \rangle$,
  two states
  $s_\alpha \in \sseta, s_\beta \in \ssetb$,
  and an action sequence
  $(a_1,\dots,a_{n-1}, a_n)$
  such that
  \begin{equation*}
    \sigmaseqa (s_\alpha, (a_1,\dots,a_{n-1}, a_n)) =
    \sigmaseqb (s_\beta, (a_1,\dots,a_{n-1}, a_n))
  \end{equation*}
  Then it also holds that
  \begin{equation*}
    \sigmaseqa (s_\alpha, (a_1,\dots,a_{n-1})) =
    \sigmaseqb (s_\beta, (a_1,\dots,a_{n-1}))
  \end{equation*}
  In other words,
  if two outcome sequence tests
  for $s_\alpha$ and $s_\beta$
  are the same for some
  length-$n$ action sequence,
  then the two outcome sequence tests
  for $s_\alpha$ and $s_\beta$
  constructed from
  the action subsequence of length $n-1$
  starting from $a_1$
  are also equal.

\end{proposition}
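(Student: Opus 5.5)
The approach is to reduce the statement to a componentwise comparison of sequences, using Proposition~\ref{thm:expected_outcome_sequence_sequence_expectations}. By that proposition, for either MDP $\mdp_x$ ($x\in\{\alpha,\beta\}$) the expected outcome sequence $\sigmaseq_x(s_x,(a_1,\dots,a_n))$ is an $n$-tuple. Its $i$-th entry is
\[
e^{x}_i(s_x; a_1,\dots,a_i) \defas \expected_{p_x}\left[\sigma_x(S_{t+i-1},A_{t+i-1},S_{t+i}) \mid S_t=s_x, A_t=a_1,\dots,A_{t+i-1}=a_i\right].
\]
The first step is to observe that this $i$-th entry depends only on the prefix $(a_1,\dots,a_i)$ of the action sequence, and not on $a_{i+1},\dots,a_n$. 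Indeed, the conditioning event of the $i$-th expectation fixes only the first $i$ actions. The later actions are taken after $S_{t+i}$ is realised, so they cannot influence the joint law of $(S_t,\dots,S_{t+i})$ under the Markov dynamics $p_x$.

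To make this rigorous without leaning on the informal conditioning, I would write the $i$-th entry explicitly as a sum over trajectories, mirroring the formula for $p_\sigma$ in the definition of outcome sequence--tests. The sum runs over $s_{t+1},\dots,s_{t+n}$, with the product of transition probabilities $p_x(s_{t+k}\mid s_{t+k-1},a_k)$ multiplying $\sigma_x(s_{t+i-1},a_i,s_{t+i})$. The sums over $s_{t+i+1},\dots,s_{t+n}$ can then be carried out innermost-first. Each one is $\sum_{s'}p_x(s'\mid s,a_k)=1$, so the expression collapses to the length-$i$ trajectory sum. This shows that the $i$-th entry of $\sigmaseq_x(s_x,(a_1,\dots,a_n))$ equals the $i$-th entry of $\sigmaseq_x(s_x,(a_1,\dots,a_{m}))$ for every $m$ with $i\le m\le n$.

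With this prefix-consistency in hand, the conclusion is immediate. The hypothesis states that the two $n$-tuples for $\alpha$ and $\beta$ are equal, which by definition of tuple equality means they agree entrywise for $i=1,\dots,n$. In particular they agree for $i=1,\dots,n-1$. By the previous step these entries are exactly the entries of $\sigmaseqa(s_\alpha,(a_1,\dots,a_{n-1}))$ and $\sigmaseqb(s_\beta,(a_1,\dots,a_{n-1}))$ respectively, so those two $(n-1)$-tuples are equal.

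The only real obstacle is the marginalisation step, since everything else is bookkeeping. There is a subtlety when $\Sigma$ is continuous and the expected outcome sequence is given by the integral in \eqref{eq:expected_outcome_seq_integral}. In that case I would first invoke linearity of expectation to pass to the per-step form of Proposition~\ref{thm:expected_outcome_sequence_sequence_expectations}. I would then perform the marginalisation over states only, where the sums are finite because $\sset$ is discrete. This avoids any measure-theoretic issues with the outcome distribution.
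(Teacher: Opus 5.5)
Your proposal is correct and follows essentially the same route as the paper, which also invokes Proposition~\ref{thm:expected_outcome_sequence_sequence_expectations} and then observes that the length-$(n-1)$ expected outcome sequences are the initial subsequences of the length-$n$ ones, so entrywise equality of the latter gives equality of the former. The only difference is that you spell out the prefix-consistency by summing out $s_{t+i+1},\dots,s_{t+n}$, whereas the paper takes it as evident; that same marginalisation already appears inside its proof of Proposition~\ref{thm:expected_outcome_sequence_sequence_expectations}.
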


\seeproof{outcome_equivalent_length}

\paragraph{}
  Proposition~\ref{thm:outcome_equivalent_length}
  allows for some simplification
  as we do not need to consider all possible action sequences
  of all possible lengths
  but instead only the longest action sequences
  of which all shorter action sequences are prefixes.
  Let
  $\aset^n$
  be the labelled set
  of all $n$-length action sequences,
  each uniquely labelled so as to distinguish between action sequences.
  We rely on the notion of labelled sets
  because equality between labelled sets is evaluated
  pairwise on the elements based on labels,
  to contrast with equality between regular sets.
  If we now overload
  $\langle \sigma \rangle (s, \langle \aset^n \rangle)$
  to denote the {labelled set}
  of outcome sequence tests,
  we can simplify
  the outcome equivalence condition
  in
  Eq.~\eqref{eq:outcome equivalence},
  making use of
  Proposition~\ref{thm:outcome_equivalent_length},
  to
  \begin{equation}
    \lim_{n \rightarrow \infty} \bigg( \langle \sigma \rangle (s, \aset^n) = \langle \sigma \rangle (s', \aset^n) \bigg)
  \end{equation}
  By abuse of notation
  we might again simplify this to
  \begin{equation}
  \langle \sigma \rangle (s, \aset^\infty) = \langle \sigma \rangle (s', \aset^\infty)
  \end{equation}

What this means is that two states are outcome equivalent
if their outcome sequence tests
constructed from all infinite action sequences
are \emph{outcome equivalent}.
Thanks to
Proposition~\ref{thm:outcome_equivalent_length}
this is the same as
relying on all outcome sequence tests
\emph{of all lengths}.

\paragraph{}
Now that we have a definition of outcome equivalence between states
and some simplified notation to go with it,
it is time to extend this in a principled way
to more states across tasks.
The method for this bridging of tasks
is of course state abstraction.

\begin{definition}
  Let
  $\mdp = \langle \sset, \aset, p, r \rangle$
  be an MDP
  and let
  $\equi$
  be a binary relation over
  $\sset$
  such that
  for all
  $s, s' \in \sset$,
  $\equi(s, s')$
  holds if
  $s$
  and
  $s'$
  are
  \emph{outcome equivalent}.
  Then
  we say that
  $\equi$
  is
  \emph{outcome equivalent}.
  If in addition
  $\equi$
  is induced by a single state abstraction $\phi$,
  we also say that $\phi$ is
  \emph{outcome equivalent}.
\end{definition}

\begin{proposition}
  A binary relation that is \emph{outcome equivalent} is an \emph{equivalence relation}.
\end{proposition}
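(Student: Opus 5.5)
The plan is to check the three defining properties of an equivalence relation (reflexivity, symmetry and transitivity) directly from the definition of outcome equivalence. The key point is that outcome equivalence of $s$ and $s'$ is characterised entirely by equality of a family of well-defined objects, namely the expected outcome sequences $\sigmaseq(s, \aseq)$ for every $n \in \naturalnumbers$ and every $\aseq \in \aset^n$. Equivalently, using the simplified notation following Proposition~\ref{thm:outcome_equivalent_length}, it is equality of the labelled sets $\langle \sigma \rangle (s, \aset^\infty)$ and $\langle \sigma \rangle (s', \aset^\infty)$. So I will first make explicit that, for a single MDP $\mdp$ with outcome function $\sigma$, we can define the map
\[
  f: \sset \to \prod_{n\in\naturalnumbers}\prod_{\aseq\in\aset^n} \Sigma^n,
  \qquad
  f(s) \defas \big(\sigmaseq(s,\aseq)\big)_{n, \aseq}.
\]
Two states are then outcome equivalent iff $f(s) = f(s')$. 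Well-definedness of $f$ follows because each $\sigmaseq(s,\aseq)$ is a deterministic function of $s$ and $\aseq$, obtained as an expectation under the fixed dynamics $p$; Proposition~\ref{thm:expected_outcome_sequence_sequence_expectations} gives it componentwise.

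Next I will invoke the general fact that any relation of the form $\{(s,s') : f(s) = f(s')\}$ is an equivalence relation, since it is the kernel of a function and inherits its properties from equality on the codomain. Reflexivity holds because $f(s) = f(s)$. Symmetry holds because $f(s)=f(s')$ implies $f(s')=f(s)$. Transitivity holds because $f(s)=f(s')$ and $f(s')=f(s'')$ give $f(s)=f(s'')$. Each step reduces to the corresponding property of equality of sequences in $\realnumbers^d$, applied pointwise for every $n$ and $\aseq$. This also shows that such a relation is exactly the one induced by the state abstraction $\phi \defas f$, matching the form of Eq.~\eqref{eq:abstraction_equivalence_relation}.

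The only real subtlety, and the step I expect to need care, is the phrasing of the definition: a relation $\equi$ is called outcome equivalent if $\equi(s,s')$ holds \emph{if} $s,s'$ are outcome equivalent. For the proposition to be true, this must be read as ``if and only if''; otherwise an arbitrary superset of the relation, which need not be transitive, would also qualify. I will state explicitly that I adopt the biconditional reading, so that $\equi$ is exactly the kernel of $f$. I will also note that the cross-task version works the same way. For states in $\sseta \cup \ssetb$, define $f$ using $\sigmaseqa$ on $\sseta$ and $\sigmaseqb$ on $\ssetb$, as was done for $\equi_{\phia,\phib}$. The kernel argument is unchanged, because all expected outcome sequences live in the common space $\Sigma^n$ shared by tasks in a domain.
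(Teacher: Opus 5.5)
Your proof is correct and follows essentially the paper's route. The paper's one-line proof likewise observes that outcome equivalence is defined by equality of expected outcome sequences, which is itself an equivalence relation; your kernel-of-$f$ formulation just makes this explicit. Your caveat about the definition's ``if'' is well taken, and the paper's proof tacitly reads the relation as outcome equivalence itself. Under either one-directional reading the claim can fail for relations not induced by an abstraction: the superset reading, or the subset reading that the paper's later treatment of the identity abstraction as outcome equivalent suggests.
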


\begin{proof}
  Since outcome equivalence is based on equality between sequences
  which is itself an eqivalence relation,
  it too is an equivalence relation.
  \qed
\end{proof}

\paragraph{}
An outcome equivalent state abstraction
is based on
infinite action sequences
which makes it impractical for any real problem.
Nonetheless,
we continue with the proposed construction
because of its theoretical merit,
which we will discuss shortly.
Later in this paper
in Section~\ref{sec:skills_that_transfer}
we will consider again finite action sequences
to make outcome equivalent state abstractions more tractable.
Even with a limit on the length of the action sequences in consideration
however,
having access to future expected rewards
is not a small assumption.
Because the main goal of this paper is to investigate
the usefulness of outcome equivalent state abstractions,
we continue under the assumption that we have access
to expected outcomes.
We resume this discussion
on how to actually attain such a state abstraction
in practice
in Section~\ref{sec:conclusion}.

\paragraph{}
Aside from single-task state abstractions,
we can also extend the notion of outcome equivalence to apply to
relations
that span multiple tasks.
This is a particularly useful view for transfer.
On one side there is the inter ground task view:
states from different tasks with the same expected outcome sequences
are outcome equivalent,
despite being from different tasks.
Outcome equivalence relates states
in a meaningful way
that before had nothing obviously in common.
The other view is between an abstract task
and ground tasks.
An abstract state from an abstract MDP
is outcome equivalent
with its ground states
and all states from other MDPs
that are outcome equivalent with its ground states.
This is a powerful property:
MDP abstraction based on outcome equivalent state abstractions
preserves outcome equivalence.
We will see later why this is such an interesting property.
In short though,
in particular kinds of MDPs
an optimal policy for an abstract MDP
based on outcome equivalent state abstractions
will also result in optimal derived policies,
which is a great guarantee to have for transfer.

We formalize the findings for the two views of inter ground tasks
and abstract to ground tasks.

\begin{proposition}
  \label{thm:outcome_equivalent_relation_from_abstractions}
  Let
  $\mdp_\alpha=\langle \sset_\alpha, \aset, p_\alpha, r_\alpha, \gamma \rangle$
  and
  $\mdp_\beta = \langle \sset_\beta, \aset, p_\beta, r_\beta, \gamma \rangle$
  be two MDPs
  with outcome functions
  $\sigma_\alpha$ and $\sigma_\beta$ respectively
  and
  let
  $\phi_\alpha: \sset_\alpha \mapsto \asset_\alpha$
  and
  $\phi_\beta: \sset_\beta \mapsto \asset_\beta$
  be two
  \emph{outcome equivalent} state abstractions.
  Then the equivalence relation
  $\equi_{\phia,\phib}$
  over
  $\sseta \cup \ssetb$
  implied by
  $\phia$ and $\phib$
  is itself
  \emph{outcome equivalent}.
\end{proposition}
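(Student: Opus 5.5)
The plan is to check the defining condition of an outcome equivalent relation directly: for every pair of states related by $\equi_{\phia,\phib}$, the two states must be outcome equivalent. Recall that, with $\phi(s)$ taken to be $\phia(s)$ for $s\in\sseta$ and $\phib(s)$ otherwise, $\equi_{\phia,\phib}(s,s')$ holds exactly when $\phi(s)=\phi(s')$. Since this relation is induced by the single labelling function $\phi$ on $\sseta\cup\ssetb$, it is already an equivalence relation. What remains is to show that every related pair $(s,s')$ has equal expected outcome sequences $\sigmaseq(s,\aseq)=\sigmaseq(s',\aseq)$ for all $n$ and all $\aseq\in\aset^n$. Here each sequence is computed in its own MDP, $\mdpa$ for states of $\sseta$ and $\mdpb$ for states of $\ssetb$.

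I would split into three cases by where $s$ and $s'$ live.
\begin{itemize}
\item If both lie in $\sseta$, then $\phia(s)=\phia(s')$. Outcome equivalence of $\phia$ gives the required equality immediately, and the case of both in $\ssetb$ is symmetric using $\phib$.
\item If $s\in\sseta$ and $s'\in\ssetb$, then $\phia(s)=\phib(s')$. This is a shared abstract state in $\Phia\cap\Phib$, and the two states come from different tasks.
\end{itemize}

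The cross-task case is the crux. The hypothesis only says that $\phia$ and $\phib$ are each outcome equivalent \emph{within} their own task. Two independently chosen labellings could coincide on a label without the underlying ground states sharing futures. So the claim needs an interpretation of ``outcome equivalent state abstraction'' in which the abstract label itself determines the labelled set $\langle\sigma\rangle(\cdot,\aset^\infty)$. I would make this explicit: an outcome equivalent abstraction is one whose abstract state $\astate$ carries, or is identified with, the common expected outcome sequences of its ground states. This is the reading suggested by the surrounding text, where abstract states are ``outcome equivalent with their ground states'', and it is also the construction used later via predictions of outcomes.

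Under that reading, the cross-task case becomes a chain of equalities for every $\aseq$:
\[
\sigmaseqa(s,\aseq)=\sigmaseq(\phia(s),\aseq)=\sigmaseq(\phib(s'),\aseq)=\sigmaseqb(s',\aseq).
\]
The first and last steps hold because each abstraction preserves the expected outcome sequences of its ground states. The middle step is just equality of labels. By Proposition~\ref{thm:outcome_equivalent_length}, it suffices to do this for sequences of arbitrary length (equivalently, the $\aset^\infty$ form), which covers all prefixes.

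The main obstacle is therefore not a calculation but this definitional point: justifying that equality of abstract labels across tasks transfers outcome equivalence. I would state the identification explicitly as the convention under which the proposition holds. I would also note that without it the statement fails, for instance when labels are permuted arbitrarily in one task while each abstraction remains outcome equivalent within its own task. Once the convention is in place, all three cases close, and $\equi_{\phia,\phib}$ is outcome equivalent.
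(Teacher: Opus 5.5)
Your proposal is correct and follows the paper's approach: the paper's entire proof is that the claim ``follows immediately from the definition of outcome equivalence between states of different tasks'', and your two within-task cases plus the cross-task chain of equalities are exactly that unfolding. Your explicit convention, identifying an abstract state with the labelled set $\langle \sigma \rangle (\cdot, \aset^\infty)$ as produced by Algorithm~\ref{alg:outcome_equivalent_abstraction}, is a hypothesis the paper relies on only implicitly, and your relabelling counterexample correctly shows the statement fails without it (consistent with the paper's own remark that relabelling abstract states preserves coarseness but changes transfer cover).
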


\begin{proof}[\ref{thm:outcome_equivalent_relation_from_abstractions}]
  This follows immediately from the definition of
  outcome equivalence between states of different tasks.
  \qed
\end{proof}

\begin{proposition}
  \label{thm:outcome_equivalent_abstract_ground}
    Let
  $\mdp_\alpha=\langle \sset_\alpha, \aset, p_\alpha, r_\alpha, \gamma \rangle$
  and
  $\mdp_\beta = \langle \sset_\beta, \aset, p_\beta, r_\beta, \gamma \rangle$
  be two MDPs
  with respective outcome functions
  $\sigma_\alpha$ and $\sigma_\beta$
  and reward weightings
  $\rwa$ and $\rwb$
  such that
  $r_\alpha = \sigma_\alpha^\top \rwa$
  and
  $r_\beta = \sigma_\beta^\top \rwb$,
  and
  let
  $\phi_\alpha: \sset_\alpha \mapsto \asset_\alpha$
  and
  $\phi_\beta: \sset_\beta \mapsto \asset_\beta$
  be two
  \emph{outcome equivalent} state abstractions.
  Then all abstract states
  in $\Phia$
  are outcome equivalent with their ground states
  with respect to
  any abstract MDP
  $\mdp_\phia=\langle \Phia, \aset, p_\phia, r_\phib, \gamma \rangle$
  created from $\mdpa$ and $\phia$
  with outcome function
  $\sigma_\phia$
  and reward weighting
  $w_{r_\phia}$
  such that
  $r_\phia = \sigma_\phia^\top w_{r_\phia}$,
  given that the reward weighting
  for that abstract MDP is the same
  as those of the ground MDPs,
  i.e.
  $w_{r_\phia} = \rwb = \rwa$.
  Formally,
  $\forall \astate \in \Phia, \forall s_\beta \in \phib^{-1}(\astate)$:
  \begin{equation}
    \sigmaseq_\phia (\astate, \aset^\infty) = \sigmaseqb (s_\beta, \aset^\infty)
  \end{equation}

\end{proposition}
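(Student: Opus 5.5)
The plan is to show first that the abstract MDP $\mdp_\phia$ is outcome equivalent to $\mdpa$ state-wise, i.e. $\sigmaseq_\phia(\astate,\aset^\infty) = \sigmaseqa(s_\alpha,\aset^\infty)$ for every $s_\alpha \in \phiai(\astate)$. The statement about $s_\beta$ then follows by transitivity. Indeed, an abstract state $\astate\in\Phia$ with $s_\beta\in\phib^{-1}(\astate)$ is shared by some $s_\alpha\in\phiai(\astate)$. By Proposition~\ref{thm:outcome_equivalent_relation_from_abstractions}, $s_\alpha$ and $s_\beta$ are outcome equivalent, and outcome equivalence is an equivalence relation. By Proposition~\ref{thm:outcome_equivalent_length}, it suffices to compare expected outcome sequences for action sequences of every finite length $n$. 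By Proposition~\ref{thm:expected_outcome_sequence_sequence_expectations}, this reduces to comparing, for each $i\le n$, the expected outcome at step $i$ after the prefix $(a_1,\dots,a_i)$.

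The core step is an induction on $n$ over the following claim: for every $\astate$, every $s\in\phiai(\astate)$ and every $\aseq\in\aset^n$, the expected outcome sequence $\sigmaseq_\phia(\astate,\aseq)$ equals $\sigmaseqa(s,\aseq)$.

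First I fix the abstract outcome function. Since the statement only gives $r_\phia=\sigma_\phia^\top w_{r_\phia}$, I choose it analogously to Eq.~\eqref{eq:reformulated_reward}: the expected abstract outcome is $\sum_{s\in\phiai(\astate)}w_\phia(s)\,\expected[\sigma_\alpha(s,a,S')]$. With weights summing to one, this is consistent with $r_\phia$ because $w_{r_\phia}=\rwa$ and reward is linear in outcomes.

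For the base case $n=1$, all ground states in $\phiai(\astate)$ share the same one-step expected outcome, since they are outcome equivalent. The convex combination therefore equals that common value.

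For the inductive step, I write the expected outcome sequence for $(a_1,\aseq')$ as the first-step expected outcome followed by $\sum_{s'} p(s'\mid s,a_1)\,\sigmaseqa(s',\aseq')$, with the sum normalised appropriately per component. I then group the $s'$ by their abstract class $\astate'$. Within a class, $\sigmaseqa(s',\aseq')$ is constant by outcome equivalence, and by the induction hypothesis it equals $\sigmaseq_\phia(\astate',\aseq')$. This gives
\[
\sigmaseqa(s,(a_1,\aseq'))_{2:} = \sum_{\astate'} \Pr(\phia(S')=\astate'\mid s,a_1)\,\sigmaseq_\phia(\astate',\aseq').
\]
The same expansion in $\mdp_\phia$ yields $\sum_{\astate'}p_\phia(\astate'\mid\astate,a_1)\,\sigmaseq_\phia(\astate',\aseq')$, where by Eq.~\eqref{eq:reformulated_transition} $p_\phia$ is the $w_\phia$-mixture of the ground class-transition probabilities.

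The main obstacle is that equality of the two expressions does not follow from the two $\astate'$-distributions being equal. Different ground states $s\in\phiai(\astate)$ may have different class-transition probabilities, and only the mixtures $\sum_{\astate'}\Pr(\astate'\mid s,a_1)\,\sigmaseq_\phia(\astate',\aseq')$ are guaranteed equal across $s$, again by outcome equivalence. I would handle this by noting that $\sigmaseqa(s,(a_1,\aseq'))$ is itself the same for all $s$ in the class. Hence the $w_\phia$-weighted average over $s$ of these mixtures is that same common value. Linearity then lets me exchange the average and the sum over $\astate'$, which produces exactly the abstract expression. Thus the arbitrariness of $w_\phia$ is harmless precisely because we only need expectations and these are linear. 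This completes the induction, and taking all $n$ (via Proposition~\ref{thm:outcome_equivalent_length}) gives the claimed equality on $\aset^\infty$.
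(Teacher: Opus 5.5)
Your argument is correct. Its core matches the paper's: expand one step of $M_{\phi_\alpha}$ as the $w_{\phi_\alpha}$-mixture of ground transitions, replace the successors' continuations by the inductive hypothesis, and collapse the mixture because all ground states of a class share their outcome sequences and the weights sum to one. Around that core you differ from the paper in three places, each in your favour. First, the paper runs a backward induction over abstract states, starting from absorbing terminal states and assuming $\Phi_\alpha$ finite. That presupposes a well-founded successor relation and becomes circular once the dynamics contain a cycle (a self-loop from walking into a wall already suffices). Your induction on the length $n$ of the action sequence is well-founded for arbitrary dynamics and needs no finiteness assumption. Second, the paper expands $p_{\phi_\alpha}$ directly over $\phi_\beta^{-1}(\cdot)$ with $p_\beta$ and puts the weights $w_{\phi_\alpha}$ on $\beta$-states, although $M_{\phi_\alpha}$ is built from $M_\alpha$. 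You do the computation in $M_\alpha$, where the mixture formula actually holds, and only then move to $M_\beta$ by transitivity; both arguments take the cross-task link from Proposition~\ref{thm:outcome_equivalent_relation_from_abstractions} as given. Third, for one-step outcomes the paper equates two expressions for $r_{\phi_\alpha}(\bar\phi,a)$ and concludes that the expected outcome vectors coincide. That does not follow from equality of their inner products with the reward weights. Your explicit choice of $\sigma_{\phi_\alpha}$ as the $w_{\phi_\alpha}$-mixture of ground expected outcomes makes that step immediate. Some such specification is genuinely needed, since $r_{\phi_\alpha}=\sigma_{\phi_\alpha}^\top w_{r_{\phi_\alpha}}$ does not in general determine $\sigma_{\phi_\alpha}$. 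The paper's route gives a single induction stated directly for $\beta$-states. Yours gives validity for cyclic dynamics and a clean split between the within-task and cross-task steps.
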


\seeproof{outcome_equivalent_abstract_ground}

\subsubsection*{Outcome equivalent construction}
For a given MDP
there are many outcome equivalent state abstractions,
the identity abstraction being the finest of them.
There is one straightforward way to construct
the coarsest outcome equivalent state abstraction,
which we will from here on refer to as the
\emph{minimal outcome equivalent state abstraction},
which maps together all states that have the same expected outcome sequences.
This construction assigns to each state
a labelled set of expected outcome sequences
as that state's abstraction.
The labels of this set are the action sequences,
the elements corresponding to those labels
are the expected outcome sequences.
Algorithm~\ref{alg:outcome_equivalent_abstraction}
lists this simple iteration.

\begin{algorithm}[ht]
  \caption{minimal outcome equivalent state abstraction construction}
  \label{alg:outcome_equivalent_abstraction}
  \begin{algorithmic}
    \State \textbf{input}
    MDP $\mdp = \langle \sset, \aset, p, r \rangle$
    \State initialize $\phi: \sset \mapsto \emptyset$
    \For{$s \in \sset$}
      \State $\phi(s) \gets \langle \sigma \rangle (s, \aset^\infty)$
    \EndFor
  \end{algorithmic}
\end{algorithm}

\paragraph{}
Given state abstractions
that are constructed by labelling states
with their outcome sequence tests
(according to Algorithm~\ref{alg:outcome_equivalent_abstraction}),
an equivalence relation that
equates these states across
tasks naturally follows.
This equivalence relation
is again outcome equivalent
(Proposition~\ref{thm:outcome_equivalent_relation_from_abstractions}).
This construction is part of what makes
outcome equivalence such a useful property:
a task-specific construction that is agnostic of other tasks
still results in a meaningful equivalence relation across tasks,
one that equates states with the same future outcomes.

\subsection{Transfer Optimality}
\label{sec:transfer_optimality}
Given an outcome equivalent state abstraction,
we need to be able to construct an abstract MDP
that we can then solve.
Many constructions are possible
as each choice for the abstract
transition and reward dynamics
$p_\phi$ and $r_\phi$
results in a different abstract MDP.
However,
not all abstract MDPs are created equally.
While the preceding propositions have shown that some properties
are not affected by the choice of abstract MDP
when using an outcome equivalent abstraction,
this does not go for all properties.
Eventually
we want to learn an policy for the abstract MDP
and translate it to the ground MDP.
Ideally, a good abstract policy leads to a good ground policy.
What follows is an example
to illustrate the problems we face
when employing an outcome equivalent state abstraction.
This will naturally lead to a new definition
of when transfer is really \emph{optimal}.

\begin{example}
  \figref{fig:mt_outcome_not_value_equivalent}
  depicts two MDPs
  $\mdp_\alpha$
  and
  $\mdp_\beta$.
  The respective abstractions $\phi_\alpha: \sset_\alpha \to \Phi_\alpha$ and $\phi_\beta: \sset_\beta \to \Phi_\beta$ are such that $\phi_\alpha(\alpha_0) = \phi_\beta(\beta_0) = \bar{\phi}_0$ (in other words states $\alpha_0$ and $\beta_0$  are mapped to the same abstract states) and all other states are mapped to unique abstract states (not depicted).
  It is then the case that $\Phi_\alpha \cap \Phi_\beta = \{\bar{\phi}_0\}$ and the transfer cover $\phi^{-1}_\beta(\Phi_\alpha) = \{\beta_0\}$.

  The resulting equivalence relation
  $\equi_{\phi_\alpha,\phi_\beta}$
  is outcome equivalent.
  To confirm that this is the case,
  we need only ensure that the two states $\alpha_0$ (in $M_\alpha$) and $\beta_0$ (in $M_\beta$) are outcome equivalent. For this,
  it is sufficient
  to inspect
  the resulting expected outcome sequences
  for action sequences
  $(b)$,
  $(a, b)$,
  and
  $(a, a)$
  for both states
  $\alpha_0$
  and
  $\beta_0$,
  and see that they are the same for both.
  As before,
  we will consider transfer
  from an abstract MDP
  $\mdp_{\phi_\alpha}$
  to
  $\mdp_\beta$.
  We will also see that this transfer can be sub-optimal.

\begin{figure}[h]
  \centering
  \begin{subfigure}[t]{.49\textwidth}
    \centering
    \includegraphics[height=4cm]{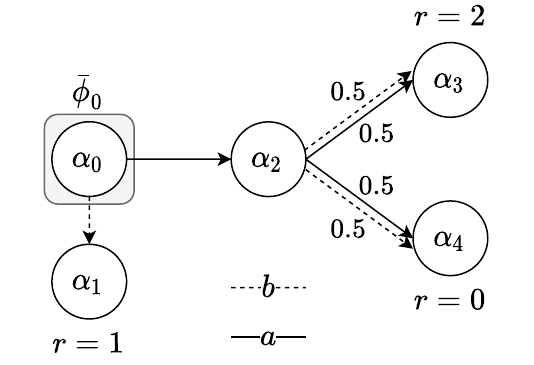}
    \caption{}
    \label{fig:mt_outcome_not_value_equivalent_a}
  \end{subfigure}
  \hfill
  \begin{subfigure}[t]{.49\textwidth}
    \centering
    \includegraphics[height=4cm]{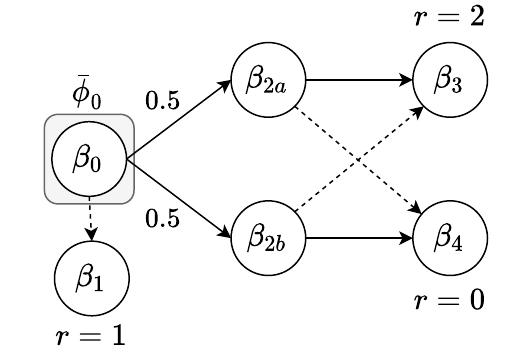}
    \caption{}
    \label{fig:mt_outcome_not_value_equivalent_b}
  \end{subfigure}
  \caption{}
  \label{fig:mt_outcome_not_value_equivalent}
\end{figure}

  Since $\phia$ is no coarser than the identity state abstraction,
  the abstract MDP
  $\mdp_{\phi_\alpha}$
  looks exactly the same as
  $\mdp_\alpha$
  with its state set replaced
  by $\Phia$,
  the image of the injective state abstraction $\phi_\alpha$.
  Interestingly,
  there are an inifite number of optimal policies
  as for all policies
  $\pi_{\phi_\alpha}$
  it holds that
  $v_{\pi_{\phi_\alpha}}(\bar\phi_0)=1$,
  regardless of the choice of distribution over actions $a$ and $b$
  at state $\astate_0$.
  Let us then consider the optimal policy
  $\pi^*_{\phi_\alpha}$
  such that
  $\pi^*_{\phi_\alpha}(b \mid \bar\phi_0) = 1$
  --
  that is,
  the policy
  that takes action $b$
  in state $\bar\phi_0$.

  We can now construct a partially derived policy
  $\pidb: \sset_\beta \mapsto \probset(\aset)$
  from $\phi_\alpha$
  for $\mdp_\beta$.
  Since the overlap
  between
  $\Phi_\alpha$
  and
  $\Phi_\beta$
  is the singleton
  $\{ \astate_0 \}$,
  the only constraint for this partially derived policy
  is
  \begin{equation*}
    \pidb(b \mid \beta_0) =
    \pi^*_{\phi_\alpha}(b \mid \astate_0)
    = 1%
    .
  \end{equation*}
  There are a few partially derived policies
  that all satisfy this constraint
  yet all differ only trivially
  as the MDP halts after an initial
  action $b$.
  Again,
  for all such derived policies
  $\pidb$
  it holds that
  $v_{\delta}(\beta_0)=1$.
  And herein lies \emph{the first problem}.
  There exists a policy
  for $\mdp_\beta$
  that always manages to reach state
  $\beta_3$
  and gain a reward of $2$.
  This policy is optimal
  with a value of $2$
  at state $\beta_0$.
  This is a real problem:
  we have reached an inferior policy on $M_\beta$
  despite deriving this from an optimal abstract policy
  based on an abstraction $\phia$
where the transfer cover was purely on outcome equivalent states.
  Therefore, although it might seem that outcome equivalence places quite sensible (and strict) conditions that should lead naturally to transfer (identical sequences of actions from equivalent states have the same expected outcome), they are not sufficient to guarantee optimal transfer. In this example, it is a consequence of the fact that the expected reward from $\alpha_2$ in $M_\alpha$ is the same for either action $a$ or $b$, is also the same as the expected reward from the uncertain superposition of states $\beta_{2a}$ and $\beta_{2b}$ induced by moving from $\beta_0$ in $M_\beta$, again for either action $a$ or $b$, and that both situations are reached by performing $a$ from equivalence class $\bar{\phi}$. However, in $M_\beta$ this uncertain superposition of states is resolved by the agent observing the true state, thus having a greater opportunity to control its future, than an agent in $M_\alpha$. Note that this is quite a carefully constructed pathological case and it is by no means clear that this situation arises frequently in any domain we may wish to investigate.

  \paragraph{}
  There is also a second problem.
  We could instead start off with the abstract policy
  $\pi^*_{\phi_\alpha}$
  such that
  $\pi^*_{\phi_\alpha}(a \mid \cdot) = 1$.
  That is,
  it always picks action $a$,
  which is also optimal for
  $\mdp_\phia$.
  Again,
  any partially derived policy
  $\pidb$
  for $\mdp_\beta$
  is only constrained
  for state $\beta_0$.
  This time around
  the optimal policy
  with
  $v_{\pi^*}(\beta_0) = 2$
  can indeed be derived
  from
  $\mdp_\phia$.
  Even so,
  its values can not be easily transferred
  from the abstract policy,
  as
  \begin{equation*}
  v_{\pi^*}(\beta_0) = 2
  \neq
  v_{\pi^*}(\astate_0) = 1%
  .
  \end{equation*}
  Even when the optimal policy can be
  derived from an optimal abstract policy
  in the case of an
  outcome equivalent relation,
  there is no straightforward way
  to transfer the value function
  of the abstract policy.
  The value function of the derived policy
  would need to be
  recomputed completely
  which further hinders transfer.
\end{example}

This example motivates the property
that we will call
\emph{transfer optimality}.
It requires two practical concepts that
we will define first,
the need for which has been demonstrated by the example.
First,
we need to extend the definition of
\emph{transfer value}
of a state abstraction to
transfer value of
an abstract MDP.
Second,
we require a notion of
\emph{value compatibility}
between abstract policies and their derived policies.

\begin{definition}[abstract MDP guaranteed transfer value]
  Let
  $\phi_\alpha: \sset_\alpha \mapsto \asset_\alpha$
  and
  $\phi_\beta: \sset_\beta \mapsto \asset_\beta$
  be two state abstractions
  such that
  $\asset_\alpha \cup \asset_\beta$.
  In addition,
  let
  $\mdp_\beta=\langle \sset_\beta, \aset, p_\beta, r_\beta \rangle$
  be an MDP
  and
  let
  $\mdp_{\phi_\alpha}=\langle \Phi_\alpha, \aset, p_{\phi_\alpha}, r_{\phi_\alpha}, \gamma \rangle$
  and
  $\mdp_{\phi_\alpha}'=\langle \Phi_\alpha, \aset, p_{\phi_\alpha}', r_{\phi_\alpha}', \gamma \rangle$
  be two abstract MDPs.
  We say that
  $\mdp_{\phi_\alpha}$
  has \emph{greater guaranteed transfer value for $\mdp_\beta$}
  than
  $\mdp_{\phi_\alpha}'$,
  denoted
  $
    v_{\mdp_{\phi_\alpha}}
    \underset{\mdpb,\phib}{\geq}
    v_{\mdp_{\phi_\alpha}'}
  $,
  if each of the optimal policies of
  $\mdp_{\phi_\alpha}$
  has a partially derived policy
  for $\mdp_\beta$
  that is at least as good as some policy
  derived from the optimal policies of
  $\mdp_{\phi_\alpha}'$.
  Formally,
  $
    v_{\mdp_{\phi_\alpha}}
    \underset{\mdpb,\phib}{\geq}
    v_{\mdp_{\phi_\alpha}'}
  $
  if
  the derived policies
  of all optimal abstract policies
  $\pi^*_\phia$
  have a value greater than
  the derived policies of any non-optimal abstract policies
  $\bar{\pi}_\phia$:
  \begin{equation*}
    \forall \delta \in \Pi_\delta(\phib, \mdpb; \pi^*_\phia):
    \forall \delta' \in \Pi_\delta(\phib, \mdpb; \bar{\pi}_\phia):
    v_\delta(\cdot; \mdpb) \geq v_{\delta'}(\cdot; \mdpb)
  \end{equation*}
  If the above inequality is strict
  for all $\pi_\phia$,
  we say that
  $\mdp_{\phi_\alpha}$
  has \emph{strictly greater guaranteed transfer value}
  than
  $\mdp_{\phi_\alpha}'$.
\end{definition}

What \emph{guaranteed transfer value}
really captures is how useful the abstract policies
for a given abstract MDP are in terms of transfer.
Importantly,
it takes into account all optimal abstract policies.
This is important because
different policies,
while identically valued in the abstract MDP,
need not result in derived policies with the same value functions.
It is perfectly possible for one optimal abstract policy
to result in a derived policy that is optimal
and another optimal abstract policy to result
in a derived policy that is sub-optimal.

\begin{definition}[value compatibility]
  Let
  $\phi_\alpha: \sset_\alpha \mapsto \asset_\alpha$
  and
  $\phi_\beta: \sset_\beta \mapsto \asset_\beta$
  be two state abstractions
  such that
  $\asset_\alpha \cup \asset_\beta$.
  In addition,
  let
  $\mdp_\beta=\langle \sset_\beta, \aset, p_\beta, r_\beta, \gamma \rangle$
  be an MDP
  and
  let
  $\mdp_{\phi_\alpha}=\langle \Phi_\alpha, \aset, p_{\phi_\alpha}, r_{\phi_\alpha}, \gamma \rangle$
  be an abstract MDP.
  Given an abstract policy
  $\pi_{\phi_\alpha}: \Phi_\alpha \mapsto \probset(\aset)$
  and its partially derived policy
  $\derived a\beta: \sset_\beta \mapsto \probset(\aset)$,
  their value functions
  $v_{\pi_{\phi_\alpha}}$
  and
  $v_{\delta_\beta}$
  are defined to be \emph{compatible}
  with respect to
  $\mdp_{ \phi_\alpha }$ and $\mdp_\beta$
  if it holds
  for the transfer cover
  $\forall s_\beta \in \phi_\beta^{-1}(\Phi_\alpha)$
  that:
  \begin{equation*}
    v_{\delta_\beta}(s_\beta; \mdp_\beta) = v_{\pi_{\phi_\alpha}}(\phi_\beta(s_\beta); \mdp_{\phi_\alpha})
  \end{equation*}
  If this holds
  for all partially derived policies
  $\delta_\beta$,
  we also say that
  $\pi_{\phi_\alpha}$
  is
  \emph{derived value--compatible}
  with respect to
  $\mdp_{ \phi_\alpha }$ and $\mdp_\beta$.
\end{definition}

\begin{definition}[transfer optimality]
  Let
  $\mdp_\alpha=\langle \sset_\alpha, \aset, p_\alpha, r_\alpha, \gamma \rangle$
  and
  $\mdp_\beta = \langle \sset_\beta, \aset, p_\beta, r_\beta, \gamma \rangle$
  be two MDPs,
  let
  $\phi_\alpha: \sset_\alpha \mapsto \asset_\alpha$
  and
  $\phi_\beta: \sset_\beta \mapsto \asset_\beta$
  be two state abstractions,
  and let
  $\mdp_{\phi_\alpha}=\langle \Phi_\alpha, \aset, p_{\phi_\alpha}, r_{\phi_\alpha}, \gamma \rangle$
  be an abstract MDP constructed
  from $\mdp_\alpha$ and $\phi$.
  We say that
  $\mdp_{\phi_\alpha}$
  is \emph{transfer optimal}
  for $\mdp_\beta$
  if
  \begin{enumerate}
    \item
      all optimal abstract policies
      $\pi^*_{\phi_\alpha}: \Phi_\alpha \mapsto \probset(\aset)$
      of
      $\mdp_{\phi_\alpha}$
      are \emph{derived value--compatible}
      with respect to
      $\mdp_\beta$; and
    \item
      the \emph{guaranteed transfer value}
      of
      $\mdp_{\phi_\alpha}$
      for $\mdp_\beta$
      is maximal
      (i.e.
      the optimal policy for $\mdp_\beta$ can be partially derived from
      every optimal abstract policy for $\mdp_{\phi_\alpha}$).
  \end{enumerate}
\end{definition}

Transfer optimality captures
the two things that were missing in the preceding example,
both important for transfer.
It might seem that the second property
is subsumed by the first,
for if all optimal abstract policies
are \emph{derived value--compatible},
every derived policy must have the same value function.
While that is the case,
there is no guarantee that the optimal value function for an abstract MDP
matches the optimal value function of the ground MDP.
The ground MDP could have a greater value function still.
The second property covers that scenario.

\paragraph{}
\comment{post snip}
Still,
for all its practicality of construction,
an outcome equivalence relation
does not guarantee transfer optimality.
This is a matter of \emph{control granularity}
--
an outcome equivalent state abstraction
can be overly coarse,
relinquishing the benefit of added control
and sacrificing transfer value as a result
(such as the example already seen in \figref{fig:mt_outcome_not_value_equivalent}).
Still, it is possible to find scenarios where
outcome equivalence is sufficient for transfer optimality.
This occurs when expected outcomes are sufficient
to base optimal behavior on,
rather than having to react to each state in turn
(as in the MDP in \figref{fig:mt_outcome_not_value_equivalent_a}).
This would correspond to a
\emph{planning perspective}:
an outcome equivalence relation
will be optimal in those cases where
all behavior can be planned in advance,
i.e. when a sequence of
actions, at the start decided on and then executed,
is as good as a reactive policy
which considers each state before acting.
A great number of tasks fall into this category,
excepting those where uncertainty in transitions
play a role,
and even then many would be
\emph{plannable}.

\begin{definition}[plannable]
  An MDP
  $\mdp$
  is
  \emph{plannable}
  if
  any policy can be approximated
  by a single sequence of actions.
  That is, if for any policy
  $\pi$
  it holds that
  for all states
  $s \in \sset$
  and all positive integers $n$
  there exists an action sequence
  $(a_1,...,a_n) \in \aset^{n}$
  such that the following holds:
  \begin{align}
    v_{\pi}(s)
        &=
    \expected_{p} \left[ \sum^{n}_{k=1} \gamma^{k-1} R_{t+k} \relmiddle| S_t=s, A_t=a_1, ..., A_{t+n-1}=a_n\right]
  \end{align}
\end{definition}

In other words,
an MDP
is \emph{plannable}
if the added reactivity of a policy
is not required to attain optimality.
For example,
all deterministic MDPs are plannable.
An example of a task that is
\emph{not plannable}
is depicted in
\figref{fig:mt_outcome_not_value_equivalent_b}.

Outcome equivalent state abstractions
are particularly useful
when it comes to plannable MDPs.
When transferring between plannable MDPs
by means of an outcome equivalence relation,
there exists a strong theoretical guarantee
of \emph{transfer optimality}.

\begin{theorem}
  \label{thm:plannable_outcome_equivalent_transfer_optimal}
    Let
  $\mdpa=\langle \sset_\alpha, \aset, p_\alpha, r_\alpha, \gamma \rangle$
  and
  $\mdpb= \langle \sset_\beta, \aset, p_\beta, r_\beta, \gamma \rangle$,
  be two \emph{plannable} MDPs
  with respective outcome functions
  $\sigma_\alpha$ and $\sigma_\beta$
  and reward weightings
  $\rwa$ and $\rwb$
  such that
  $r_\alpha = \sigma_\alpha^\top \rwa$
  and
  $r_\beta = \sigma_\beta^\top \rwb$,
  and let
  $\phi_\alpha: \sset_\alpha \mapsto \asset_\alpha$
  and
  $\phi_\beta: \sset_\beta \mapsto \asset_\beta$
  be two outcome equivalent state abstractions state abstractions.
  Then any abstract MDP
  $\mdp_\phia=\langle \Phia, \aset, p_\phia, r_\phib, \gamma \rangle$
  created from $\mdpa$ and $\phia$
  with outcome function
  $\sigma_\phia$
  and reward weighting
  $w_{r_\phia}$
  such that
  $r_\phia = \sigma_\phia^\top w_{r_\phia}$
  and
  such that
  $w_{r_\phia}=\rwa=\rwb$,
  is \emph{transfer optimal} for $\mdpb$.

\end{theorem}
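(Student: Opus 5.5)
The plan is to reduce everything to a statement about open-loop action sequences, where outcome equivalence gives us exact equality of expected rewards. First I would express $n$-step expected returns of action sequences through expected outcome sequences. By Proposition~\ref{thm:expected_outcome_sequence_sequence_expectations}, the $k$-th element of $\sigmaseq(s,\aseq)$ is the expected outcome at step $k$ under the open-loop sequence $\aseq$. Linearity of expectation together with $r=\sigma^\top \rw$ then gives
\[
\expected\left[\textstyle\sum_{k=1}^n \gamma^{k-1}R_{t+k} \relmiddle| S_t=s, A_{t:t+n-1}=\aseq\right] = \textstyle\sum_{k=1}^n \gamma^{k-1}\,\sigmaseq(s,\aseq)_k^\top \rw .
\]
By Proposition~\ref{thm:outcome_equivalent_abstract_ground}, every abstract state $\astate\in\Phia$ has the same expected outcome sequences as each $s_\beta\in\phib^{-1}(\astate)$. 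The reward weights also coincide ($w_{r_\phia}=\rwb$). So every open-loop $n$-step return from $\astate$ in $\mdp_\phia$ equals the corresponding return from $s_\beta$ in $\mdpb$, for every $n$ and every action sequence.

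Second, I would use plannability to pass from sequences to policies. In $\mdpb$, plannability says every policy value $v_\pi(s_\beta)$ is attained by some action sequence. The return of any action sequence is itself attained by a policy: a time-dependent one in general, but its value is bounded by $v^*$. Hence $v^*(s_\beta;\mdpb)$ equals the supremum over action sequences of their returns, taking the limit $n\to\infty$ using discounting. The same argument applies in the abstract MDP, assuming plannability transfers to $\mdp_\phia$; I would argue this through the outcome equivalence to $\mdpa$ and the fact that the abstract outcome sequences match. Combined with the first step, this gives $v^*(\astate;\mdp_\phia)=v^*(s_\beta;\mdpb)$ on the transfer cover.

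Third, I would establish value compatibility and maximal guaranteed transfer value. Take an optimal abstract policy $\pi^*_\phia$ and any partially derived $\delta_\beta$. Plannability applied to $\pi^*_\phia$ and to $\delta_\beta$ represents each value as the return of some action sequence. The crux is that the derived policy takes identical action distributions from $s_\beta$ as $\pi^*_\phia$ does from $\phib(s_\beta)$. I would argue step by step that the successor state distributions, viewed through outcome-equivalence classes, match. Successors of outcome equivalent states under the same action should be outcome equivalent in distribution, because the continuation sequences $\sigmaseq(s,(a)\cdot\aseq')$ decompose into a first outcome plus expected continuations. The conclusion is $v_{\delta_\beta}(s_\beta)=v_{\pi^*_\phia}(\phib(s_\beta))=v^*(s_\beta;\mdpb)$. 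That gives derived value-compatibility, and it also shows that the optimal $\mdpb$ value is achieved on the cover, which is maximal guaranteed transfer value.

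The main obstacle is this third step. Outcome equivalence constrains only expectations under open-loop sequences, not the reactive, state-by-state behaviour of $\delta_\beta$. The example in \figref{fig:mt_outcome_not_value_equivalent} shows that this gap is exactly what breaks without plannability. My first attempt would be to use plannability in both MDPs to replace each policy by an action sequence with equal value, and then invoke the first step directly. If the sequence chosen for $\delta_\beta$ differs from that for $\pi^*_\phia$, I would sandwich both values between $v^*$ on each side, using the equality of optimal values from the second step.
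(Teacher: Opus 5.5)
Steps~1 and~2 are sound. Step~2 assumes plannability of $M_{\phi_\alpha}$, which you only sketch, but the repair below makes it unnecessary. Identifying $v^*(\bar\phi;M_{\phi_\alpha})=v^*(s_\beta;M_\beta)$ through suprema of open-loop returns is also a more direct handle on condition~2 than the paper's contradiction argument. But Step~3 does not go through.

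Write $\mathrm{ret}_M(s,\langle a\rangle)$ for the expected discounted return of the plan $\langle a\rangle$ from $s$ in $M$. Plannability only gives $v_{\delta_\beta}(s_\beta)=\mathrm{ret}_{M_\beta}(s_\beta,\langle a\rangle')$ for \emph{some} plan $\langle a\rangle'$. Step~1 turns this into $\mathrm{ret}_{M_{\phi_\alpha}}(\bar\phi,\langle a\rangle')\le v^*(\bar\phi;M_{\phi_\alpha})$. Plannability and open-loop equalities only ever bound a policy's value \emph{above} by an optimal value; nothing in them bounds $v_{\delta_\beta}(s_\beta)$ from below. So the sandwich yields $v_{\delta_\beta}(s_\beta)\le v^*(s_\beta;M_\beta)$ but never the reverse inequality, and the reverse inequality is precisely the statement that the derived policy is optimal.

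The step-by-step fallback rests on a false claim. Outcome equivalence of $s$ and $s'$ gives $\mathbb{E}[\langle\sigma\rangle(S',\langle a\rangle')\mid s,a]=\mathbb{E}[\langle\sigma\rangle(S',\langle a\rangle')\mid s',a]$. That is an equality of \emph{mixtures} over successors, not equality of the laws of the successors' classes; Figure~\ref{fig:mt_outcome_not_value_equivalent} is exactly such a case. In addition, $\delta_\beta$ is pinned down only on the transfer cover $\phi_\beta^{-1}(\Phi_\alpha)$. Once a trajectory leaves the cover, its actions are arbitrary.

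This cannot be closed from the stated hypotheses alone. Let $M_\alpha$ and $M_\beta$ be copies (states $\alpha_i$, resp.\ $\beta_i$) of the deterministic chain $x_0\to x_1\to x_\top$, with $d=1$ and all reward weights equal to $1$. Every action at $x_0$ has outcome $0$; at $x_1$, action $a$ has outcome $1$ and $b$ has outcome $0$; $x_\top$ is absorbing. Take $\phi_\alpha,\phi_\beta$ injective, with $\phi_\alpha(\alpha_0)=\phi_\beta(\beta_0)$ the only shared label. Both MDPs are deterministic, hence plannable in the paper's sense. Both abstractions are outcome equivalent, and the only cross-task identification is between outcome-equivalent states. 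Yet every optimal abstract policy plays $a$ at $\phi_\alpha(\alpha_1)$ and has value $\gamma$ at the shared label. The partially derived policy that plays $b$ at $\beta_1$, which lies outside the cover, has value $0$ at $\beta_0$, so condition~1 of transfer optimality fails.

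The missing ingredient is a hypothesis that makes $\phi_\beta(S_t)$ under $\delta_\beta$ a copy of the abstract chain: for every cover state $s_\beta$, action $a$ and $\bar\phi'$, $\Pr(\phi_\beta(S')=\bar\phi'\mid s_\beta,a)=p_{\phi_\alpha}(\bar\phi'\mid\phi_\beta(s_\beta),a)$. This holds, e.g., for deterministic MDPs labelled by outcome-sequence signatures as in Algorithm~\ref{alg:outcome_equivalent_abstraction}. Given it, a coupling together with one-step reward agreement (your Step~1 with $n=1$) yields value compatibility for every abstract policy. After that, only the plannability of $M_\beta$ is needed. Let $\delta_\beta$ be derived from an optimal $\pi^*_{\phi_\alpha}$, set $\bar\phi=\phi_\beta(s_\beta)$, and let $\langle a\rangle$ be a plan realising $v^*(s_\beta;M_\beta)$. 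Then $v^*(s_\beta;M_\beta)=\mathrm{ret}_{M_\beta}(s_\beta,\langle a\rangle)=\mathrm{ret}_{M_{\phi_\alpha}}(\bar\phi,\langle a\rangle)\le v^*(\bar\phi;M_{\phi_\alpha})=v_{\delta_\beta}(s_\beta)$.

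The paper's own proof does not supply this ingredient either. Lemma~\ref{thm:plannable_outcome_equivalent_abstract_value_equivalent} concludes $v_{\pi_{\phi_\alpha}}(\bar\phi)=v_\delta(s)$ ``by the definition of plannable'', which is the very leap you rightly distrust. The second half of the proof invokes Lemma~\ref{thm:plannable_outcome_equivalent_value_equivalent}, which is false as soon as a policy acts differently on two outcome-equivalent states whose actions matter (for instance, two copies of $x_1$, one playing $a$ and the other $b$).
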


\seeproof{plannable_outcome_equivalent_transfer_optimal}

\paragraph{}
Theorem~\ref{thm:plannable_outcome_equivalent_transfer_optimal}
indicates that for
plannable MDPs,
outcome equivalence relations are sufficient
for transfer optimality.
Where plannability does not hold on the other hand,
it is possible that the outcome equivalent state abstractions
are overly coarse,
relinquishing control
which leads to sub-optimal transfer
between states where the optimal policy is different.

\subsubsection*{Summary}
To summarize this section,
there are two fundamental properties that make
outcome equivalent state abstractions useful:
\begin{enumerate}
  \item
    Outcome equivalence relations
    are constructed in an inherently \emph{bottom-up} fashion
    (Algorithm~\ref{alg:outcome_equivalent_abstraction}).
    For each MDP
    an outcome equivalent state abstraction
    can be created separately
    from the ground up
    without having to consider other MDPs.
    Thanks to outcome equivalence,
    these state abstractions naturally lead to an outcome equivalence relation
    between MDPs.
    This makes it very easy to consider
    new MDPs without having to start from scratch
    as one just needs a new state abstraction for that MDP.
  \item
    Outcome equivalence relations
    guarantee \emph{transfer optimality}
    for the entire transfer cover
    when the ground MDPs are \emph{plannable}.
\end{enumerate}

While outcome equivalence relations are versatile
with strong guarantees,
these guarantees break down in the face
of highly stochastic,
non-plannable MDPs.
Still,
in practical problems this may not prove that
much of a problem,
especially
when we introduce our hierarchical framework
in Section~\ref{sec:skills_that_transfer}.
Outcome equivalent state abstractions
shine especially
in this setting.
The question of how
to actually compute the expected outcomes
necessary for the construction
of outcome equivalent state abstractions
is an important one that has come up
earlier in this section
but one that is left to the discussion
in Section~\ref{sec:conclusion}.

\subsection{Illustration of a Domain}%
\label{sec:illustration_of_a_domain}

\begin{figure}[ht]
  \renewcommand{\thesubfigure}{\Alph{subfigure}}
  \centering
  \hfill
  \begin{subfigure}[t]{.19\textwidth}
    \centering
    \includegraphics[scale=.3]{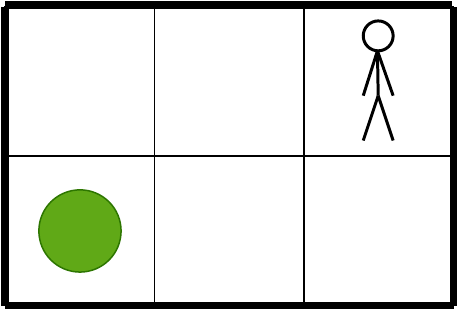}
    \caption{}
    \label{fig:domain_a}
  \end{subfigure}
  \hfill
  \begin{subfigure}[t]{.19\textwidth}
    \centering
    \includegraphics[scale=.3]{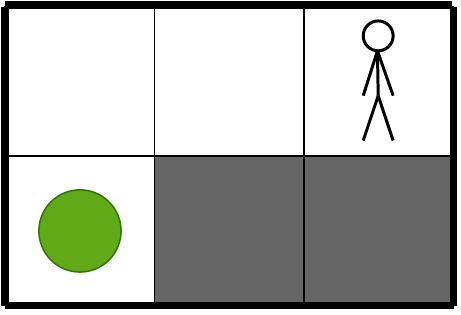}
    \caption{}
    \label{fig:domain_b}
  \end{subfigure}
  \hfill
  \begin{subfigure}[t]{.19\textwidth}
    \centering
    \includegraphics[scale=.3]{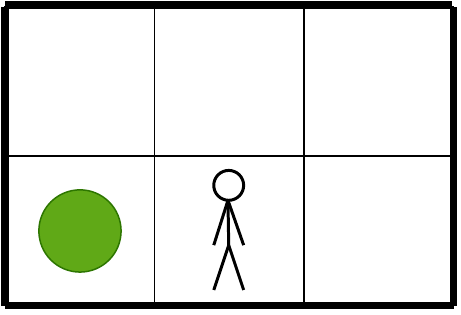}
    \caption{}
    \label{fig:domain_c}
  \end{subfigure}
  \hfill
  \begin{subfigure}[t]{.19\textwidth}
    \centering
    \includegraphics[scale=.3]{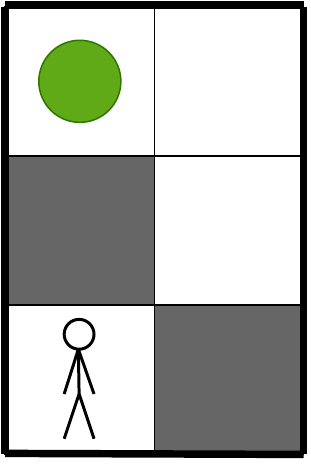}
    \caption{}
    \label{fig:domain_d}
  \end{subfigure}
  \hfill
  \begin{subfigure}[t]{.19\textwidth}
    \centering
    \includegraphics[scale=.3]{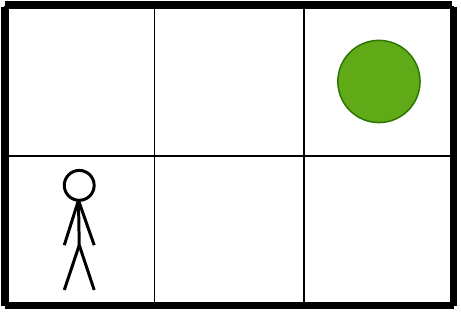}
    \caption{}
    \label{fig:domain_e}
  \end{subfigure}
  \hfill
  \caption{Five tasks
  of a small Gridworld domain.}
  \label{fig:mini_domain}
\end{figure}

In this section
we illustrate the usefulness of
outcome equivalent state abstractions
for relating tasks.
We consider a domain of tasks
with Gridworld semantics:
4 actions move the agent in the 4 cardinal directions,
unless something is blocking its path.
There is only one goal location
which incurs a positive reward:
there is only one outcome and
the outcome function is simply
$\sigma(s, a, s') = (\texttt{goal}(s'))$.
Since the domain is deterministic,
all tasks are guaranteed to be plannable.

We want to consider different tasks
and the relations between them,
so we generated all tasks
exhaustively according to the following specification.
Each task consisted of 6 cells
laid out in a 2-by-3 or 3-by-2 grid.
Up to two cells in the grid were occupied by obstacles,
which the agent could not move through,
nor could it move off the grid.
The agent starts in one particular cell
and the goal is in a different location.
A different start location describes a different task,
so some tasks are identical
excepting agent starting location.
It is entirely possible that a task is unsolvable
in case the agent is blocked from reaching the goal location,
as is the case for instance in
\figref{fig:domain_d}.
This specification results in 660 distinct tasks.

Each task contains at most six distinct states.
For each state we calculated
all expected future outcome sequences.
Since we knew a priori that no action sequence beyond
six steps would yield new information
(as the grid contains only six cells),
this was a tractable problem resulting
in 4096 distinct action sequences.
This results in
an expected outcome-sequence state abstraction
$\idealphi$
for each task,
with a shared abstract space $\Phi$
across all tasks.
While there were a total of 2544 states
across the 660 tasks,
there were only 370 distinct abstract states
in terms of expected outcome-sequences.
The majority of these
non-singleton equivalence classes
are due to
tasks that only differ in terms of starting location,
as is the case in Figures
\ref{fig:domain_a}
and
\ref{fig:domain_c}.
We might expect more distinct abstract states
given the amount of different action sequences
but our domain is of limited size.
The amount of distinct abstract states
is in a way
a description of the true size of the domain;
there are only 370 states that
differ in any meaningful way,
giving us a compact characterization
of the complexity of the domain.

Some tasks may look alike visually
yet their states have nothing in common in terms of
expected outcome sequences.
Consider the tasks
in Figures~\ref{fig:domain_a} and \ref{fig:domain_b}.
The general layout looks the same --
goal locations and agent starting locations are the same,
and we might even say that
the optimal policy in the second task is optimal in the first.
Still, because of the obstacles in the second task
each state gets a different outcome equivalent abstraction,
as the obstacles influence expected outcome sequences.
Yet surely the states in these tasks are more alike
than the states in some of the other tasks?
To gain insight into this question,
we computed
$\phi(s)$
as a vector of binary variables,
each describing the expected outcome
at the end of
a single action sequence.
Limiting ourselves to action sequences of up to length 6,
$\vecphi(s)$
is now a vector of
$\sum_{i=1}^{6}4^6=5460$
binary features.
The goal states are represented simply as
$\bm{0}$.
This is an outcome equivalent state abstraction.
But more than simply equating states
with fully identical expected outcome sequences,
this also gives us a way to describe states
with similar (non-identical) futures.
For example,
the start states
in Figures~\ref{fig:domain_a} and \ref{fig:domain_b}
have a trajectory in common that leads straight to the goal state
and so a part of their $\vecphi$ abstraction
will be identical.

\begin{figure}[t]
  \centering
  \hfill
  \begin{subfigure}[t]{.49\textwidth}
    \includegraphics[width=1\linewidth]{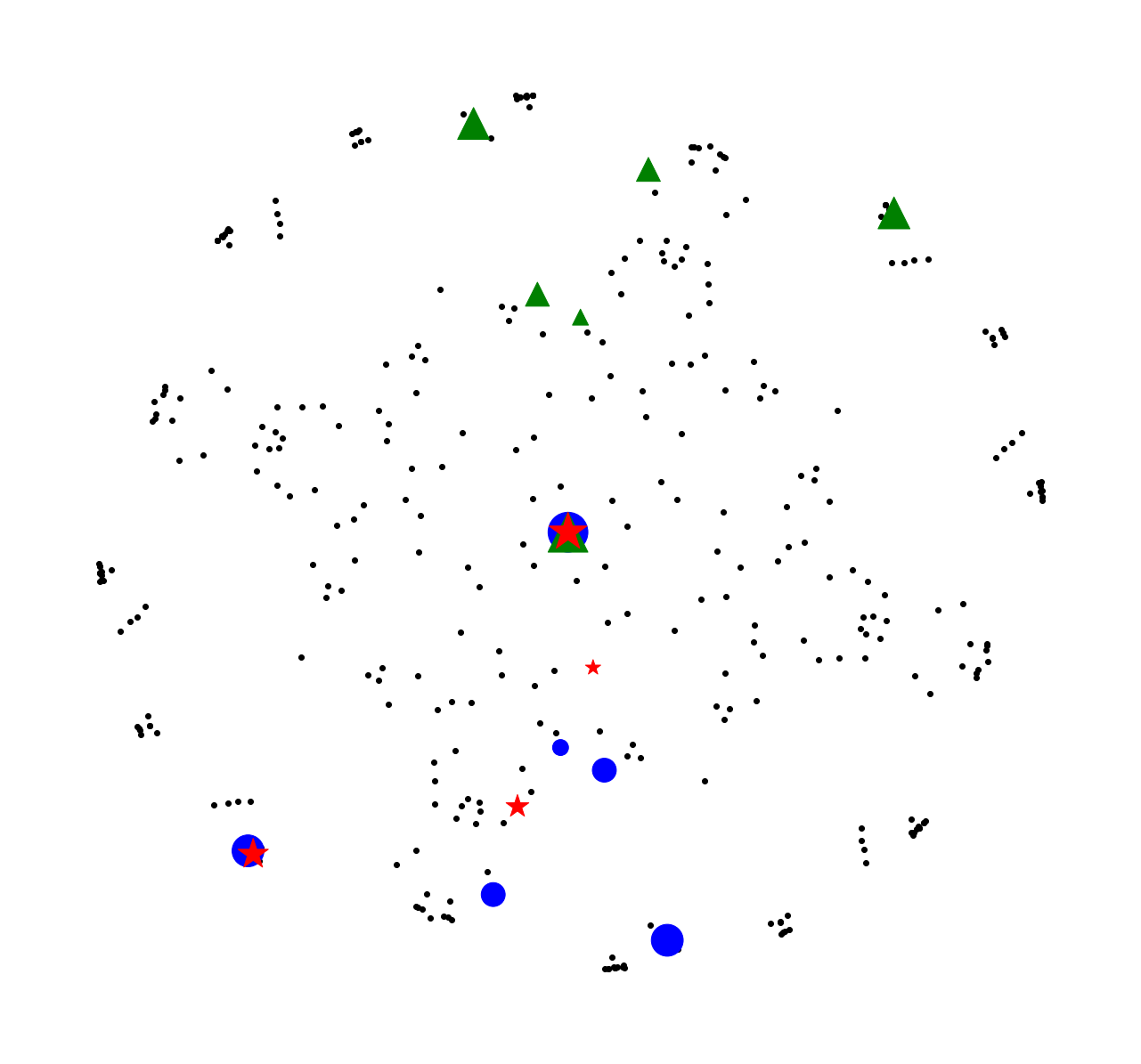}
    \caption{}
    \label{fig:mini_domain_embedding}
  \end{subfigure}
  \hfill
  \begin{subfigure}[t]{.49\textwidth}
    \includegraphics[width=\linewidth]{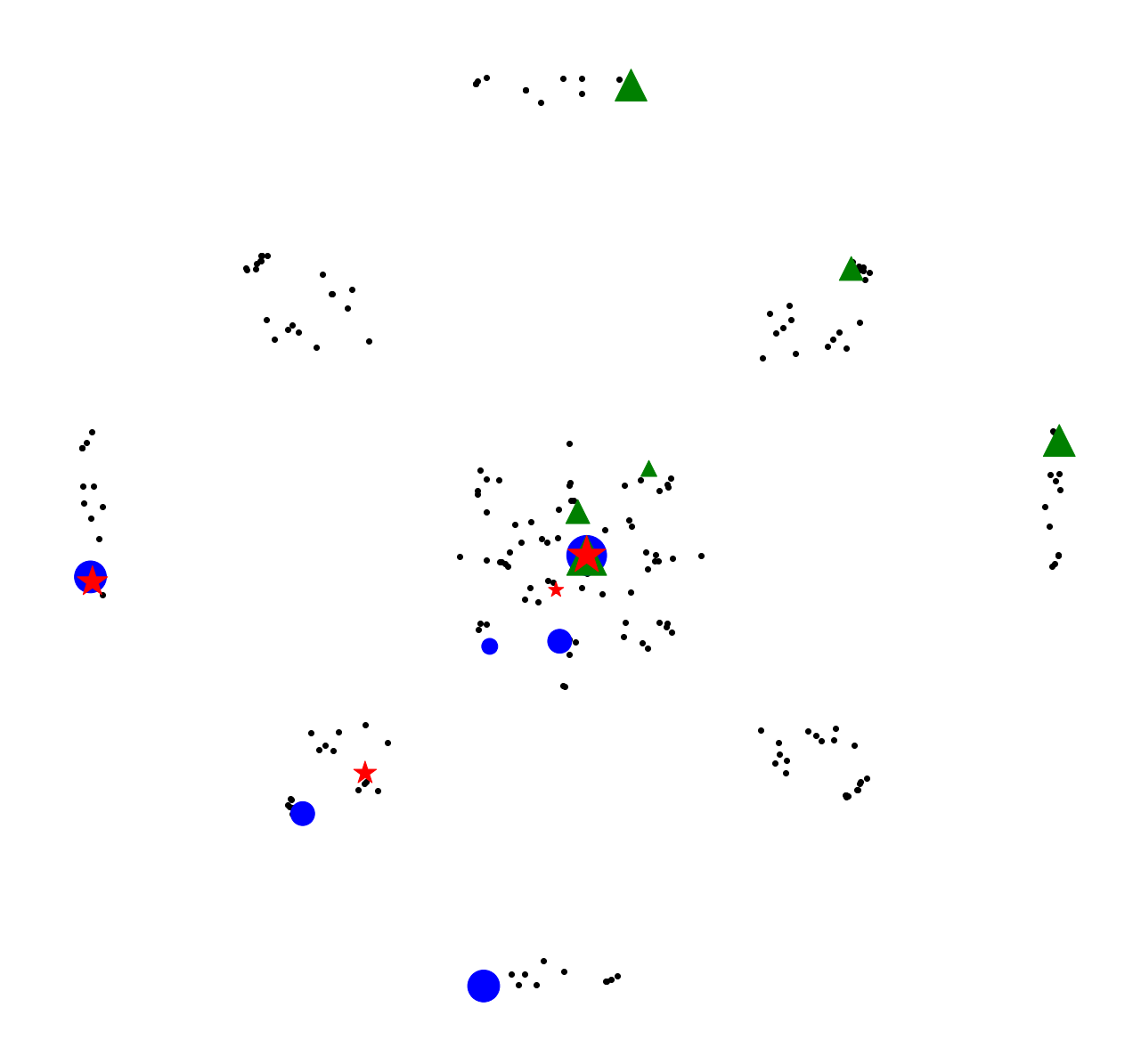}
    \caption{}
    \label{fig:domain_embedding_coarse}
  \end{subfigure}
  \caption{}
  \hfill
  \hfill
\end{figure}

Representing states this way
even allows us to look at distance measures between states,
even between states from different tasks.
\figref{fig:mini_domain_embedding}
depicts a
2-D Laplacian Eigenmap
\citep{Belkin2002eigenmaps}
of all 370 abstract states.
States that are shown close together
carry similar spectral information
in the graph
constructed based on K-nearest neighbors:
they are similar in terms of expected outcome sequences.
States from
Figures~\ref{fig:domain_a} and \ref{fig:domain_b}
are indicated
with blue circles and red stars respectively,
while states from
Figures~\ref{fig:domain_e}
are indicated by green triangles.
Marker size is correlated with closeness to the goal,
with the largest marker indicating goal location
and the smallest marker indicating starting location
(which happens to be as far as possible from the goal in each task).

Interestingly,
states from tasks A and B,
both drawn close to each other,
seem dissimilar to states from task E
which are drawn on the other side of the image.
In fact,
the near-goal states from tasks A and B
are drawn almost identically,
pointing at their having most future outcome sequences in common.
Even though this is an imperfect low-dimensional
visualization of a high-dimensional problem,
the observations it lends itself to are surprisingly insightful.

\paragraph{}
We used action sequences of length 6
because we knew that
for this particular domain
no information would be added
by sequences of any greater length.
We empirically verified this
and indeed,
the same distinct 370 abstract states emerge.
Ground states that share the same length-6
outcome equivalent abstraction share
the same
length-9 abstraction, for example.
In the language of abstractions,
this length-6 abstraction
is \emph{maximally fine}.
Adding information
to the state descriptions of $\vecphi$
--for example, by using longer action sequences--
makes the abstraction no finer
and does not gain anything.
We can also see
that at the domain level
this is far from a \emph{minimal control state abstraction}.
If only $370$
distinct abstract states emerge
despite using $5460$ features,
there must be a lot of duplicate information between features.
That is,
there are many state abstractions
that are coarser than this engineered abstraction
that still preserve all information
relevant to expected return.

This example does beg the question of the viability
of this kind of abstraction.
There are $\aset^k$ action sequences
of length $k$
and already for this tiny toy domain
that results in $5460$
action sequences.
No matter how one comes by the
corresponding outcome sequences,
be it by prediction or by model,
this is a large amount that
will only get combinatorially larger
as one considers less trivial domains.
This naturally leads us to think about
shorter action sequences
to make up the outcome sequence abstraction,
even if that does mean losing information.
This corresponds to a
\emph{coarsening}
of the abstraction
which leads to
a kind of partial observability of the environment:
without everything relevant to decision-making available
in the state description,
it may no longer be possible
to express the optimal policy.
In our framework this means
constructing an abstract MDP
and accepting that it will not accurately reflect the original dynamics,
losing transfer optimality and
possibly resulting in a sub-optimal derived solution.
In the domain at hand
we can for example only consider action sequences of length 3,
which results in only 217 abstract states,
less than the $370$ distinct abstract states we know this domain has.
While we sacrifice transfer optiamlity,
the decrease in size may just make the difference in feasability.
The corresponding visualization
of this coarser state abstraction
is shown in
\figref{fig:domain_embedding_coarse}.

Opting for a coarser state abstraction
is a trade-off:
computationally more viable,
it entails a loss of information.
Yet this loss of information
is not entirely without potential for gain.
If our aim were to learn a policy that could transfer across the domain,
a coarser state abstraction leads to less states to learn behavior for
and potentially more states to transfer behavior to.
At the same time
the behavior could not be globally optimal.
Despite the difficulty,
it is exactly this trade-off that we wish to exploit
in the following section.

\section{Skills That Transfer}%
\label{sec:skills_that_transfer}
Section~\ref{sec:predictive_representations}
introduced outcome equivalent state abstractions,
which for the class of \emph{plannable} MDPs
guarantee \emph{transfer optimality}.
Section~\ref{sec:illustration_of_a_domain}
showed how outcome equivalent state abstractions
can be coarsened
and illustrated
the trade-off between
coarseness (i.e. opportunity for transfer)
and transfer value
for a collection of similar tasks.
We concluded that past the threshold
of a \emph{minimnal control state abstraction},
coarseness
-- and with it, the transfer cover
and the opportunity for transfer --
can only be increased
at a loss of transfer value.
In this section
we posit
that it is possible to increase
the opportunity for transfer
at no cost to transfer value
by introducing some new machinery relating to action abstraction.
This machinery falls under the general category of
\emph{hierarchical reinforcement learning} (HRL).

Many realistic tasks have some structure to them,
a sense of consisting of discrete and partially isolated \emph{subtasks}.
Navigating one subtask may be done independently
of another subtask.
That is not to say that one subtask cannot
depend on the outcome of another,
or that some other complicated relation exists.
Rather,
once one subtask has started
only local information is relevant
--
the other subtasks are put on hold.
Moreover,
related tasks may share subtasks,
and a solution to a shared subtask
naturally applies in both related tasks,
an idea that has driven HRL research for decades
\citep{Barto2003HRL}.
Even where this repeated structure
is not immediately apparent,
we think that many realistic tasks exhibit
this pattern.
First
we will describe
subtasks,
their potential benefits,
and how to solve them.
After that we discuss
how to actually uncover the different subtasks
that make up a bigger task.

\paragraph{}
There is one immediate advantage to considering
a task as made up of subtasks.
Each subtask will by definition
be at most as complex as the original task
and ideally far less complex than that.
This decrease in complexity translates into
a subtask-specific
\emph{minimal control state abstraction}
that can be coarser than the
minimal control state abstraction
of the greater task.
That is,
the subtask
can have
a simpler, coarser, state abstraction
(as compared to the greater task)
that still guarantees
that a globally optimal policy is attainable,
simply because locally
(in the subtask)
that globally optimal policy
requires less information,
as reflected in the coarser state abstraction.
As discussed before,
a coarser state abstraction
naturally leads to an increase in transfer potential.
This transfer potential is available in
a single-task setting,
yet shines especially
when considering multiple tasks.

\figref{fig:common_subtasks}
depicts three gridworld tasks.
Under an outcome-equivalent state abstraction
these three tasks have no overlap in abstract states
---
that is, no state
in any of the tasks is outcome equivalent
to any other state.
Consider the first and third tasks
in
\figref{fig:common_subtasks_a}
and
\figref{fig:common_subtasks_c}
respectively.
In each task
we have depicted
with an arrow
a `common' subtask
of navigating up the vertical corridor.
If we now consider a
minimal
outcome equivalent state abstraction
for each task's
\emph{navigate-up-the-corridor} subtask,
it should become clear that it is
coarser than the minimal outcome equivalent state abstraction
for the original tasks.
What's more,
the subtasks
now have full overlap in abstract states:
each state has the same expected outcomes as a
state in the other task's subtask.
Indeed the transfer cover is 1.
This means that a policy learned
for an abstraction on one of the subtasks
will transfer to the other completely.
What's more,
the outcome-equivalent abstraction
guarantees that this transfer is optimal;
the optimal abstract policy in the source task
results in an optimal policy for the target task.

While not formally defined,
there appears to exist a bijection between the abstract MDPs
created from each subtask.
It is this bijection that justifies
us talking of a \emph{common subtask}
that exists between the two tasks.
In fact,
instead of deriving a partial policy
from the abstract subtask,
we could have gotten the same result
if we had simply followed the same actions
in the ground task as
we would have in the abstract subtask,
effectively emulating the abstract subtask
until it was finished.
This view will come in handy shortly.

\begin{figure}[ht]
  \centering
  \hfill
  \begin{subfigure}[t]{.3\textwidth}
    \centering
    \includegraphics[scale=.3]{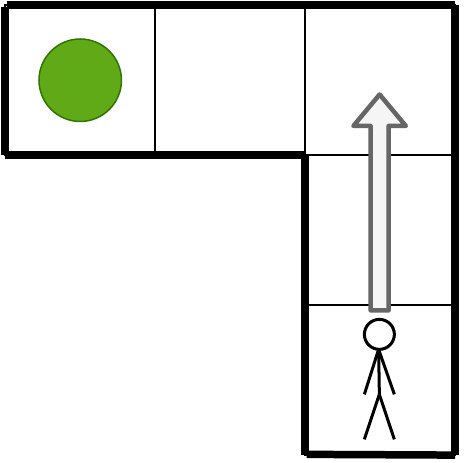}
    \caption{}
    \label{fig:common_subtasks_a}
  \end{subfigure}
  \hfill
  \begin{subfigure}[t]{.3\textwidth}
    \centering
    \includegraphics[scale=.3]{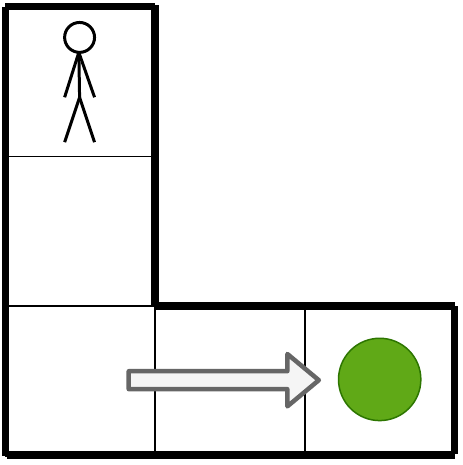}
    \caption{}
    \label{fig:common_subtasks_b}
  \end{subfigure}
  \hfill
  \begin{subfigure}[t]{.3\textwidth}
    \centering
    \includegraphics[scale=.3]{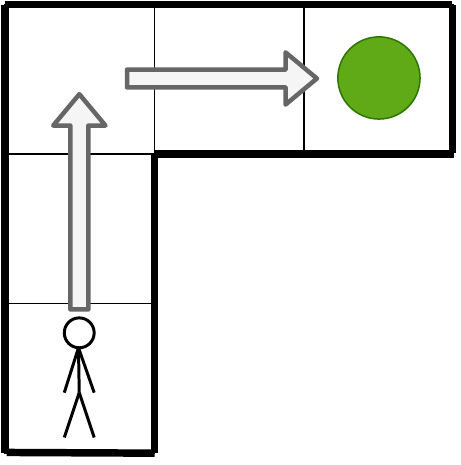}
    \caption{}
    \label{fig:common_subtasks_c}
  \end{subfigure}
  \hfill
  \caption{
    Two tasks combined contain all of the subtasks
    that make up a third task.
  }
  \label{fig:common_subtasks}
\end{figure}

Of course this only deals with part of the entire task at hand so far
--
we are only dealing with a single subtask after all.
We can take this one step further
and consider multiple subtasks.
If we could just piece together policies
for subtasks
--
each a partial solution for the whole task
--
we could recover a complete policy.
The problem is that any two tasks
are unlikely to have all of their subtasks in common,
in which case transfer still only yields a partial policy.
We can resolve this by considering multiple
source tasks,
each contributing their share of subtasks.
If in addition to the task
in \figref{fig:common_subtasks_a}
we consider the task in
in \figref{fig:common_subtasks_b},
the two together contain all the subtasks that make up
the task
in \figref{fig:common_subtasks_c}.
Two subtask policies,
one gathered from
each of the first two tasks,
combined
make up a complete policy for the third task.
The overall picture now looks something like
depicted in
the diagram in
\figref{fig:reusable_subtasks}.

\paragraph{}
If subtasks are so effective at
enabling transfer,
what prevents us from considering as many subtasks
as possible?
After all,
the smaller the subtask,
the easier it is to achieve an overlap
and so
the easier it is to transfer policies.
Indeed, why not as many subtasks
as there are different states,
one each,
with the added bonus
of only needing such a coarse minimal state abstraction
that subtasks will even start repeating within the source tasks?
This question is an important one
and we will return to it shortly.

\begin{figure}[ht]
  \centering
  \includegraphics[width=0.5\linewidth]{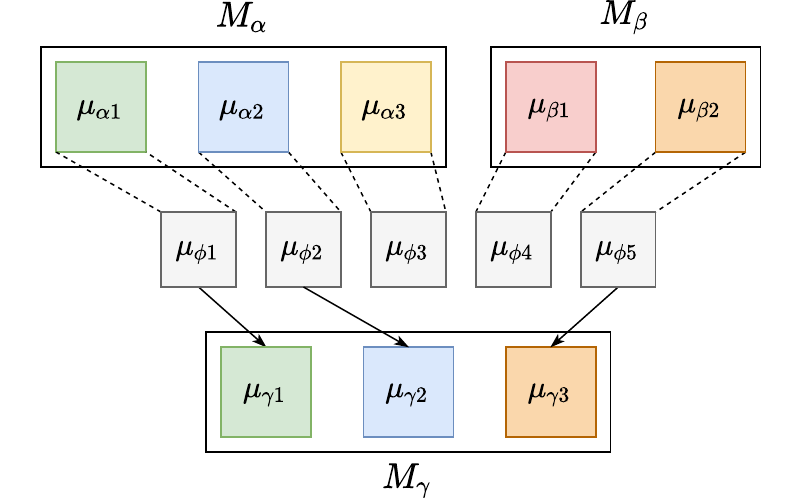}
  \caption{Three tasks are depicted.
    Each task consists of subtasks,
    the abstractions of which are shown
    in the middle.
    The third task,
  $\mdp_\gamma$, is made up of some of the subtasks that recur in
  $\mdp_\alpha$ and $\mdp_\beta$.
  The mapping between subtasks of different tasks
  is effected by state abstractions.}
  \label{fig:reusable_subtasks}
\end{figure}

\subsection{Subtasks as Skills}%
\label{sec:subtasks_as_skills}
We have conceptually sketched
how a task can be seen as being made up of
subtasks,
and how those subtasks can each contribute
a partial policy
for a target task.
We now dig a little deeper to understand
how these subtasks can be used in practice.

\paragraph{}
We approach our
subtasks with
\emph{skills}
(Section~\ref{sec:general_skills}).
Formally,
skills are
just options.
We have opted for the term
\emph{skill}
to add semantics
to the usual notion of an option;
in our eyes,
a skill represents \emph{local behavior}
that is \emph{reusable}.
First,
the \emph{local-ness} of behavior
goes hand in hand with using coarse
outcome equivalent state abstractions:
if a state abstraction
captures only
\emph{local} information,
it will be coarser
than a state abstraction
that captures more global information.
Subtasks should require only
local information.
Second,
we focus on subtasks that are repeated across tasks,
their solutions are \emph{reusable}.
This corresponds to preferring reusable options
rather than options that are useful in only
one specific setting.

\paragraph{}
Consider again the definition
we introduced previously
(Section~\ref{sec:general_skills})
of an option as a triple
$\omega~\defas~\langle \sset_\omega, \pi_\omega, \beta_\omega \rangle$,
respectively denoting
the option's domain,
control--, and termination policy.
This time however,
we choose
$\sset_\omega$
to be abstract
so that a state abstraction
maps into an option's domain.
The hierarchy now has options
operating over an abstract state set
on the lower level
and an option-enabled agent
initiating options
(themselves abstract actions)
on the higher level.
This may be contrary to intuition
demanding that abstract states are used on a `higher' level
--
indeed here it is the lower level
(the option)
which employs state abstraction.
An example trace is depicted in
\figref{fig:hrl_overview_example}
of an option-enabled agent
with two options operating
over different state abstractions.
To interpret this,
note that each horizontal position corresponds to a time step,
circles correspond to ground states
and rectangles to abstract states.
Horizontal arrows indicate state transitions and
are labelled with the primitive action that induces it.
Vertical arrows indicate initiating (down) and terminating (up) an option.
As explained by \citet{Sutton1999},
the higher level can be modelled as a semi-MDP
with variable numbers of time-steps between state observations.

\begin{figure}[ht]
  \centering
  \includegraphics[width=0.9\linewidth]{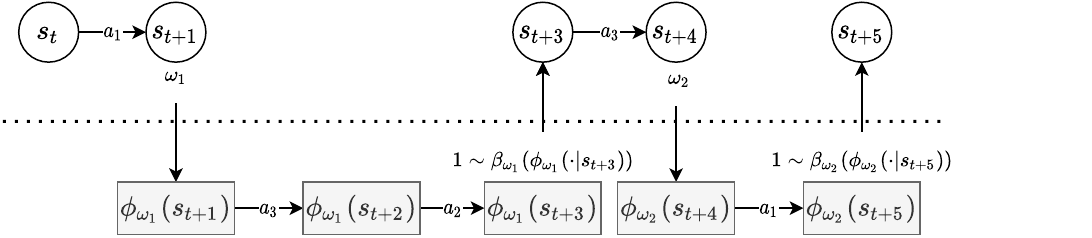}
  \caption{An example hierarchical trace is depicted,
    produced by
    an option-enabled agent
    with primitive actions
    $\{a_1, a_2, a_3\}$
    and options
    $\{
    \omega_1=(\Phi_{\omega_1}, \pi_{\omega_1}, \beta_{\omega_1}),
    \omega_2=(\Phi_{\omega_2}, \pi_{\omega_2}, \beta_{\omega_2})
    \}$
    with corresponding state abstractions
    $\phi_{\omega_1}: \sset \mapsto \Phi_{\omega_1}$
    and
    $\phi_{\omega_2}: \sset \mapsto \Phi_{\omega_2}$
    respectively.
  }
  \label{fig:hrl_overview_example}
\end{figure}

\paragraph{}
We can now relate options to the conceptual notion of subtasks.
First,
an option describes a behavior and not a task.
We can see an option as encoding the behavior for a subtask.
The option's domain,
the state abstraction's image,
corresponds to the subtask's state set.
Second,
a subtask ends at some point,
as governed by the option's termination policy.
Finally,
a subtask presents a task
which the option's policy presents the solution to.
One thing remains:
it is as of yet unclear how the start of a subtask is encoded.
Some approaches enrich an option with an
\emph{initiation policy}
which dictates when an option can be started
(when a subtask begins).
We employ the more common approach
of giving an option-enabled agent control over when to start an option.
This control puts the onus on the option-enabled agent
as the option is now available everywhere
but not necessarily universally useful.

\paragraph{}
It is now worth returning
to the question of
what complexity
this approach adds.
Consider a set of options
that each encode a solution to a smallest
conceivable subtask,
one which only concerns itself with a single state
and so only consists of a single step.
Realistically
there would be as many
different options
as there would be unique outcome-equivalent state abstractions
for one-step tasks,
a number that is combinatorial
in the number of actions.
The option-enabled agent now has control
over all primitive actions,
plus this combinatorial amount of options.
Its action set has increased greatly.
Since the agent will still need to learn
a policy to use each of these actions effectively,
it has gained effectively nothing
--
instead its task has only gotten more complicated.
At the other extreme
is the task consisting of a single subtask
with little to no potential for transfer.
There is need for a balancing act:
the ideal number of subtasks
and their sizes
should provide a benefit that outweighs
the cost of the added complexity.
There is a trade-off between a larger number of
shorter options and fewer longer options.
Short options can transfer behaviors to a large number of
states of a target task (high coverage),
but the higher level agent must still learn when to initiate each option.
A longer option transfers more behavioral information,
but is not as widely applicable.
This trade-off poses a key problem that we will delve
into in Section~\ref{sec:experimental_evaluation}.

\paragraph{}
So far we have left unanswered
the question of how subtasks or options
are specified.
We do not want to take the approach
of manually specifying subtasks
as that would require a large amount of
knowledge and engineering
without a guarantee to be helpful.
This is a question in two complementary parts:
state abstraction and action abstraction.
That is,
each option both
relies on a state abstraction
and represents
an abstract action
through
its control and termination policy.
These then make up the two interdependent great variables of this approach:
which state abstractions and which action abstractions.
We will solve for both in our experiments.

In what is to follow we describe our approach to this problem,
starting with
how we can uncover options
(or indeed -- action abstractions)
when given a set of state abstractions.
We finish with our construction
of coarse outcome equivalent state abstractions.

\subsection{Option Discovery}%
\label{sec:option_discovery}
In order to find a set of suitable options
we park for the moment the problem
of how to find suitable state abstractions.
We assume for now that
we have access to state abstractions
and that they are
both useful
and suitably coarse
to allow for transfer to happen.
The options that we would like to find
are those that are as repeatedly useful across tasks as possible.
Ideally,
we want to find behaviors
for subtasks
that are common
in a domain
--- \emph{reusable skills}.
This would make it all the more likely
that an option is useful
in a new and unseen task of the same domain.

At this point we should clarify our setting.
We assume that we have access to
a small subset of source tasks of a domain
such as in
\figref{fig:reusable_subtasks},
except
the target tasks are unknown beforehand.
Ideally these source tasks are representative for the domain,
for we aim to find options that
are as useful as possible in all of these tasks.
Recall that it is only possible to consider
options that operate in different tasks
thanks to a useful state abstraction,
one that is coarse
and one that ideally equates states
that benefit from the same policy,
i.e. one with a high transfer value such as an outcome equivalent state abstraction.

\paragraph{}
There are a few ways in which option discovery could be approached.
For example,
imagine an agent learning how to navigate
multiple tasks
while continuously updating an option set
that helps replicate its learned behavior.
A certain pattern of action
might arise as the agent is learning:
a natural candidate for an option.
This can be seen as a form of online compression
of the learned behavior,
constantly bootstrapping the agent's learning
and improving replicability of behavior.
We do not adopt this
\emph{online} approach
but we do let it guide us.
We still see option discovery as a compression
of learned behavior,
with one option explaining as much of the demonstrated behavior as possible,
but instead of an online approach
we take a more stratified
and offline approach.
As option discovery is not the main topic of this paper,
we describe our version and
rephrasing of the work of
\citet{Daniel2016}
in detail in
Appendix~\hyperref[app:option_discovery]{B}.
We summarize it here.

Our approach uses a collection of demonstration traces
$\{ \tau_i \}^N_{i=1}$,
where each trace (state-action-reward sequence)
is derived experience
from policies trained independently on one of
a collection of $J$ \emph{source} (training) tasks
$\mdp_j=\langle \sset_j, \aset, p_j, r_j, \gamma \rangle$.
Although these traces are derived from a non-hierarchical agent,
we then model them as if they were derived from a single hierarchical agent
acting across all $J$ tasks.
This hierarchical agent has access to $K$ options
$\Omega_K=\{ \omega_k \}^K_{k=1}$,
each with its own abstraction $\phi_k$
\footnote{Although each option can have its own abstraction,
in our experiments we use the same abstract space for all policies,
suck that $\Phi_k=\Phi$.},
and for each task $\mdp_j$
a separate high-level policy
$\pi_j:\sset_j \mapsto \probset(\aset \cup \Omega_K)$.
Equivalently,
this can be seen as $J$ hierarchical agents
sharing options $\Omega_K$.
Given a parameterization
$\theta_j$
for the control and termination policy of each option $\omega_j$,
the discovery mechanism seeks to find the maximum likelihood
of high-level policies
$\{\pi_j\}^J_{j=1}$
and option parameters
$\{\theta_k\}^K_{k=1}$
to explain traces
$\{ \tau_i \}^N_{i=1}$
as if they were generated by the hierarchical agent,
i.e.
\begin{equation*}
  \argmax_{\{\theta_k\}_{k},\{\pi_j\}_{j}}
  \prod_i^N \Pr(\tau_i \mid K, \{\theta_k\}_{k}, \{\pi_j\}_{j})
\end{equation*}

We always pick the state abstraction
to be strictly coarser
than the ground state representation,
ensuring opportunity for transfer
so that each option can be useful in multiple tasks,
as discussed previously.
As a consequence,
each of the resulting options
may not be able to reconstruct
the demonstrated trajectories exactly,
so that an optimal solution
would typically only be attained when options are combined.
This is exactly what we want:
a task made up of subtasks,
with each subtask reoccurring in many tasks.
We discuss the exact state abstraction
used in the next section.

\subsection{Implementing Coarse Outcome Equivalent State Abstractions}%
\label{sub:implementing_coarse_outcome_equivalent_state_abstractions}
The example in
Section~\ref{sec:illustration_of_a_domain}
showed that
\emph{truncating}
the action sequences employed
in the construction of
outcome equivalent state abstractions
naturally made for coarser state abstractions.
Consider shorter action sequences
and expected outcome sequences
as
resulting in short-sightedness,
leaving local information only.
Truncated outcome-equivalent state abstractions
are perfectly suited
for our options:
each option
has access to a local window of information,
ignoring global and unnecessary detail,
ensuring that states across tasks
that have an identical local window
(the same abstract state)
have access to the same behavior.

In this section
we describe how exactly we construct these truncated
outcome-equivalent state abstractions
for our experiments.
We assume for the experiments
that we have
access to an oracle
which gives perfect information
about future outcomes.
In practice,
outcomes would not be known without some prior exploration
or access to the model,
but for clarity we omit this complicated from our empirical studies.
This omission allows us to assess the
benefits of our approach in isolation,
without in addition needing to learn state abstractions.
We discuss the learning of outcomes in section~\ref{sec:related_work}.

\paragraph{}
Consider again the expected outcome sequence, Equation \eqref{eq:sigma_sequence},
constructed from
an action sequence
$\langle a \rangle = (a_1,\dots,a_n)$.
It contains
the expected outcome
for each transition
$(S_{t+k-1}, a_k, S_{t+k})$
following the actions
$(a_1,...,a_{k-1})$
up to that point (repeated here for convenience):
  \begin{equation*}
    \langle \sigma \rangle (s_t, \langle a \rangle) \defas (\sigma(s_t, a_1, S_{t+1}),\dots, \sigma(S_{t+n-1}, a_n, S_{t+n}))
  \end{equation*}
Given that $\Sigma \subseteq \realnumbers$,
this is a sequence of real values.
To construct a \emph{vector}
of outcomes,
we simply stack
the individual entries.
We employ boldface to denote vectors
(to contrast with the notation for a sequence of expected outcomes).
\begin{equation}
\testvec(s_t, \langle a \rangle) \defas
\begin{bmatrix}
  \sigma(s_t, a_1, S_{t+1})\\
  \vdots \\
  \sigma(S_{t+n-1}, a_n, S_{t+n})
\end{bmatrix}
  \in \realnumbers^{nd}
  \label{eq:test_vec_single}
\end{equation}

The vector containing
all expected outcomes
for action sequences of length $n$
is constructed by stacking
the individual expected outcome vectors.
So for action sequences
$\langle a \rangle_i \in \aset^n$
and $|\aset^n|=N$:
\begin{equation}
\testvec(s_t, \aset^n) \defas
\begin{bmatrix}
  \testvec(s_t, \langle a \rangle_1)\\
  \vdots \\
  \testvec(s_t, \langle a \rangle_{N})
\end{bmatrix}
  \in \realnumbers^{Nnd}
  \label{eq:test_vec_length_n}
\end{equation}
%
We can now
define a state abstraction
based on this vector construction.

\begin{definition}[Outcome-Predictive State Representation]
  Given an MDP
$\mdp = \langle \sset, \aset, p, r \rangle$,
  and outcome function
  $\sigma: \sset \mapsto \Sigma$,
  we define the \emph{Outcome-Predictive State Representation} (OPSR)
  of length $k$
  as the vector of all expected outcome sequences
  of length $k$.
  It is a function
  $\simplephi_{\sigma,k}: \sset \mapsto \Phi_{\sigma,k}$:
  \begin{equation}
    \bm{\simplephi}_{\sigma,k}(s) \defas
      \testvec(s, \aset^k)
    \label{eq:phi_sigma}
  \end{equation}
  We refer to $k$ as the \emph{horizon}.
\end{definition}
Since an OPSR
is parameterized
by the \emph{horizon} $k$,
it is natural to consider
the family of functions
arrived at by varying the
horizon.
For a given state set $\sset$
and outcome function $\sigma: \sset \mapsto \Sigma$,
we get the set
\begin{equation}
  \left\lbrace
    \phi_{\sigma, k}: \sset \mapsto \Phi_{\sigma, k}
    \mid
    k \in \naturalnumbers
  \right\rbrace%
  .
\end{equation}
This set is totally ordered
under the
\emph{finer} and \emph{coarser} relations.
That is,
for $k \in \naturalnumbers$,
$\phi_{\sigma,k+1}$
will be finer than
$\phi_{\sigma,k}$.
If
$\phi_{\sigma,k+1}$
is \emph{not strictly finer than}
$\phi_{\sigma,k}$,
we term
$\phi_{\sigma,k}$
the \emph{finest OPSR}
for $\mdp$ and $\sigma$.

\section{Experimental Evaluation}%
\label{sec:experimental_evaluation}
This section
puts to the test
the framework
that we have developed.
In particular,
we will evaluate
and compare
OPSR-based options
in different settings
with focus on one aspect:
\emph{transfer}.
We consider transfer to be beneficial
if training on a task
after transfer
outperforms
training on the same task
without transfer.
This means that for our experiments
we set aside the issue of
\emph{source task time}
and consider it a sunk cost
\citep{Taylor2009}.
This can be justified
if we see source task time as an
\emph{amortized cost}
that decreases upon each reuse
of the learned material
--
something we rely heavily on in our experiments.
That said,
we will however also keep an eye
on jumpstart and asymptotic performance,
which we will explain in the accompanying discussions.

We evaluate our methods
on two domains.
The Craftworld domain is mainly used to explore different parameters
and gain insights into
the performance of OPSR-based options.
The Lightworld domain was originally introduced by
\citet{Konidaris2007},
who proposed \emph{agent-spaces}:
manually engineered abstract state spaces
amenable to transfer.
We compare our general method directly
agent-space--based options.
Finally,
we inspect the potential
for transfer
between different subdomains.

\subsection{Experimental Structure}%
\label{sub:experimental_structure}
%
The previous section
already discussed
our Outcome-Predictive State Representation
construction
and our option discovery process
that builds on this
method of state abstraction.
In particular,
we constrain all options for a particular experiment
to share an abstract space
$\Phi_{\sigma,k}$.
A few more parameters remain for a given experiment:
the choice of outcome function $\sigma$,
the number of options,
the OPSR horizon $k$,
and a set of tasks used to discover options from
(training tasks)
as well as as a set of tasks used for evaluation
(target tasks).

\paragraph{}
An experiment is organized as a
straightforward two-step process.
First,
a set of training tasks is learned to completion.
From the demonstrated trajectories,
a set of options is learned that
maximizes their likelihood
(i.e.,
that is most likely to give rise
to the demonstrated action sequences
-- see Appendix~\ref{app:option_discovery}
or a brief summary in Section~\ref{sec:option_discovery}).
Finally, the resulting options
are evaluated on a set of  tasks
by inspecting the learning speed
of the hiararchical agent that has access to these options
as compared to a non-hierarchical agent.
While this setup resembles
may reminds the reader of multi-task RL,
this is simply for demonstration purposes
and our method is by no means limited to
the multi-task setting
where a set of source tasks is available.
The procedure is in fact designed
to closely imitate
the online learning agent
sketched in
Section~\ref{sec:option_discovery},
which no longer requires source or target tasks.
The entire procedure is outlined in
Procedure~\ref{alg:evaluation_protocol}.

\floatname{algorithm}{Procedure}
\begin{algorithm}[htb]
\caption{Option Discovery and Evaluation Protocol}
\label{alg:evaluation_protocol}
\begin{algorithmic}
  \State Sample $N_\textsc{train}$ training tasks
    $\mathcal{M}_\textsc{train} =
    \mdp_1,\dots \text{~from domain~} \domain$
  \State Sample $N_\textsc{test}$ target tasks
    $\mathcal{M}_\textsc{test} =
    \mdp_1,\dots \text{~from domain~} \domain$
  \State $T \gets \{\}$
  \For{$\mdp\ \text{in}\ \mathcal{M}_{\textsc{train}}$}
    \State Gather trajectory $\tau \gets \textsc{SolveMDP}(\mdp)$
    \State $T \gets T \cup \tau$
  \EndFor
  \State $\{\omega_1,\dots,\omega_o\} \gets
    \textsc{DiscoverOptions}(\mathcal{M}_\textsc{train}, T, \phi_{\sigma,k}, N_\textsc{options}=o)$
  \State Augment action sets of target tasks $\aset =
    \aset \cup \{\omega_1,\dots,\omega_o\}$
    \For{$\mdp\ \text{in}\ \mathcal{M}_{\textsc{test}}$}
    \State $\textsc{LearnMDP}(\mdp)$
  \EndFor
\end{algorithmic}
\end{algorithm}

\subsection{The Craftworld Domain}%
\label{sub:environment_craft_world}

\begin{figure}[ht]
  \centering
  \hfill
  \includegraphics[width=0.3\linewidth]{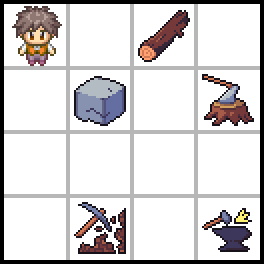}
  \hfill
  \raisebox{.5\height}{%
  \includegraphics[width=0.35\linewidth]{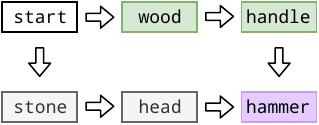}}
  \hfill
  \hfill
  \caption{Left: depiction of a Craftworld task.
  The diagram on the right illustrates crafting dependencies.}
  \label{fig:craftworld}
\end{figure}

The first environment is based on the
Craftworld introduced by
\citet{Andreas2017}.
It is implemented in VGDL%
\footnote{VGDL 2.0: a Video Game Description Language
(\url{https://github.com/rubenvereecken/py-vgdl})}
as the framework allows easy generation of new tasks
due to simple plain-text task specifications
that can be automatically generated.
An agent is randomly placed in a grid
where it needs to craft an end product.
To do so, it must gather resources
and refine these at their corresponding workstations.
It can pick up resources by walking over them
and interact with an appropriate workstation
by facing it
and performing a \texttt{use} action.

We restrict ourselves to one Craftworld domain
where the agent's
end goal is to craft a \emph{hammer},
first creating the intermediate
\emph{handle} and hammer \emph{head}.
One can see in the example
depicted in
\figref{fig:craftworld}
that each task has at least two optimal sequences of actions
since subtasks can be completed in any order.

An agent is to complete its objective in as few steps as possible.
To this end it receives a penalty of -1 on each action taken,
except its final action which is rewarded with 1000.
We restrict ourselves to 4 by 4 grids.

\subsubsection{Types of Agent}
We use two types of reinforcement learning agents:
a regular agent that has access to primitive actions only,
referred to as the \emph{option-less agent},
and an \emph{option-enabled, hierarchical agent}
that in addition to primitive actions
can also initiate options.
Agents use SARSA($\lambda$)
with $\epsilon$-greedy policies.
We fixed $\epsilon=0.05$,
$\lambda=0.99$,
$\gamma=0.999$
and set the learning rate to $0.2$,
all as results of a hyper-parameter sweep.

\subsubsection{State Representation}
The base state representation
is a fully Markov state representation
where each state is encoded separately:
a one-hot representation
which we will denote $x(s)$.
This is used by the option-less agent
as well as the high-level policy of the hierarchical agent.

The option's state representation
is an OPSR
$\phi_{\sigma, k}(s)$
as defined in
Equation~\ref{eq:phi_sigma}.
This leaves us to define
the outcome function $\sigma$
and a maximal action sequence length $k$.
We will define outcomes for all changes in resource counts
in order to capture every time the agent
gains or loses a resource
(by crafting it into another one).
Using
$\texttt{resource}(s)$ to denote the number of resources
of type \texttt{resource} the agent has in state $s$,
and defining the short-hand
$\Delta \texttt{resource} \defas \texttt{resource}(s')~-~\texttt{resource}(s)$%
,
we can write the outcome function
for our Craftworld domain as
\begin{equation*}
  \sigma(s,a,s') = (
  \Delta \texttt{wood},
  \Delta \texttt{stone},
  \Delta \texttt{handle},
  \Delta \texttt{head},
  \Delta \texttt{hammer}
  )
\end{equation*}
This will capture what is relevant to reward
as well as express information that we think
could be relevant to options.

\subsubsection{Option Discovery}
We derive traces from training tasks using Policy Iteration.
All control policies
and option termination policies
are parameterised by single layer neural networks
with a softmax activation.
The high-level policy is denoted
$\pi_H$.
\begin{align*}
  \pi_H(s) &= \textit{softmax}(Wx(s)) \\
  \pi_\omega(s) &= \textit{softmax}(W \phi_{\sigma,1:k}(s) + b) \\
  \beta_\omega(s) &= \textit{softmax}(W \phi_{\sigma,1:k}(s) + b)
\end{align*}
Calculation of the gradient
is outlined in
Appendix~\hyperref[app:option_discovery]{B}
while stepping of the gradient
was performed with
the Adam algorithm \citep{Adam}
using a learning rate of $0.3$.
Trajectories were sampled randomly from the training set,
one epoch seeing each trajectory once.
Learning continued until
the joint likelihood of the traces
reached $0.99$
or until convergence.

To reduce the size of the sizeable
prediction-based state spaces,
we first apply principal component analysis
and keep only those features that together
explain 99\% of variance
in the training set.

\subsubsection{Experimental Structure}
We generated 100 Craftworld tasks
of size 4 by 4,
all with the same structure as outlined in
\figref{fig:craftworld}.
To evaluate performance of our option-enabled agent
on a task,
we ran option discovery 10 times
on a single trace each of the remaining 99 tasks.
For each option discovery outcome,
we train the option-enabled agent 10 times for 100 episodes each,
totalling 100 samples per target task.
Results are averaged over all 100 target tasks,
totalling 10000 samples per result.

\subsubsection{Evaluation: Expressivity of Abstraction}
\begin{figure}[t]
  \centering
  \includegraphics[width=.7\linewidth]{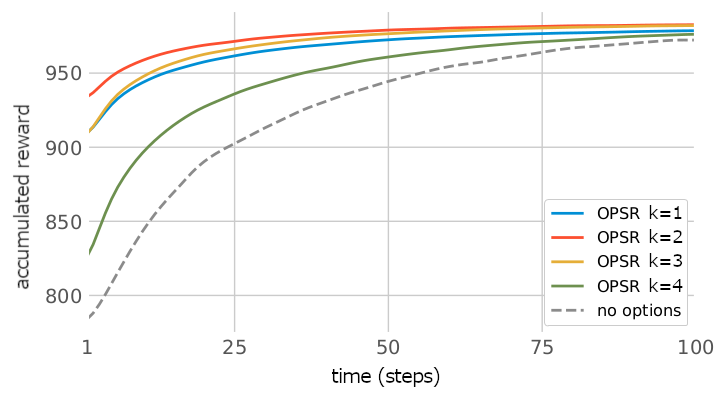}
  \caption{Learning curves for
  an agent without options
  and agents with three discovered
  OPSR $\phi_{\sigma,k}$ options,
  with $k$ ranging from 1 to 4.
  A higher accumulated reward is better.
  }
  \label{fig:craftworld_3options}
\end{figure}
What is the impact of the expressivity
of our chosen state abstraction?
In other words,
how much information should we include?
The fewer expected outcomes we include in our state abstraction,
the coarser it is,
the more states get aggregated and the more we can reuse behavior,
yet the less information is available to base the behavior on.
We investigate this question by varying the parameter
$k$ in $\phi_{\sigma, k}$.

\figref{fig:craftworld_3options} shows
accumulated reward per epoch for the primitive agent
as well as for 4 settings of the option-enabled agent,
corresponding to $k \in \{1,2,3,4\}$.
It also shows the learning curve of an agent without access to options.
The option-less agent is outperformed by each instantiation
of our option-enabled agent,
with a learning curve that is shallower at each epoch.
Even options based on an OPSR with $k=1$
do better,
from which we can conclude that
even a very restricted state abstraction
can be expressive and general enough to learn transferable behavior on.

The learning curve of the setting with $k=3$
outperforms any other one at each epoch.
Interestingly,
the setting $k=4$ shows decreased performance.
There appears to be a trade-off between complexity and performance,
where past a certain threshold more complex representations
incur penalties.
Of course,
considering action sequences of length $k=4$
adds four times as many features.
We think this increase in features resulted in more distinct states,
which require more learning.
At a given point, there simply is a need for more data,
i.e.\ demonstrated trajectories.

\subsubsection{Evaluation: Number of Options}
\begin{figure}[ht]
  \centering
  \includegraphics[width=.7\linewidth]{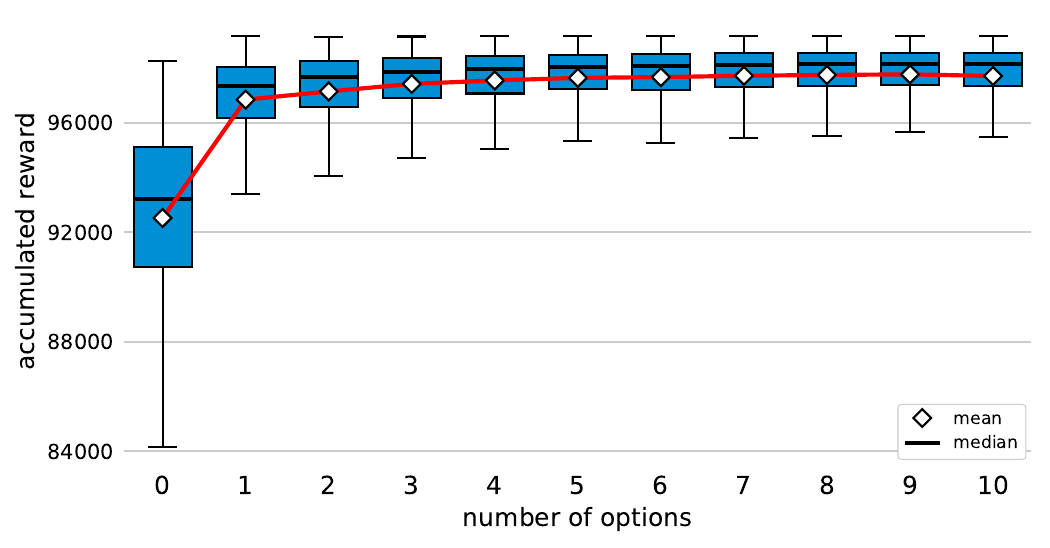}
  \caption{Craftworld for OPSR 1-3.}
  \label{fig:craftworld_pro3}
\end{figure}
Since we pose no restrictions on the behavior
of options aside from the
bias implicit in parameterisation,
it is interesting to investigate the different behaviors
emerging with different amounts of options in play.
To describe the performance of an agent
we can condense and summarize a single learning curve
by simply looking at the area under that curve.
We can then look at the distribution of
areas under the reward curve (AURC) across runs.

\figref{fig:craftworld_pro3}
shows such distributions of AURC,
one for each setting,
each setting corresponding to a different number of options.
It shows that even a single discovered option
can boost learning as compared to learning without options.
Increasing the number of options further helps learning
up to a point where it saturates.
There exists a trade-off in using options:
options encode useful and reusable behavior
yet come at the cost of a larger action space
for the high-level agent,
which in turn requires more learning
to handle.
This is exactly the trade-off we discussed earlier in
section~\ref{sec:skills_that_transfer}.

\subsubsection{Option Trace Inspection}
We have shown that our options
help speed up learning,
yet it is unclear how exactly they are helping.
The outcome-based abstraction is complex and hard to interpret
and an option's behavior is learned entirely from scratch.
In order to gain some insight in how exactly options helped,
we ran option discovery
once
for $4$ options with an OPSR
with $k=3$.
We then want to investigate how this set of learned options
tends to be used
across multiple runs
of learning an option-enabled policy.
In particular,
we want to investigate whether these options implemented generic
all-purpose behavior that would be useful anywhere,
or whether they implemented more specific behavior
for a particular subtask
that the option-enabled agent could rely on at the right time.
In order to test this,
we trained an option-enabled agent
to completion
on the Craftworld task
depicted in
\figref{fig:craftworld},
resulting in one final, ideally optimal trace.
We repeated this a number of times
to account for randomness,
seeing if different behaviors emerged given the same set of options.

In order to investigate the resulting trajectories we devised a method
to extract trajectories that are somehow \emph{similar}
so we do not compare option use between wildly different trajectories.
To this end we define a trajectory's \emph{story}
as the sequence of
non-zero outcomes in that trajectory.
For our investigation of option use
it is sufficient that we look at trajectories with the same \emph{story}.
We extracted the stories for all final trajectories,
then randomly selected $100$ trajectories that converged to the dominant story,
each from a different option-enabled agent that was trained from scratch.
The dominant story for this set of options
was
$(\texttt{stone}, \texttt{head}, \texttt{wood}, \texttt{handle}, \texttt{hammer})$,
i.e.
crafting the hammer head first,
then the handle from scratch.

For each trajectory,
we looked at which option was in use at which time step.
\figref{fig:craftworld_option_stats}
combines all $100$ trajectories
by showing for each time step
how many out of the 100 trajectories
spent that time step in which option.
We refer to the resulting graph as
the \emph{option occupancy graph}.
The optimal trajectory consists of $14$ steps,
which is why the count drops significantly from time step $15$ onwards,
with the longest trajectory taking 24 time steps.
\begin{figure}[h]
  \centering
  \includegraphics[width=0.7\linewidth]{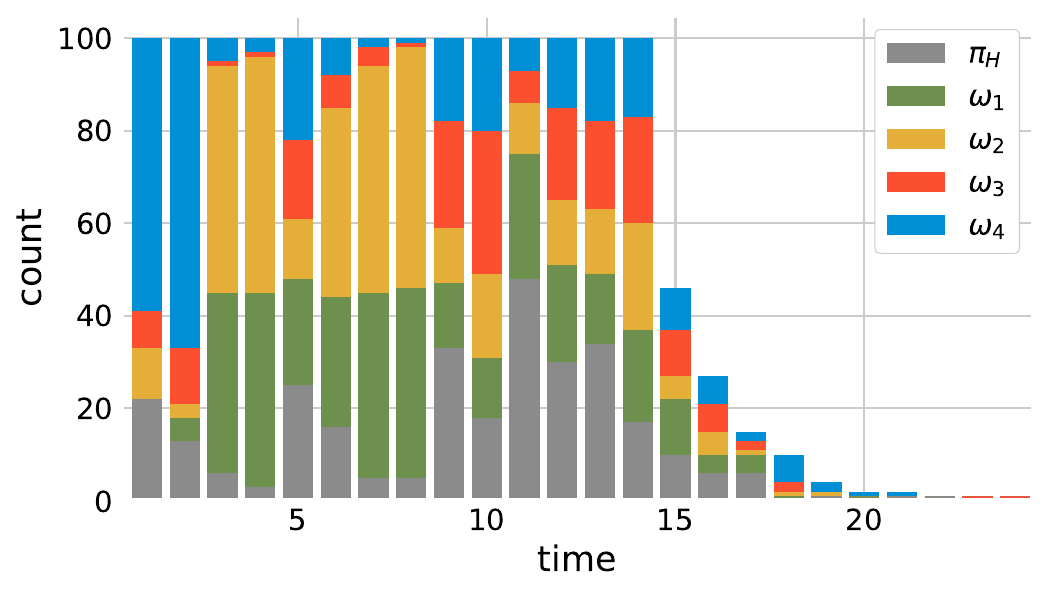}
  \caption{Option occupancy for $100$ traces following the same story.
  $\pi_H$ denotes when the high-level controller
  picks a primitive actions instead of an option.}
  \label{fig:craftworld_option_stats}
\end{figure}

There are some clear patterns in
\figref{fig:craftworld_option_stats}.
Option $\omega_4$
seems to be clearly useful at the start of the task,
with nearly 60 trajectories
spending the first time step in option $\omega_4$.
It is followed by options
$\omega_1$ or $\omega_2$
(and, on closer manual inspection of individual trajectories, sometimes both).
This interesting splitting of behaviors
turned out
to be due to
the multiple equally optimal ways
the agent can navigate from one point to the next,
despite following the same story.
Upon inspection,
each of $\omega_1$ and $\omega_2$
appeared to deal with one such an optimal route.
Option $\omega_3$ was used more towards the end
of the trajectory,
though so was the primitive behavior
which appeared to be preferable to options
during that period of an agent's lifetime.

If options did not specialize behavior and instead
were useful everywhere,
we would expect to see a very uniform option occupancy graph
with each color roughly equally present at each time step.
The option occupancy graph looks far from uniform.
Seeing the same patterns reoccur
in differently initialized
option-enabled agents
demonstrates that each option implemented
a certain behavior
that was useful for different parts of the greater task.

When we re-ran option discovery with different seeds,
quite different option behaviors emerged.
This was apparent from the dominant story
being different (yet, again, optimal),
along with very different option occupancy graphs.
This is because option discovery is an under-constrained problem;
many discovered option sets fit the demonstrated trajectories equally well.
This is necessary by design;
by limiting expressiveness,
we prevent options from overfitting on particular traces,
ensuring that they are reusable across tasks.

\subsection{The Lightworld Domain}%
\label{sub:domain_lightworld}
The Lightworld domain was conceived by \citet{Konidaris2007}
who tailored it to demonstrate
their theory of
\emph{agent-space} representations.
We treat it here because of the similarity
with our work.
An \emph{agent-space} representation
is quite loosely defined
as an abstraction that captures local information
centered on the agent.
An agent-space abstraction
should be engineered to be coarser than a fully Markov representation
for the same reason we employ coarse OPSRs:
to enable reusable behavior across a task.
In fact,
agent-spaces shine when the space is the same across tasks
so behavior can be reused across tasks.
From this description
outcome-predictive representations
{are} themselves
agent-space representations,
as they are by definition centered on the agent;
expected outcome sequences are based on action sequences
that always start from the agent's current state.
Since agent-spaces are fully engineered,
predictive representations can be seen as a generalisation
that allows for a more principled
and even automatic construction
of agent-spaces.

\paragraph{}
A Lightworld task exists of
a number of rooms in succession.
Exiting the final room means success.
Aside from doors
a room can contain keys and locks.
Some rooms do not contain keys,
in which case the lock can be opened without them.
Keys are picked up by navigating over them
and employing a
\texttt{pickup} action.
Locks require an
\texttt{unlock} action.
\begin{figure}[ht]
  \centering
  \includegraphics[width=0.35\linewidth]{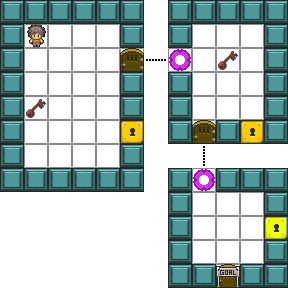}
  \caption{A Lightworld task with three rooms,
    implemented in VGDL 2.0.
    The same task description can be found in \citep{Konidaris2007}.%
  }
  \label{fig:lightworld}
\end{figure}
\subsubsection{State Representations}
We again employ a one-hot Markov state representation
for both the option-less and the option-enabled learner.
For options we consider two different state abstraction constructions:
the agent-centered representation described by \citet{Konidaris2007}
and our prediction-based OPSR.

\citet{Konidaris2007}
consider the agent as having sensors which can sense notable things
in its environment.
The three objects that can be sensed
are key, lock, and door.
Sensors only work in a straight line
and encode their findings as a value ranging from 0 to 1;
0 when 20 tiles away and 1 when on top.
Given the four cardinal directions this results in 12 values.

Our Lightworld OPSR is based on the following outcome function:
\begin{equation*}
  \sigma(s,a,s') = (
  \Delta \texttt{room},
  \Delta \texttt{key},
  \Delta \texttt{door},
  \Delta \texttt{goal}
  )
\end{equation*}
\subsubsection{Experimental Structure}
We closely follow
the experimental protocol where Lightworld was first conceived
to allow for a fair comparison between approaches.
We randomly generated 100 Lightworld tasks
with 2 to 5 rooms each,
each room 5 to 15 tiles wide and high,
and each containing a key with probability $1/3$.
The protocol to generate samples is the same as before.

\subsubsection{Evaluation: Agent-Space and OPSR Options}
We investigate discovered options
for agent-space and outcome-based state abstractions.
In both cases
we only show results for 4 options
as that was the optimal number of options
according to a parameter sweep
(different from the 3 predefined options in \citep{Konidaris2007}).
\figref{fig:lightworld_by_nrooms_boxplots}
presents results
for target tasks with differing numbers of rooms.
Note that even though performance is grouped by the number of rooms
in the target task,
options were still learned from all other tasks
regardless of the number of rooms.

In smaller tasks with only two rooms,
using either type of option does not improve learning.
It is only as tasks grow more difficult
(with more rooms)
that the benefit of options outweights their cost,
with tasks consisting of five rooms benefiting the most.
Interestingly,
our OPSR-based options outperformed
options based on the manually engineered agent-space,
despite the agent-space describing longer action sequences
and so having information about more steps into the future.
It appears that OPSRs capture
essential local information that
benefits reusable options,
more so than a state abstraction designed by an expert.
Looking at the setting with 5 rooms,
OPSR-based discovered options
also constitute the only approach that nets a positive return on average,
with the option-less agent evidently struggling.
These reusable options appear
crucial for tackling a task of only moderate complexity.

\begin{figure}[ht]
  \centering
  \includegraphics[width=.7\linewidth]{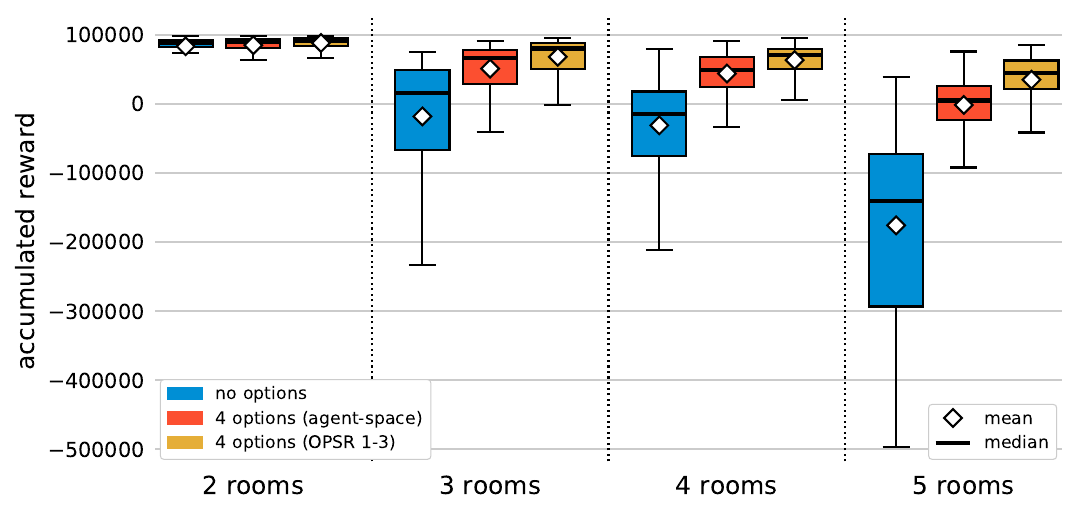}
  \caption{%
  Distribution of areas under the reward curve for target Lightworld tasks
with different numbers of rooms. Three settings are shown:
without options, with discovered OPSR options and with discovered agent-space options.}
  \label{fig:lightworld_by_nrooms_boxplots}
\end{figure}
\subsubsection{Evaluation: Transfer Between Subdomains}
\begin{figure}[ht]
  \centering
  \includegraphics[width=0.7\linewidth]{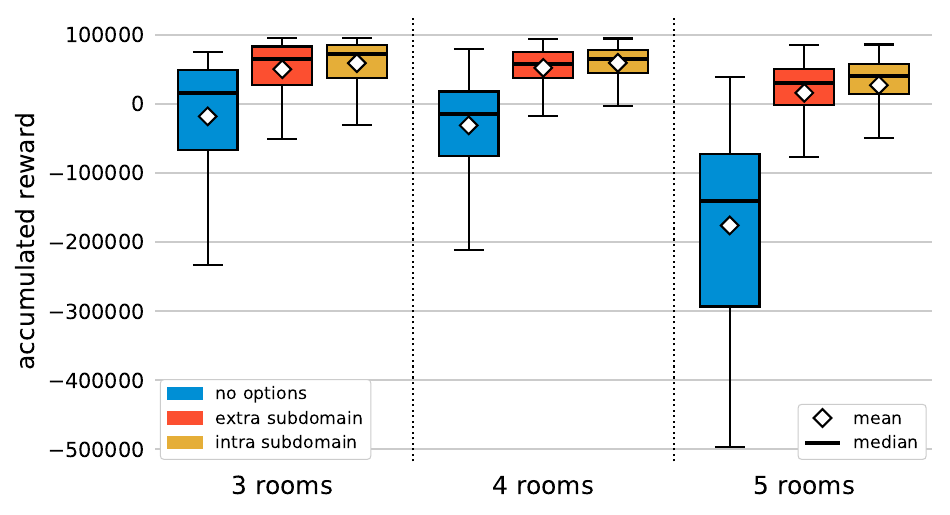}
  \caption{Distribution of areas under the reward curve in the Lightworld domain
    with transfer between and within subdomains.
    Results labeled `extra subdomain'
    have always been learned from tasks with only 2 rooms,
  whereas results labeled `intra subdomain'
have been learned from tasks with the same number of rooms as indicated.}
  \label{fig:lightworld_subdomains_transfer_boxplots}
\end{figure}
The Lightworld domain consists of tasks
made up of 2 to 5 rooms.
We could say that all tasks with the same number of rooms
belong to the same
\emph{subdomain}.
This raises the question:
how well does transfer work between different subdomains?
That is,
given demonstrated trajectories from one subdomain,
how well do the options learned from these trajectories
perform on tasks from another subdomain?

We have decided to look into this question
using options learned from tasks with 2 rooms,
evaluated on tasks with 3 to 5 rooms,
(\emph{extra-subdomain} transfer).
This is more than simple transfer between subdomains;
a task with more rooms is decidedly more complicated
than its room-deficient cousins,
as can be seen from the results of the previous experiment
in \figref{fig:lightworld_by_nrooms_boxplots}.
This allows us to investigate
whether options learned in simpler tasks
perform in more complicated tasks.

Figure~\ref{fig:lightworld_subdomains_transfer_boxplots}
shows the performances of
\emph{extra-} against \emph{intra-subdomain} transfer.
The latter refers to the case where options are learned from tasks
with the same number of rooms.
It is clear that extra-subdomain transfer works:
options learned in simpler tasks are useful in more complicated tasks,
despite never having encountered one before.
What is more,
options learned from a simpler subdomain almost match the performance
of options learned within the same, more complicated subdomain,
as can be seen from comparing extra-subdomain with intra-subdomain transfer boxplots.

These results point at the interesting possibility of \emph{curriculum learning}.
To solve a complicated task where one has access to simpler ones,
one could learn options in progressively harder tasks
in a \emph{curriculum} fashion,
each set of options making the next subdomain easier to tackle.

\section{Discussion of Related Work}%
\label{sec:related_work}

\subsection{Transfer Through State Abstraction}%
\paragraph{Transfer Setting}
Transfer is hard
and generally not very well understood
in the context of reinforcement learning.
One indicator of this
is the lack of a common and concise language
for describing a transfer setting,
for example in terms of the allowed differences
between tasks.
Some papers
allow differences only in the reward dynamics
(e.g. goal location;
\citep{Barreto2017}),
others only
changes in what might be referred to as
\emph{layout}
while keeping
the (loosely termed) \emph{dynamics}
the same,
and others again
only slight changes in
the \emph{dynamics}
(e.g. actuator strength;
\citep{Yang2020, Barekatain2020, Lee2020}).
All of the above examples
are cast under the same nomer of
\emph{transfer}
and it can be a considerable
undertaking to figure out the transfer
setting in a particular work.
We preface with this
because our area of transfer is a fairly general one,
with OPSRs working across
different transition dynamics, layout,
and even reward dynamics.

\paragraph{Underlying Transfer Method}
All transfer
is by definition
the result of some mapping,
implicit or otherwise.
This paper has focused on transfer
through state abstraction,
where the mapping
is an equivalence relation
resulting from states sharing an abstraction
or between abstraction and ground states.
This mapping can be relaxed
--
many papers
achieve partial transfer
through similarity (rather than equivalence)
based on some
(often implicit) measure
(see also discussion in Section~\ref{sub:representation_and_abstraction}).
We found that many papers that we will discuss shortly
rely on state abstraction
(or representation)
to achieve transfer
so we will not discuss
in detail
less common transfer
approaches based on other types of mapping
(e.g. the task mappings of \citep{Taylor2007b}).
We also largely focus on transfer in model-free RL
rather than model-based RL
\citep{Tirinzoni2020, Winder2020, Perez2020}.

\paragraph{State Abstraction per se}
Knowledge transfer in reinforcement learning
through state abstraction
is not a particularly new notion
\citep{Andre2002},
relying on a long tradition of abstraction in planning
\citep{Amarel1981}.
State abstraction tends to be seen
as state aggregation
\citep{Li2006},
as the removal of distinguishing features,
two states appearing the same as a result.
This traditional view still stands,
and can for example be found
in the more recent work of
\citeauthor{Akrour2018}
on robot manipulation
\citep{Akrour2018}
and
\citeauthor{Abel2016}
on lifelong learning
\citep{Abel2016, Abel2018a}.
Finally,
\citeauthor{Frommberger2010}
describe different types of abstraction
and stress that abstraction
is not merely about reducing data,
but also about focusing on relevant information
\citep{Frommberger2010}.

\paragraph{Agent-Space and Deictic Representations}
Our state abstractions
that describe possible behaviors
hark back to the
\emph{deictic} (roughly meaning \enquote{centered on the agent}) representations
of \citeauthor{Agre1987},
who realized
that \emph{ego-centric} state representations
capture general concepts
\citep{Agre1987}.
\citeauthor{Konidaris2007}
picked up this idea again
and termed their
own agent-centric
state abstractions
\emph{agent-space representations}
\citep{Konidaris2007}.
While agent-spaces are useful for transfer,
their problem lies with their manual engineering
\citep{Konidaris2012a}.
Problems aside,
task-agnostic and sufficiently coarse,
agent-spaces perfectly capture what it takes to transfer beneficially.
It is for this reason that we think that so much related work
has implicitly opted for agent-space--like representations
that they ultimally owe their ability
for transfer to
\citep{Chalmers2018,Ferret2020}.
Others have attempted to learn
shared representations
which inevitably must have commonalities
with agent-spaces
\citep{Roy2020,Wu2020}.

\paragraph{Successor Representations}
While perhaps not its own category of representation,
successor representations
(originally by \citep{Dayan1993})
deserve their own discussion
for the frenzy of related work that they sparked.
In the paper
that blew new life into the idea,
\citeauthor{Barreto2017}
introduced
\emph{successor features},
representations
(themselves learned on top
of a given agent-space--like representation)
that enabled transfer
between tasks with the same transition dynamics
but different reward dynamics
by learning a value function
independent of the task's reward weighting
\citep{Barreto2017, Barreto2018}.
Since then,
the framework has been
insightfully compared to model-based RL
\citep{Lehnert2019},
had its applicability for transfer extended
\citep{Madarasz2019},
and been extended with options
\citep{Ramesh2019, Barreto2019}.

\paragraph{Relational Representations}
A traditional idea
that has recently gained more traction again
is that
\emph{relational representations}
are required for generalisation
\citep{Dzeroski2001}.
It has even been used to achieve
transferable options
\citep{Croonenborghs2007}.
\citet{Hamrick2018}
concluded that
a \emph{relational inductive bias}
is elementary for generalisation
in experiments not unlike our own.
This premise is for example explored
by
\citep{Zambaldi2019}
who implemented it in a deep learning context,
though the authors note that it is not yet fully understood
why relational representations afford generalisation.
Also relying on graph networks,
\citet{Garg2020}
learned generalized policies
for relational MDPs,
expressing their hopes
for the future of relational representations.
We posit that all relational representations
are predictive in nature
and suspect that
they can be translated into our framework of outcomes.
In the example of the Box World of \citep{Zambaldi2019}
where the agent needs to open a sequence of boxes with corresponding keys,
a high weight between a key and a box corresponds
to the \emph{outcome} of the chest opening
after picking up the key
and walking up to the box.

\paragraph{Predictive Representations}
There have been multiple avenues of work
pursuing predictive representations in RL.
\citeauthor{Littman2002}
sparked an interesting series of research
on Predictive State Representations in
Partially Observable MDP's
\citep{Littman2002, Singh2004}.
Based on the idea of
\emph{tests}
\citep{Rivest1993},
they represented a state
in a POMDP
in terms of predictions of observations
given a history of observations.
These representations were sufficient to describe
the underlying hidden states.
Though different from our setting
in the task descriptions
and the use of observations,
\citet{Rafols2005}
showed that PSRs enabled transfer
and even showed that there was a trade-off
in the expressivity of the representation
and the benefit to learning.
PSRs have even been extended with options
\citep{Wolfe2006}
and combined with relational knowledge
\citep{Wingate2007b}.
PSRs found new life
through the spectral learning approaches
of
\citeauthor{Boots2012}
(culminating in their thesis; \citeyear{Boots2012}),
which still makes up an active area of research
\citep{Kulesza2015}.

In an originally different avenue of work,
\citet{Sutton2009}
claims that all knowledge is predictive in nature.
\citet{Sutton2011}
later
continue this idea in a practical setting
where a robot learns to predict sensorimotor signals
as \emph{general value functions}.
\citeauthor{Sutton2015}
build on this and investigate whether
the approach can be used to learn PSRs,
with promising results
\citep{Sutton2015}.
\citeauthor{Modayil2014}
linked this work to
\emph{nexting}
\citep{Modayil2014},
the phenomenon of humans and animals
continually making a large number of predictions about
what happens around them
on multiple timescales,
and demonstrated a scalable way to implement
the same phenomenon in a robot.
Since the purpose of nexting is not yet entirely understood,
we suppose that nexting
could be the mechanism
through which animals
represent abstract concepts,
in line with the claims of
\citet{Sutton2009}.

Leveraging predictions
for state representations
is still a very active area of research
to this day
\citep{Misra2020,Guo2020,Schwarzer2021,Lee2020b}.
It is because of this entire body of work
regarding predictive representations
that we are optimistic
about the feasibility of learning OPSRs.

\paragraph{Affordances}
Finally,
there has been some interesting
work on \emph{affordances},
originally defined by
\citeauthor{Gibson1977}
as the possible interactions
an organism can detect in its environment
\citep{Gibson1977}.
Despite inconsistencies
in the ontological use of the term in the literature
\citep{Dohn2006, Dohn2009},
work on affordances generally
tackles the question
\enquote{What can I do here?}
\citep{Khetarpal2020}.
Perhaps unsurprisingly then,
the concept of affordances
has inspired work in
rich environments
requiring both navigation and interaction
\citep{Abel2014, Qi2020, Nagarajan2020}.
We discuss affordances in the context of our work
because OPSRs capture what is possible in a particular state
by capturing all future outcomes.
We believe that there is a deep connection
between predictive representations
and affordances
--
one that merits more discussion.

\paragraph{Multi-Task Representation Learning}
Finally,
\emph{multi-task representation learning}
does not subscribe to a particular view
and instead tries to learn good (shared) state representations
in a setting
where one has access to
a distribution over tasks
or even to both a source and target task
\citep{Maurer2016, Ruder2017, Zhang2019, BenDavid2020}.
This approach relies on a considerable amount of given information
which is not always feasible.
The well-defined setting does give rise to interesting
theoretical investigations
into using representations that are shared across tasks
\citep{Wu2020, Deramo2020, Du2020b}.

\subsection{Hierarchical Reinforcement Learning}%
\label{sec:related_work_hrl}
\paragraph{HRL per se}
The main goal
currently under investigation in hierarchical reinforcement learning,
i.e.\ RL relying on action abstraction,
is to make longer and more complex tasks more tractable
by virtue of task decomposition
(be it through options, skills, abstract actions, macros, etc)
\citep{wen2020efficiency}.
One way HRL delivers on this is
targeted exploration
\citep{Jinnai2019b, Jinnai2020}.
Another way is more consistent, targeted action
that aims to reach subgoals,
an endeavour that has sparked a series of work
in \emph{subgoal} discovery
\citep{Machado2017, Nair2019, Goyal2019, Jurgenson2020,wen2020efficiency}
and even
\emph{subgoal direction} discovery
\citep{Vezhnevets2017, Nachum2019}.
While abstract actions
are usually combined temporally,
some work also exists on
\emph{composite actions},
leading to a different kind of hierarchy
\citep{Peng2019, Tian2020}.

\paragraph{State Abstraction in HRL}
One part of any HRL approach
is the state space that the abstract action operates over.
It is possible to simply use the original state space,
the same one the high-level agent uses
\citep{
Nachum2018}.
Alternatively,
end-to-end approaches
that learn both high-level
and low-level agents (abstract actions)
can simultaneously learn representations for
the abstract actions to operate over
\citep{Bacon2017, Vezhnevets2017}.
These option-specific representations
can be seen as state abstractions.
In contrast with previous work,
\citeauthor{Nachum2019}
learn the low-level representation from unsupervised objectives
\citep{Nachum2019}
This representation transfers to different tasks,
although this representation transfer does not result
in policy transfer.
Of special interest to us,
this work also investigates
sub-optimality of a representation
when used in a hierarchical setting.
Other approaches to
low-level state representations
include
grounding in language
\citep{Jiang2019}
and relying on concealed information
\citep{Vezhnevets2020}.

\paragraph{Transfer in HRL}
Finally
there is the holy grail,
a long-standing dream for HRL:
reusing abstract actions in other tasks
\citep{Barto2013}.
Or,
in the common case where state abstraction is used
as the underlying method:
combining state abstraction and action abstraction
\citep{Konidaris2019}.
Most of the HRL approaches discussed so far
relied on task-specific representations
and as such did not transfer to new tasks.
The insight that state representation is the limiting factor
led to the early proposal
of \emph{agent-space--based options}
by
\citeauthor{Konidaris2007}
\citep{Konidaris2007, Konidaris2012a},
though their agent-spaces (task-agnostic representations)
were engineered
and their options partially pre-defined.
Recent approaches try to learn both a
task-agnostic state abstraction
and the transferable abstract action on top of that.
One such a line of enquiry
focuses on \emph{object-centric representations}
\citep{Topin2015, Zadaianchuk2020}
which are again reminiscent of agent-spaces.
\citeauthor{Modhe2020}
achieve transfer of options by
manually introducing partial observability
(and therefore state aliasing through coarsening)
in state abstractions.
Others attempt to learn a suitable state representation
from a source domain
which tends to need adaptation in a target task
\citep{Frans2018, Peng2019, Qureshi2020}.

Outcome-predictive state representations
and the theory behind them
give an answer to the question
of what makes a state representation useful for transfer,
taking away the need to rely on agent-space--like engineering.
What's more,
combined with options
they also give an answer
to the big question of how to
make HRL work for transfer
as we have demonstrated.
Finally,
work on learning transferable state representations for HRL
is very recent and we believe that
the insights around OPSRs
can guide this area of research in the right direction.

\subsection{A Cognitive Perspective and Motivation}


\citet{Wheeler1999}
argue that much of complicated animal behavior
emerges from a complicated
interplay between animal and environment,
where often this behavior is attributed only
to the animal's \enquote{intelligence},
its cognitive functions.
This phenomenon of over-attribution
has been termed
\emph{causal spread}
by
\citet{Wheeler1999}.
In this paper
we heed the caution of \emph{causal spread}
and advocate for a shift away from the
traditional agent that simply interfaces with the world
in favor
of an agent that is fully embedded in the world
(an \emph{embodied, embedded mind}, \citep{Clark2008}).
In the traditional cartesian view
the agent perceives its environment,
reconstructs an internal representation,
and only then leverages this representation for action
\citep{Wheeler2005}.
In contrast
we advocate for perception
that is heavily geared towards
(and guided by)
action primarily,
where perceiving an object for example
does not happen
in terms of objective properties
but rather in terms
of how to negotiate that object.

\citet{Clark1997}
discusses a robot
which employs a map
that is encoded in terms of percepts and actions.
The map is prescriptive,
it describes locations in terms of which movements
would get the robot to different locations
and the robot has no planning module outside the map
-- the map is itself the controller.
\citeauthor{Clark1997}
coined the term
\emph{action-oriented representations}
to describe representations of this nature
\citep[p.~47-51]{Clark1997}.
An action-oriented representation of an object,
instead of describing the object,
immediately encodes how to negotiate the object.
Closely related is
Gibson's notion of \emph{affordances},
which are possible interactions
an organism can detect in its environment
relative to its own skills
\citep{Gibson1977}.
In later work,
\citeauthor{Clark2014}
describes perception as fundamentally
\emph{predictive}
\citep{Clark2014},
delivering action-oriented representations
that,
once modulated,
prescribe how to act and respond
--
that is, representations of
\emph{affordances}
\citep{Clark2016}.
\comment{
  This would be great:
  both predictions
  and affordances have been thought to be great basis for generalizability
  because...
}

In our example of unlocking a door,
an action-oriented representation of a lock
is nothing so descriptive as
\emph{a brass rectangle with a keyhole},
but perhaps as
\emph{fitting a key}
--
the key itself
as \emph{unlocking the door}
or as \emph{fitting the lock}.
Now,
how can an agent know in advance that a sequence of primitive actions
will lead to a complicated behavior such as opening a lock?
It needs to be capable of postulating goals.
Rather than the agent imagining complicated future states,
we follow the caution of causal spread
(of over-attributing intelligence to an agent),
and instead
we think it is the environment which signals
to the agent
when something has succeeded,
signals what is relevant to the agent's subsequent behavior
and its goals.
When turning a key in its lock
we do not model
the `openness' of the door,
be it as a percept or otherwise,
instead we simply anticipate
a click of the lock
or a jolt of the door.
Anticipating select signals
is far cheaper than complex internal modelling
and,
perhaps more importantly,
constitutes a kind of universal knowledge.
All locks click or jolt when opened,
their shapes or sizes matter not.

To formalize this
we introduce \emph{outcomes},
cues derived from transitions in the environment
that are important to the agent.
This is not to be taken too literally:
the environment does not actively signal the agent.
Instead the agent perceives outcomes
in transitions of the environment
because they are relevant to the agent.
Example outcomes are
\enquote{the key slides in the lock}
or
\enquote{the lock clicks open}.
Outcomes are derived from the agent's perception of these events.
They tend to signal the enabling of future behavior:
the key being in the lock allows turning the key,
the lock clicking allows one to successfully turn the door handle.
Importantly,
outcomes form a domain-wide language
which allows us to talk about state transitions
in a way that is completely independent of the underlying
task-specific state.
They capture what is really important
for a particular goal
and ignore what is extraneous.
They form the base of our abstractions.


To achieve an action-oriented representation
like that of
\citet{Clark1997}
we represent states in terms of
the future outcomes they afford.
That is,
we represent states in terms of predictions of what outcomes follow,
following different short term behaviors.
To take our example,
there are a few
combinations of muscular movements
that would successfully
enter a key into a lock,
quite a few more than would result in missing the lock
and perhaps some that end in scratching the door.
Put together,
the representation of a key becomes the union
of all these predicted outcomes
following different behaviors.
We term this representation
an
\emph{Outcome-Predictive State Representation} (OPSR).
It ticks all the boxes
of the previous narrative:
an action-oriented, predictive representation
which captures affordances.

\paragraph{}
In this paper, we both argue and demonstrate that it is possible to leverage this representation
to learn transferable skills.
Two sets of locks and keys may visually look very different,
yet look very similar if we look at them in terms of predicted outcomes.
It is this aggregation
of distinct situations
under a shared representations
that makes OPSR
an abstraction.
This combined state and action abstraction
\citep{Konidaris2019}
is key to this paper.

\section{Conclusion and Future Work}%
\label{sec:conclusion}
This paper contains both theoretical and practical contributions
on transfer in Reinforcement Learning.
We first introduce a strong theoretical framework
based on state abstractions
that describes
the quality of a transferred policy
(\emph{transfer value})
in both single-
and multi-task settings.
Based on these insights
we propose a new state abstraction
based on predictions of
reward-relevant changes in the environment
(outcomes),
termed \emph{Outcome-Predictive State Representation} (OPSR).
Since outcomes describe transitions
in a task-agnostic manner,
OPSRs equally form a task-agnostic state description
which naturally leads to transfer.
We show that OPSRs are particularly useful
because of their bottom-up construction
which automatically leads to
\emph{transfer optimality}
in a class of MDPs which we term
\emph{plannable}.

Next,
we demonstrate how predicting fewer outcomes
coarsens the state abstraction,
leading to state aliasing and
increased opportunity for transfer.
While coarsening normally incurs a penalty in \emph{transfer value},
subtasks can offer the best of both worlds.
We introduce OPSR-based skills
that transfer between tasks
as a means to leverage
both state abstraction and action abstraction
to achieve transfer.
In a series of experiments
conducted with
a new library for transfer research
(VGDL 2.0),
we show that OPSR-based options
can be fully learned in one set of tasks
and subsequently speed up learning in
new and unseen tasks
with zero processing.

The practical investigation
in this paper
is a proof of concept
of the usefulness of OPSRs
that relies on two assumptions:
both \emph{outcomes}
and their predictions are available to the agent.
The next step left for future work
is to learn both of these.
The work in related areas
(as discussed in Section~\ref{sec:related_work})
and the observation that
outcomes are far more parsimonious goal representations
than fully concrete future states
make us confident in the feasability of this step.
In addition,
human brains appear to heavily limit predictions
according to needs and available actions
\citep{Konig2013};
such limitation would greatly improve tractability.
Another extension is to realize
an agent that learns OPSR-based skills
in an online fashion
rather than offline after episodes.
This would allow the options
to not only help with transfer but also
solving complex tasks.
Finally,
it would be interesting to consider multiple levels
in the hierarchy of skills.
Instead of sequences of actions after which outcomes are predicted
we would have sequences of options,
resulting in a high-level OPSR
that enables transfer at the higher level
in addition to the lower level.


\vskip 0.2in
\bibliography{references}

\appendix
\newpage
\section{Proofs}
\label{app:proofs}

\begin{customtheorem}{\ref{thm:abstraction_control}}
\end{customtheorem}

\begin{proof}[{\ref{thm:abstraction_control}}]
  \label{proof:abstraction_control}
  Let
  $\mdp =\langle \sset, \aset, p, r \rangle$
  be an MDP
  and let
  $\phi: \sset \mapsto \Phi$
  be a state abstraction
  that maps two states
  $s, t \in \sset$
  to the same equivalence class $\astate$.
  We will prove
  \eqref{eq:thm_state_abstraction_control}
  by contradiction
  in two parts,
  one for each direction of the biconditional.

  \paragraph{$(\bm{\Rightarrow})$}
  {We proceed with a proof by contradiction. }
  Assume that $\exists s, t \in \sset$ such that
  both of the following hold:
  \begin{gather}
      \phi(s) = \phi(t)
      \label{eq:proof_state_abstraction_control_right_given1}
      \\
      \exists \delta \in \Pi_\delta(\phi, \mdp):
      \delta(\cdot\mid s) \neq \delta(\cdot\mid t)
      \label{eq:proof_state_abstraction_control_right_given2}
  \end{gather}
  Any
  abstract policy
  $\pi_\phi:\Phi \mapsto \probset(\aset)$
  will have the same distribution over actions for
  the
  abstract state
  $\phi(s) = \phi(t)$:
  \begin{equation*}
    \pi_\phi(\cdot \mid \phi(s)) = \pi_\phi(\cdot \mid \phi(t))
  \end{equation*}
  By definition of derived policies
  along with
  the assumption in
  \eqref{eq:proof_state_abstraction_control_right_given1}
  we have
\begin{equation}
   \delta(\cdot\mid s) = \pi_\phi(\cdot\mid \phi(s))
    =
    \pi_\phi(\cdot\mid \phi(t)) = \delta(\cdot\mid t)
  \end{equation}
  This is in contradiction with
  \eqref{eq:proof_state_abstraction_control_right_given2}
  which proves the forward implication.

  \paragraph{$(\bm{\Leftarrow})$}
    {We again proceed with a proof by contradiction. }
    Assume  that $\exists s, t \in \sset$ such that:
  \begin{gather}
      \phi(s) \neq \phi(t)
      \label{eq:proof_state_abstraction_control_left_given1}
      \\
      \forall \delta \in \Pi_\delta(\phi, \mdp):
      \delta(\cdot\mid s) = \delta(\cdot\mid t)
      \label{eq:proof_state_abstraction_control_left_given2}
  \end{gather}
  Consider an abstract policy
  $\pi_\phi:\Phi \mapsto \probset(\aset)$
  which assigns different distributions over actions
  for abstract states $\phi(s)$ and $\phi(t)$:
  \begin{equation*}
    \pi_\phi(\cdot \mid \phi(s)) \neq \pi_\phi(\cdot \mid \phi(t))
  \end{equation*}
  By definition of derived policies along with
  the assumption in
  \eqref{eq:proof_state_abstraction_control_left_given1}
  we have
  \begin{equation*}
    \begin{gathered}
      \delta(\cdot\mid s)
      =
      \pi_\phi(\cdot\mid \phi(s))
      \neq
      \pi_\phi(\cdot\mid \phi(s))
      =
      \delta(\cdot\mid s)
    \end{gathered}
  \end{equation*}
  This is in contradiction with
  \eqref{eq:proof_state_abstraction_control_left_given2}
  which proves the reverse implication.

  \qed
\end{proof}

\begin{customtheorem}{\ref{thm:control_coarseness_tradeoff}}
  
\end{customtheorem}

\begin{proof}[{\ref{thm:control_coarseness_tradeoff}}]
  \label{proof:control_coarseness_tradeoff}
  Let
  $\mdp =\langle \sset, \aset, p, r \rangle$
  be an MDP
  and let
  $\phi: \sset \mapsto \Phi$
  and
  $\phi': \sset \mapsto \Phi'$
  be two state abstractions.
  We prove
  \eqref{eq:thm_derivability}
  by proving its inverse instead:
  \begin{equation}
  \begin{gathered}
\phi' \not \finereq \phi
\;\Longleftrightarrow\;
\Pi_\delta(\phi', \mdp)
\not \supseteq
\Pi_\delta(\phi, \mdp)
  \end{gathered}
  \end{equation}

  Or, more explictitly:
  \begin{equation}
  \begin{gathered}
    \exists \delta \in \Pi_\delta(\phi, \mdp):
    \delta \notin \Pi_\delta(\phi', \mdp) \\
    \Updownarrow \\
      \exists s, t \in \sset:
      \phi(s) \neq \phi(t)
      \wedge
      \phi'(s) = \phi'(t)
  \end{gathered}
  \label{eq:proof_derivability_inverse}
  \end{equation}

  We proceed by proving both directions of the bidirectional
  separately by contradiction.

  \paragraph{$(\bm{\Rightarrow})$}
    It is given that
  \begin{equation}
    \exists \delta \in \Pi_\delta(\phi, \mdp):
    \delta \notin \Pi_\delta(\phi', \mdp)
  \end{equation}
  which can be expanded to
  \begin{equation}
    \exists \delta \in \Pi_\delta(\phi, \mdp):
    \forall \delta' \in \Pi_\delta(\phi', \mdp):
    \exists s \in \sset:
    \delta(\cdot \mid s) \neq \delta'(\cdot \mid s)
    \label{eq:proof_derivability_forward1}
  \end{equation}

  We prove the forward implication
  by contradiction.
  Assume that
  \begin{equation}
      \nexists s, t \in \sset:
      \phi(s) \neq \phi(t)
      \wedge
      \phi'(s) = \phi'(t)
  \end{equation}
  This can be rewritten as
  \begin{equation}
      \forall s, t \in \sset:
      \phi'(s) = \phi'(t)
      \Rightarrow
      \phi(s) = \phi(t)
  \end{equation}
  The contrapositive yields a more useful perspective:
  \begin{equation}
      \forall s, t \in \sset:
      \phi(s) \neq \phi(t)
      \Rightarrow
      \phi'(s) \neq \phi'(t)
  \end{equation}

  In other words,
  for every two states
  $s,t \in \sset$
  that $\phi$
  maps to different abstract states,
  $\phi'$ will also map them to different states.
  Every distinction that $\phi$ makes,
  $\phi'$ also makes.
  This means that for every
  abstract policy
  $\pi_\phi$,
  there exists
  an equivalent
  abstract policy
  $\pi_{\phi'}$
  such that:
  \begin{equation}
    \pi_\phi(\cdot \mid s) = \pi_{\phi'}(\cdot \mid s)
    \;\wedge\;
    \pi_\phi(\cdot \mid t) = \pi_{\phi'}(\cdot \mid t)
  \end{equation}

  By definition of derived policies,
  for every derived policy
  $\delta_\phi \in \Pi_\delta(\phi, \mdp)$,
  there is a matching derived policy
  $\delta_{\phi'} \in \Pi_\delta(\phi', \mdp)$
  such that
  \begin{equation}
    \delta_\phi(\cdot \mid s) = \delta_{\phi'}(\cdot \mid s)
    \;\wedge\;
    \delta_\phi(\cdot \mid t) = \delta_{\phi'}(\cdot \mid t)
  \end{equation}
  This is in contradiction with the assumption
  \eqref{eq:proof_derivability_forward1}
  that there exists at least one
  derived policy
  $\delta_\phi \in \Pi_\delta(\phi, \mdp)$
  that does not have an equivalent
  derived policy
  $\delta_{\phi'} \in \Pi_\delta(\phi', \mdp)$.
  The contradiction proves the forward implication.

  \paragraph{$(\bm{\Leftarrow})$}
  It is given that
  \begin{equation}
  \begin{gathered}
      \exists s, t \in \sset:
      \phi(s) \neq \phi(t)
      \wedge
      \phi'(s) = \phi'(t)
    \label{eq:proof_derivability_reverse1}
  \end{gathered}
  \end{equation}
  To prove the reverse implication,
  we proceed by contradiction.
  Assume that
  \begin{equation}
    \forall \delta \in \Pi_\delta(\phi, \mdp):
    \exists \delta' \in \Pi_\delta(\phi', \mdp):
    \forall s \in \sset:
    \delta(\cdot \mid s) = \delta'(\cdot \mid s)
    \label{eq:proof_derivability_reverse2}
  \end{equation}
  In other words,
  for each derived policy
  $\delta_\phi \in \Pi_\delta(\phi, \mdp)$
  there exists an equivalent
  derived policy
  $\delta_{\phi'} \in \Pi_\delta(\phi', \mdp)$.

  From
  \eqref{eq:proof_derivability_reverse1}
  we know that there
  exists
  an abstract policy
  which assigns different distributions over actions
  to abstract states
  $\phi(s)$
  and
  $\phi(t)$:
  $\pi_\phi(\cdot \mid \phi(s)) \neq \pi_\phi(\cdot \mid \phi(t))$.
  To contrast,
  because
  $\phi'(s) = \phi'(t)$
  there can be no
  $\pi_{\phi'}$
  such that
  $\pi_{\phi'}(\cdot \mid \phi'(s)) \neq \pi_{\phi'}(\cdot \mid \phi'(t))$.
  By the definition of derived policies,
  we have
  \begin{align}
    \exists \delta \in \Pi_\delta(\phi, \mdp)
    &:
    \delta(\cdot \mid s) \neq \delta(\cdot \mid t)
    \\
    \forall \delta' \in \Pi_\delta(\phi', \mdp)
    &:
    \delta'(\cdot \mid s) = \delta'(\cdot \mid t)
  \end{align}
  That is,
  it is possible
  that
  $
    \delta'(\cdot \mid s) = \delta(\cdot \mid s)
  $
  or
  $
    \delta'(\cdot \mid t) = \delta(\cdot \mid t)
  $,
  but never both.
  This contradicts the assumption
  in
  \eqref{eq:proof_derivability_reverse2},
  proving the reverse implication
  and completing the proof for the biconditional in
  \eqref{eq:thm_derivability}.

  \qed
\end{proof}

\begin{customtheorem}{\ref{thm:transfer_value_control_tradeoff}}
  
\end{customtheorem}

\begin{proof}[\ref{thm:transfer_value_control_tradeoff}]
  \label{proof:transfer_value_control_tradeoff}
  Let
  $\mdp = \langle \sset, \aset, p, r \rangle$
  be an MDP
  and
  $\phi: \sset \mapsto \Phi$
  and
  $\phi': \sset \mapsto \Phi'$
  be two state abstractions.
  In addition we have:
  \begin{equation}
    \exists \delta \in \Pi_\delta(\phi, \mdp): \forall \delta' \in \Pi_\delta(\phi', \mdp): v_\delta > v_{\delta'}
    \label{eq:control_granularity_given}
  \end{equation}
  For simplicity we will assume deterministic policies in this proof.
  Instead of distributions over actions,
  a policy $\pi: \sset \mapsto \aset$ simple maps a state to an action.
  The extension to stochastic policies is straightforward.

  First,
  we show that there must exist a state $s \in \sset$
  for which there is a derived policy for $\phi$
  that takes a different action from every derived policy for $\phi'$.
  By \eqref{eq:control_granularity_given}
  and the definition for inequality of value functions
  we have
  $\exists \delta \in \Pi_\delta(\phi, \mdp): \forall \delta' \in \Pi_\delta(\phi', \mdp): \exists s \in \sset:$
  \begin{multline}
    v_\delta(s) > v_{\delta'}(s)
    \iff
      \\
    r(s, \delta(s)) + \expected_{s' \sim p(s, \delta(s))} \left[ \gamma v_\delta(s') \right]
    >
    r(s, \delta'(s)) + \expected_{s' \sim p(s, \delta'(s))} \left[ \gamma v_{\delta'}(s') \right]
    \qquad
    \label{eq:control_gran_given_extrapolated}
  \end{multline}
If $\delta(s) = \delta'(s)$,
the inequality in \eqref{eq:control_gran_given_extrapolated}
can only hold if
  \begin{align}
    \expected_{s' \sim p(s, \delta(s))} \left[ \gamma v_\delta(s') \right]
    >
    \expected_{s' \sim p(s, \delta'(s))} \left[ \gamma v_{\delta'}(s') \right]
    \label{eq:control_gran_expected_value_inequality}
  \end{align}
  %
  %
Hence, either the actions at $s$ are different or inequality
\eqref{eq:control_gran_expected_value_inequality} also holds. The inequality
\eqref{eq:control_gran_expected_value_inequality}
can only hold if there is some other state
$s^\dagger$
for which the inequality in \eqref{eq:control_gran_given_extrapolated}
holds
by means of $\delta(s^\dagger) \neq \delta'(s^\dagger)$
or that there is some other state, $s^{\dagger\dagger}$, reachable in one step from\ $s^{\dagger}$, for which inequality \eqref{eq:control_gran_given_extrapolated} also holds.
By a process of induction, if Equation \eqref{eq:control_gran_given_extrapolated} holds for $s$ there must exist some state, reachable from $s$ in a finite number of steps under $\delta$ (and $\delta'$) for which the actions under these two policies differ.
In short,
there exists a state $s$
where $\delta(s) \neq \delta'(s)$.
We have:
\begin{equation}
  \begin{gathered}
  \exists \delta \in \Pi_\delta(\phi, \mdp): \forall \delta' \in \Pi_\delta(\phi', \mdp): \exists s \in \sset:
  \delta(s) \neq \delta'(s)
  \\
  \Updownarrow\\
  \exists \delta \in \Pi_\delta(\phi, \mdp): \nexists \delta' \in \Pi_\delta(\phi', \mdp): \forall s \in \sset:
  \delta(s) = \delta'(s)
  \\
  \Updownarrow\\
  \exists \delta \in \Pi_\delta(\phi, \mdp): \nexists \delta' \in \Pi_\delta(\phi', \mdp): \delta = \delta'
  \\
  \Updownarrow\\
    \exists \delta \in \Pi_\delta(\phi, \mdp):
    \delta \notin \Pi_\delta(\phi', \mdp)
  \end{gathered}
\end{equation}
By
Proposition~\ref{thm:control_coarseness_tradeoff}
we now have that
there do exists at least two states
$s, t \in \sset$
such that
\begin{equation}
  \phi(s) \neq \phi(t)
  \wedge
  \phi'(s) = \phi'(t)
\end{equation}
This proves implications (1) and (2) of the theorem.
By repeated invocation of
Proposition~\ref{thm:abstraction_control},
implications (3) and (4) follow naturally
from implications (1) and (2) respectively.

\qed
\end{proof}

\begin{customtheorem}{\ref{thm:transfer_value_control_tradeoff_corollary}}
  
\end{customtheorem}

\begin{proof}[{\ref{thm:transfer_value_control_tradeoff_corollary}}]
  \label{proof:transfer_value_control_tradeoff_corollary}
  This corollary is the result of a weakening of
  Theorem~\ref{thm:transfer_value_control_tradeoff}.
  The first result
  \eqref{eq:transfer_value_control_tradeoff_corollary1}
  is
  a direct result
  of the application of the definitions of
  coarseness and control granularity.

  \paragraph{}
  For the second result,
  consider the contrapositive of
  \eqref{eq:transfer_value_control_tradeoff_corollary1}:
  \begin{equation}
    \phi' \finereq \phi
    \;\vee\;
    \Pi_\delta(\phi, \mdp)
    \subseteq \Pi_\delta(\phi', \mdp)
    \Longrightarrow
    v_\phi(\mdp) \not > v_{\phi'}(\mdp)
  \label{eq:transfer_value_control_tradeoff_corollary_proof1}
  \end{equation}
  Application of
  Proposition~\ref{thm:control_coarseness_tradeoff}
  turns the disjunction
  in
  \eqref{eq:transfer_value_control_tradeoff_corollary_proof1}
  into a conjunction,
  arriving at
  \eqref{eq:transfer_value_control_tradeoff_corollary2}.

  \qed
\end{proof}

\begin{customtheorem}{\ref{thm:outcome_equivalent_length}}
  
\end{customtheorem}

\begin{proof}[\ref{thm:outcome_equivalent_length}]
  \label{proof:outcome_equivalent_length}
  This follows directly
  from
  Proposition~\ref{thm:expected_outcome_sequence_sequence_expectations}.
  If two sequences are equal,
  then any matching subsequences are also equal.
  Given that
  $
    \langle \sigma \rangle (s, (a_1,\dots,a_{n-1}))
  $
  and
  $
    \langle \sigma \rangle (s', (a_1,\dots,a_{n-1}))
  $
  are subsequences of
  $
    \langle \sigma \rangle (s, (a_1,\dots,a_{n-1}, a_n))
  $
  and
  $
    \langle \sigma \rangle (s', (a_1,\dots,a_{n-1}, a_n))
  $
  respectively,
  if the latter are equal then so must the former be.

  \qed
\end{proof}

\begin{customtheorem}{\ref{thm:expected_outcome_sequence_sequence_expectations}}
  
\end{customtheorem}

\begin{proof}[\ref{thm:expected_outcome_sequence_sequence_expectations}]
  \label{proof:expected_outcome_sequence_sequence_expectations}
  Since $\sigma(s, a, s')$ is defined as an expectation,
  we have:
  \begin{equation}
    \sigma(s, a, s') = \int_{\Sigma}
    p_\sigma(\sigma_i \mid s, a, s')
    (\sigma_1,\dots,\sigma_n)
    d(\sigma_1,\dots,\sigma_n)
    \label{eq:outcome_integral}
  \end{equation}
  Since $c(\sigma_1,\dots,\sigma_n) = (c\sigma_1,\dots,c\sigma_n)$
  and the integral of sequences in
  Eq.~\eqref{eq:expected_outcome_seq_integral}
  is a sequence of integrals, we can look at each element of the resulting
  sequence in turn.
  Consider the element in the resulting sequence at position
  $i$, with $1 \leq i \leq n$:
  \begin{equation}
    \int_{\Sigma}
    p_\sigma((\sigma_1,...,\sigma_n) \mid s_t, \aseq)
    \sigma_i
    d^n(\sigma_1,...,\sigma_n)
  \end{equation}
  Writing $p_\sigma((\sigma_1,\dots,\sigma_n)\mid s_t, \aseq)$ in full, this becomes:
  \begin{equation*}
    \begin{split}
      \int_{\Sigma}
      \sum_{s_{t+1},...,s_{t+n} \in \sset^{n}}
      p(s_{t+1} \mid s_t, a_1)
      \dots
      p(s_{t+n} \mid s_{t+n-1}, a_{n})
      \\
      \quad
      p_\sigma(\sigma_1 \mid s_{t}, a_1, s_{t+1})
      \dots
      &p_\sigma(\sigma_n \mid s_{t+n-1}, a_n, s_{t+n})
      \sigma_i
      d^n(\sigma_1,...,\sigma_n)
    \end{split}
  \end{equation*}
  Since $p_\sigma$
  is a probability density function
  conditioned on a transition $(s, a, s')$,
  we have
  $\int_\Sigma p_\sigma(\sigma_j\mid s, a, s')d\sigma_j=1$.
  This allows for further simplification:
  \begin{align*}
    \begin{split}
      \sum_{s_{t+1},...,s_{t+n} \in \sset^{n}}
      p(s_{t+1} \mid s_t, a_1)
      \dots
      p(s_{t+n} \mid s_{t+n-1}, a_{n})
      \int_{\Sigma}
      p_\sigma(\sigma_i \mid s_{t+i-1}, a_i, s_{t+i})
      \\
      \int_{\Sigma}
      p_\sigma(\sigma_1 \mid s_{t}, a_1, s_{t+1})
      \dots
      p_\sigma(\sigma_{i-1} \mid s_{t+i-2}, a_{i-1}, s_{t+i-1})
      \\
      p_\sigma(\sigma_{i+1} \mid s_{t+i}, a_{i+1}, s_{t+i+1})
      p_\sigma(\sigma_n \mid s_{t+n-1}, a_n, s_{t+n})
      \sigma_i
      d^n(\sigma_1,...,\sigma_n)
    \end{split}
  \end{align*}
  \begin{align*}
    =
      \sum_{s_{t+1},...,s_{t+n} \in \sset^{n}}
      p(s_{t+1} \mid s_t, a_1)
      \dots
      p(s_{t+n} \mid s_{t+n-1}, a_{n})
      \int_{\Sigma}
      p_\sigma(\sigma_i \mid s_{t+i-1}, a_i, s_{t+i})
      \sigma_i
      d\sigma_i
  \end{align*}
  Combining this with
  Eq.~\eqref{eq:outcome_integral}
  and the fact that for a given start state $s$ and action $a$,
  all transition probabilities must sum up to $1$,
  i.e. $\sum_{s' \in \sset} p(s'\mid s, a) = 1$,
  this further simplifies to:
  \begin{align*}
    \begin{split}
      &=
      \sum_{s_{t+1},...,s_{t+n} \in \sset^{n}}
      p(s_{t+1} \mid s_t, a_1)
      \dots
      p(s_{t+n} \mid s_{t+n-1}, a_{n})
      \sigma(s_{t+i-1}, a_i, s_{t+i})
    \end{split}
    \\
    \begin{split}
      &=
      \sum_{s_{t+1},...,s_{t+i} \in \sset^{i}}
      p(s_{t+1} \mid s_t, a_1)
      \dots
      p(s_{t+i} \mid s_{t+i-1}, a_i)
      \sigma(s_{t+i-1}, a_i, s_{t+i})
      \\
      &\qquad
      \sum_{s_{t+i+1},...,s_{t+n} \in \sset^{n-i}}
      p(s_{t+i+1} \mid s_{t+i}, a_{i+1})
      \dots
      p(s_{t+n} \mid s_{t+n-1}, a_{n})
    \end{split}
    \\
    \begin{split}
      &=
      \sum_{s_{t+1},...,s_{t+i} \in \sset^{i}}
      p(s_{t+1} \mid s_t, a_1)
      \dots
      p(s_{t+i} \mid s_{t+i-1}, a_i)
      \sigma(s_{t+i-1}, a_i, s_{t+i})
    \end{split}
  \end{align*}
  This is exactly the expected value of the outcomes
  starting from state $s_t$
  after actions $a_1,\dots,a_i$,
  i.e.
  $
    \expected_{p}
    \left[
      \sigma(S_{t+i-1}, A_{t+i}, S_{t+i}) \mid
      S_t=s_t, A_t=a_1,\dots,
      A_{t+i}=a_i
    \right]
    $.
  Putting this together for all
  $i$ such that $1 \leq i \leq n$
  results in the sequence of expectations
  in
  Eq.~\eqref{eq:expected_outcome_sequence_sequence_expectations},
  the object of this proof.

  \qed
\end{proof}

\begin{customtheorem}{\ref{thm:outcome_equivalent_abstract_ground}}
  
\end{customtheorem}

\begin{proof}[\ref{thm:outcome_equivalent_abstract_ground}]
  \label{proof:outcome_equivalent_abstract_ground}
  For this proof, we assume for simplicity
  that the abstract state set $\Phia$ is finite.
  We proceed by a kind of \emph{backward} induction,
  as will become clearer in the induction hypothesis.
  For the hypothesis we will assume that the `next' abstract states
  are outcome equivalent with their ground states.

  \paragraph{Base case}
  Since this is a proof by backward induction,
  we start at the end.
  The base case is a final absorbing state
  $\astate \in \Phia$.
  All ground states $s_\beta \in \astate$
  must also be final.
  Just like there can be no reward past the
  final state,
  there can be no outcomes past the final state either.
  Since both $\astate$
  and all $s_\beta \in \astate$
  are final states,
  they must be outcome equivalent.

  \paragraph{Induction hypothesis}%
  We assume that all `next' abstract states
  $\astate_{t+1} \in \Phia$
  are outcome equivalent with their ground states.
  Let $\astaterv_{t}$ be the random variable describing the abstract state
  at time $t$.
  Then it holds that for all states
  $s_{t+1} \in \phibi(\astate_{t+1})$:
  $\forall n \in \naturalnumbers_1$:
  $\forall (a_2,\dots,a_n) \in \aset^{n-1}$:
  \begin{equation}
      \begin{split}
      \expected_{p_\phia} \left[
        \sigma(\astaterv_{t+n-1}, A_{t+n}, \astaterv_{t+n})
        \mid
        \astaterv_{t+1} = \astate_{t+1},
        A_{t+2} = a_2,\dots,
        A_{t+n} = a_n
        \right]
        \\
        =
      \expected_{p_\beta}
      \left[
        \sigma(S_{t+n-1}, A_{t+n-1}, S_{t+n}) \mid
        S_{t+1}=s_{t+1},
        A_{t+2} = a_2,\dots,
        A_{t+n} = a_n
      \right]
      \end{split}
  \end{equation}

  \paragraph{Inductive step}%
  \label{par:inductive_step}
  We now take a step `backward' and consider
  the abstract state
  $\astate_{t}$,
  direct predecessor to states
  $\astate_{t+1}$.
  We need to show that $\astate_t$ is outcome equivalent with its ground states.

  First we consider the single-step expected outcomes separately
  for any single action $a \in \aset$.
  From the construction of the abstract reward dynamics
  $r_\phia$
  we can derive two identities.
  The first is based on the definition of
  the expected outcomes.
  \begin{align}
    r_\phia(\astate_t, a)
    &=
    \sum_{\astate_{t+1} \in \Phia}
    p_\phia(\astate_{t+1} \mid \astate, a)
    \sigma(\astate_t, a, \astate_{t+1})^\top w_{r_\phia}
    \nonumber
    \\
    &=
    \expected_{p_\phia}
    \left[
      \sigma(\astaterv_t, A_t, \astaterv_{t+1}) \mid \astaterv_t=\astate_t, A_t=a
    \right]^\top
    w_{r_\phia}
    \label{eq:proof_expected_outcome_sequence_step1}
  \end{align}

  The second is based on the construction
  of $r_\phia$
  from ground reward dynamics $r_\alpha$
  with the help of a weighting function
  $w_\phia: \sset \mapsto [0,1]$ (see definition abstract MDP),
  which disappears
  because all ground states $s_\alpha \in \phiai(\astate_t)$
  are outcome equivalent and therefore have
  the same expected
  outcome for any action $a$.
  \begin{align}
    r_\phia(\astate_t, a)
    &=
    \sum_{s_\alpha \in \phiai(\astate_t)}
    \sum_{s_\alpha' \in \sseta}
    w_{\phia}(s_\alpha)
    p_\alpha(s_\alpha' \mid s_{\alpha}, a)
    \sigma_\alpha(s_\alpha, a, s_\alpha')^\top w_{r_\alpha}
    \nonumber
    \\
    &=
    \sum_{s_{\alpha}' \in \sseta}
    w_{\phia}(s_\alpha)
    p_\alpha(s_\alpha' \mid s_\alpha, a)
    \sigma_\alpha(s_\alpha, a, s_\alpha')^\top w_{r_\alpha}
    &&\forall s_\alpha \in \phiai(\astate_t)
    \nonumber
    \\
    &=
    \expected_{p_\alpha}
    \left[
      \sigma_\alpha(S_t, A_t, S_{t+1}) \mid S_t=s_\alpha, A_t=a
    \right]^\top
    w_{r_\alpha}
    \nonumber
    \\
    &=
    \expected_{p_\beta}
    \left[
      \sigma_\beta(S_t, A_t, S_{t+1}) \mid S_t=s_\beta, A_t=a
    \right]^\top
    w_{r_\beta}
    &&
    \text{Proposition~\ref{thm:outcome_equivalent_relation_from_abstractions};}\; w_{r_\alpha} = w_{r_\beta}
    \label{eq:proof_expected_outcome_sequence_step2}
  \end{align}
  Combining Equations~\eqref{eq:proof_expected_outcome_sequence_step1}
  and \eqref{eq:proof_expected_outcome_sequence_step2}
  along with the given fact that
  $w_{r_\phia} = w_{r_\beta}$,
  we have that the expected outcomes for a single action
  $a$
  are the same for $\astate_t$ and its ground states
  in $\ssetb$.

  Second, we consider expected outcomes for all action sequences longer
  than a single action.
  To prevent assuming the induction hypothesis for the entire abstract state set,
  it is sufficient to assume
  the hypothesis only for those states
  where $p_\phia(\astate_{t+1} \mid \astate_t, \cdot) \neq 0$.
  We start with the expected outcome
  starting from $\astate_t$
  and any action sequence
  $(a_1,\dots,a_n)$
  of length $n \in \naturalnumbers$
  with $n>1$:
  \begin{align}
      \sum_{\astate_{t+1},...,\astate_{t+n} \in \Phia^{n}}
      p_\phia(\astate_{t+1} \mid \astate_t, a_1)
      \dots
      p_\phia(\astate_{t+n} \mid \astate_{t+n-1}, a_n)
      \sigma_\phia(\astate_{t+n-1}, a_n, \astate_{t+n})
      \label{eq:proof_expected_outcome_sequence_abstract_step}
  \end{align}
  By construction of $p_\phia$
  this expands to:
  \begin{equation*}
    \begin{split}
      \sum_{\astate_{t+1} \in \Phia}
      \sum_{s_t \in \phibi(\astate_t)}
      \sum_{s_{t+1} \in \phibi(\astate_{t+1})}
      w_{\phia}(s_t)p_\beta(s_{t+1} \mid s_{t}, a_1)
      \\
      \sum_{\astate_{t+2},...,\astate_{t+n} \in \Phia^{n-1}}
      p_\phia(\astate_{t+2} \mid \astate_{t+1}, a_2)
      \dots
      &p_\phia(\astate_{t+n} \mid \astate_{t+n-1}, a_n)
      \sigma_\phia(\astate_{t+n-1}, a_n, \astate_{t+n})
    \end{split}
  \end{equation*}
  Substituting according to the induction hypothesis:
  \begin{equation*}
    \begin{split}
      \sum_{\astate_{t+1} \in \Phia}
      \sum_{s_t \in \phibi(\astate_t)}
      \sum_{s_{t+1} \in \phibi(\astate_{t+1})}
      w_{\phia}(s_t)p_\beta(s_{t+1} \mid s_{t}, a_1)
      \\
      \sum_{s_{t+2},...,s_{t+n} \in \ssetb^{n-1}}
      p_\beta(s_{t+2} \mid s_{t+1}, a_2)
      \dots
      &p_\beta(s_{t+n} \mid s_{t+n-1}, a_n)
      \sigma_\beta(s_{t+n-1}, a_n, s_{t+n})
    \end{split}
  \end{equation*}
  Since
  for any two abstract states $\astate, \astate' \in \Phia$
  there is no overlap in their ground states,
  i.e.
  $\phibi(\astate) \cap \phibi(\astate') = \emptyset$,
  we can simplify the summation:
  \begin{align*}
    \begin{split}
      &\sum_{s_t \in \phibi(\astate_t)}
      \sum_{s_{t+1} \in \ssetb}
      w_{\phia}(s_t)
      p_\beta(s_{t+1} \mid s_{t}, a_1)
      \\
      &\qquad
      \sum_{s_{t+2},...,s_{t+n} \in \ssetb^{n-1}}
      p_\beta(s_{t+2} \mid s_{t+1}, a_2)
      \dots
      p_\beta(s_{t+n} \mid s_{t+n-1}, a_n)
      \sigma_\beta(s_{t+n-1}, a_n, s_{t+n})
    \end{split}
    \\
    \begin{split}
      =
      &\sum_{s_t \in \phibi(\astate_t)}
      w_{\phia}(s_t)
      \expected_{p_\beta}
      \left[
        \sigma(S_{t+n-1}, A_{t+n-1}, S_{t+n}) \mid
        S_{t}=s_{t},
        A_{t+2} = a_2,\dots,
        A_{t+n} = a_n
      \right]
    \end{split}
  \end{align*}
  Finally,
  since we have that all states
  $s_t \in \phibi(\astate_t)$
  are outcome equivalent
  and that for any $\astate$ it holds that
  $\sum_{s \in \phibi(\astate)} w_{\phia}(s)=1$,
  this further simplifies to
  \begin{equation}
      \expected_{p_\beta}
      \left[
        \sigma(S_{t+n-1}, A_{t+n-1}, S_{t+n}) \mid
        S_{t}=s_{t},
        A_{t+2} = a_2,\dots,
        A_{t+n} = a_n
      \right]
      \label{eq:proof_expected_outcome_sequence_ground_step}
  \end{equation}
  This proves the equality
  between Equations~\eqref{eq:proof_expected_outcome_sequence_abstract_step}
  and \eqref{eq:proof_expected_outcome_sequence_ground_step},
  completing the inductive step.
  We conclude that
  any abstract state
  $\astate \in \Phia$
  is outcome equivalent to
  all of its ground states
  with respect to $\phib$.

  \qed
\end{proof}

\begin{lemma}
  \label{thm:plannable_abstract_value_compatible}
  Let
  $\mdp = \langle \sset, \aset, p, r, \gamma \rangle$
  be a
  \emph{plannable} MDP
  with outcome function
  $\sigma$
  and reward weighting
  $\rw$
  such that
  $r = \sigma^\top \rw$,
	and let
	$\phi: \sset \mapsto \Phi$
	be an \emph{outcome equivalent} state abstraction.
	Then any abstract policy
  $\pi_\phi$
  for an abstract MDP
  $\mdp_\phi=\langle \Phi, \aset, p_\phi, r_\phi, \gamma \rangle$
	created from $\mdp$ and $\phi$
  with outcome function
  $\sigma_\phi$
  and reward weighting
  $w_{r_\phi}$
  such that
  $r_\phi = \sigma_\phi^\top w_{r_\phi}$
  and
  with the same reward weighting,
  i.e.
  $w_{r_\phi} = \rw$,
  is \emph{derived value--compatible}.
\end{lemma}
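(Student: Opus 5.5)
We need to show that for any abstract policy $\pi_\phi$ and any derived policy $\delta = \pi_\phi\circ\phi$ (single-task case, $\mdp_\alpha=\mdp_\beta=\mdp$, $\phi_\alpha=\phi_\beta=\phi$), the identity $v_\delta(s;\mdp) = v_{\pi_\phi}(\phi(s);\mdp_\phi)$ holds for every ground state $s$. The plan is to reduce both value functions to expected discounted sums of expected outcomes along action sequences, and then match the two sides term by term using Proposition~\ref{thm:outcome_equivalent_abstract_ground}.

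\textbf{Step 1: expected rewards become weighted outcome expectations.} For any policy, the $k$-th term $\gamma^{k-1}\expected[R_{t+k}]$ of the value is a mixture, over the action prefixes $(a_1,\dots,a_k)$ generated by the policy, of
\begin{equation*}
\expected_p\left[\sigma(S_{t+k-1},A_{t+k-1},S_{t+k}) \mid S_t=s, A_t=a_1,\dots\right]^\top \rw .
\end{equation*}
This uses $r=\sigma^\top\rw$ together with Proposition~\ref{thm:expected_outcome_sequence_sequence_expectations}. On the abstract side, the same holds with $\sigma_\phi$, $p_\phi$ and $w_{r_\phi}=\rw$.

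\textbf{Step 2: apply plannability to the ground side.} Plannability of $\mdp$ gives, for each $n$, an action sequence $(a_1,\dots,a_n)$ whose open-loop truncated return from $s$ equals $v_\delta(s)$. By Step 1, this return is $\sum_{k\le n}\gamma^{k-1}\,\langle\sigma\rangle(s,\aseq)_k^\top\rw$.

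\textbf{Step 3: transport the sequence to the abstract side.} Proposition~\ref{thm:outcome_equivalent_abstract_ground}, taken in its single-task instance, gives $\langle\sigma\rangle_\phi(\phi(s),\aseq)=\langle\sigma\rangle(s,\aseq)$. So executing $\aseq$ open-loop from $\phi(s)$ in $\mdp_\phi$ yields the same truncated return.

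\textbf{Step 4: close the loop on the abstract side.} It remains to show that the reactive $\pi_\phi$ in $\mdp_\phi$ achieves the same value as this open-loop sequence. I would argue by induction on the horizon, mirroring the backward induction in the proof of Proposition~\ref{thm:outcome_equivalent_abstract_ground}.

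\begin{itemize}
\item The one-step terms agree, since $r_\phi(\phi(s),a)=r(s',a)$ for every $s'\in\phi^{-1}(\phi(s))$; the weighting $w_\phi$ drops out by outcome equivalence of the block.
\item For the continuation, condition on the first action $a\sim\pi_\phi(\cdot\mid\phi(s))=\delta(\cdot\mid s)$. The ground successor distribution, pushed through $\phi$, is $\sum_{s'\in\phi(s)}w_\phi(s')p(\cdot\mid s',a)$ aggregated over blocks, which is exactly $p_\phi(\cdot\mid\phi(s),a)$.
\item This pushforward identity is where the problem lies: $\phi(s)$ contains several ground states, and the aggregated distribution is generally not the pushforward of $p(\cdot\mid s,a)$ for the specific $s$. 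The invariant the induction needs is therefore distributional: the expectation, under $p_\phi(\cdot\mid\phi(s),a)$, of the abstract continuation value should equal the expectation, under $p(\cdot\mid s,a)$, of the ground continuation value.
\item I would attempt to get this invariant by applying the inductive hypothesis to the continuation values themselves. Both continuations are functions of the state's block, and outcome equivalence makes the mixed expectations of all outcome-sequence functionals coincide (the computation in the proof of Proposition~\ref{thm:outcome_equivalent_abstract_ground}).
\end{itemize}

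The main obstacle is Step 4. A policy-dependent continuation value is not obviously a functional of expected outcome sequences alone. Here plannability is needed again: apply it to $\delta$ at each successor, so that its continuation is represented by an open-loop sequence, whose value is an outcome-sequence functional that is preserved across the block.

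I expect this to be delicate. The chosen sequence can depend on the successor state, so the matching may work only on the ground side, and $\mdp_\phi$ itself need not be plannable. If this direct route fails, I would fall back to showing that $v_{\pi_\phi}(\cdot;\mdp_\phi)\circ\phi$ satisfies the ground Bellman equation for $\delta$. That argument rests on the identity $\expected_{p(\cdot\mid s,a)}[f\circ\phi]=\expected_{p_\phi(\cdot\mid\phi(s),a)}[f]$ for functions $f$ of the form above, which would be justified through outcome equivalence. Uniqueness of the Bellman fixed point for $\gamma<1$ then gives the equality.
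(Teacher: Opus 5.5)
\paragraph{Verdict: genuine gap.} Steps 1--3 only relate $v_\delta(s)$ to an open-loop return in $\mdp_\phi$. All of the content is in Step 4, which you leave open. The identity your fallback rests on,
\[
\sum_{t\in\phi^{-1}(\phi(s))} w_\phi(t)\sum_{s'} p(s'\mid t,a)\,f(\phi(s'))
=\sum_{s'} p(s'\mid s,a)\,f(\phi(s')),
\qquad f=v_{\pi_\phi}(\cdot;\mdp_\phi),
\]
does not follow from outcome equivalence. Outcome equivalence of block-mates only equates their open-loop expected outcome sequences. It does not force them to send a given action into the same blocks. Meanwhile $f$ is a closed-loop, policy-dependent function of the block reached, not an outcome-sequence functional. (Step 1 has a similar issue: it is valid only for open-loop action sequences, not for the prefix distribution of a reactive policy in a stochastic MDP. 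You only use it in the open-loop form.)

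\paragraph{Relation to the paper's proof.} The paper's proof is essentially your fallback, organised as an induction on the horizon $k$ of $v^k$. It expands $p_\phi$ through $w_\phi$. It then replaces the $w_\phi$-average of $p(\cdot\mid t,a)$ over the block by $p(\cdot\mid s,a)$, citing the inductive hypothesis and Lemma~\ref{thm:plannable_outcome_equivalent_value_equivalent} (block-mates have equal value under every policy). Borrowing that lemma does not close your gap, because it fails in exactly this situation.

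\paragraph{Counterexample.} Take a deterministic MDP (which the paper counts as plannable) with one outcome and $\rw=1$:
\begin{itemize}
\item every action sends $s_i$ to $x_i$ ($i=1,2$) with outcome $0$;
\item from $x_i$, action $a$ (outcome $1$) or action $b$ (outcome $0$) leads to an absorbing zero-outcome state.
\end{itemize}
Then $s_1$ and $s_2$ are outcome equivalent. So the $\phi$ that merges $s_1,s_2$ and keeps every other state as a singleton is an outcome equivalent abstraction; the paper allows anything between the identity and the coarsest one.

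Let $\pi_\phi$ pick $a$ at $\phi(x_1)$ and $b$ at $\phi(x_2)$. For $\gamma>0$ this gives $v_\delta(s_1)=\gamma$ and $v_\delta(s_2)=0$. But $v_{\pi_\phi}(\phi(s_1))=\gamma\,w_\phi(s_1)$ is a single number, so no choice of $w_\phi$ yields compatibility. Under the paper's reading of plannability, the lemma itself therefore fails as stated.

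Your Step 3 is fine here: the sequences $(a,a)$ and $(a,b)$ that realise the two ground values transport correctly to $\phi(s_1)$. It is Step 4's target, that the reactive $\pi_\phi$ matches the open-loop sequence chosen for the particular $s$, that is false.

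\paragraph{What is missing, and how your fallback then works.} The missing ingredient is a lumpability hypothesis: $\sum_{s'\in B}p(s'\mid t,a)$ must not depend on $t$ within a block, for every block $B$ and action $a$. This holds, for instance, when $\mdp$ is deterministic and $\phi$ is the coarsest outcome equivalent abstraction, since outcome equivalent states then move to outcome equivalent successors. With this hypothesis your fallback becomes a complete proof:
\begin{itemize}
\item $r_\phi(\phi(s),a)=r(s,a)$, by outcome equivalence and $r=\sigma^\top\rw$;
\item $p_\phi(\cdot\mid\phi(s),a)$ is the push-forward of $p(\cdot\mid s,a)$, for every choice of $w_\phi$;
\item hence $v_{\pi_\phi}\circ\phi$ solves the Bellman equation of $\delta$, and uniqueness of its fixed point for $\gamma<1$ gives $v_\delta=v_{\pi_\phi}\circ\phi$.
\end{itemize}
Plannability is not needed at all in this argument.
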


\begin{proof}[\ref{thm:plannable_abstract_value_compatible}]
  We again rely on the notion of the
  \emph{k-horizon value}
  of a state:
  \begin{equation*}
    v^k_{\pi}(s) \defas \expected\left[ \sum_{i=1}^{k} \gamma^{i-1} R_{t+i} \relmiddle| S_t=s \right]
  \end{equation*}
  We proceed by induction.
  We will show that for any abstract policy $\pi_\phi$,
  it is value--compatible with
  any derived policy
  $\delta$ for $\mdp$.
  That is,
  $\forall \astate \in \Phi, \forall s \in \astate$:
  \begin{equation}
    v^k_\delta(s; \mdp) = v^k_{\pi_\phi}(\astate; \mdp_\phi)
    \label{eq:proof_plannable_outcome_equivalent_value_compatible_goal}
  \end{equation}

  \paragraph{}
  For the base case take $k=1$.
  This is simply the expected return for a single step.
  \begin{align*}
    v_{\pi_\phi}^1(\astate)
    &=
    \sum_{a} \pi_\phi(a|\astate) r_\phi(\astate, a)
    \\
    &=
    \sum_{a} \delta(a|s) r_\phi(\astate, a)
    &&
    \forall s \in \phii(\astate);\; \text{def. } \delta
    \\
    &=
    \sum_{a} \derived(a|s) \sum_{\astate' \in \Phi} \sigma(\astate, a, \astate')^\top w_{r_\phi}
    &&
    \text{def. outcomes}
    \\
    &=
    \sum_{a} \derived(a|s) \sum_{s' \in \ssetb} \sigma(s, a, s')^\top w_{r_\phi}
    &&
    \text{def. outcome equivalent};\; w_{r_\phi} = \rwb
    \\
    &=
    \sum_{a} \derived(a|s) r(s, a)
    &&
    \text{def. outcomes}
    \\
    &=
    v_{\derived}^1(s)
  \end{align*}
  For the inductive step,
  suppose
  Eq.~\eqref{eq:proof_plannable_outcome_equivalent_value_compatible_goal}
  holds for
  horizon $k-1$.
  Starting from the recursive definition of
  value functions,
  we get
  $\forall \astate \in \Phi,
  \forall s \in \astate$:
  \begin{align*}
    v_{\pi_\phi}^k(\astate)
    &=
    \sum_{a\in\aset} \pi_\phi(a|\astate)
    \left[
    r_\phi(\astate, a)
    + \gamma
    \sum_{\astate'\in\Phi}
    p_\phi(\astate' | \astate, a)
    v_{\pi_\phi}^{k-1}(\astate')
  \right]
    \\
    &=
    \sum_{a\in\aset} \delta(a|s)
    \left[
    r(s, a)
    + \gamma
    \sum_{\astate'\in\Phi}
    p_\phi(\astate' | \astate, a)
    v_{\pi_\phi}^{k-1}(\astate')
  \right]
    &&
    \text{def. $\delta$;}\;\
    v_{\pi_\phi}^1(\astate)
    =
    v_{\derived}^1(s)
    \\
    &=
    \sum_{a\in\aset} \delta(a|s)
    \left[
    r(s, a)
    + \gamma
    \sum_{\astate'\in\Phi}
    \sum_{t \in \phii(\astate)}
    \sum_{s' \in \phii(\astate')}
    w(t) p(s' | t, a)
    v_{\pi_\phi}^{k-1}(\astate')
    \right]
    &&
    \text{def. } p_\phi
    \\
    &=
    \sum_{a\in\aset} \delta(a|s)
    \left[
    r(s, a)
    + \gamma
    \sum_{\astate'\in\Phi}
    \sum_{s' \in \phii(\astate')}
    p(s' | s, a)
    v_{\delta}^{k-1}(s)
    \right]
    &&
    \text{Lemma~\ref{thm:plannable_outcome_equivalent_value_equivalent}; inductive step}
    \\
    &=
    \sum_{a\in\aset} \delta(a|s)
    \left[
    r(s, a)
    + \gamma
    \sum_{s' \in \sset}
    p(s' | s, a)
    v_{\delta}^{k-1}(s)
    \right]
    &&
    \bigcup_{\astate' \in \Phi} = \sset, \bigcap_{\astate' \in \Phi} = \emptyset
    \\
    &=
    v^k_\delta(s)
  \end{align*}
  This completes the proof.
  All abstract policies
  $\pi_\phi$ are \emph{derived value--compatible}.
  \qed
\end{proof}

\begin{lemma}
  \label{thm:plannable_abstract_also_plannable}
  Let
  $\mdp = \langle \sset, \aset, p, r, \gamma \rangle$
  be a
  \emph{plannable} MDP
  with outcome function
  $\sigma$
  and reward weighting
  $\rw$
  such that
  $r = \sigma^\top \rw$,
	and let
	$\phi: \sset \mapsto \Phi$
	be an \emph{outcome equivalent} state abstraction.
	Then any abstract MDP
  $\mdp_\phi=\langle \Phi, \aset, p_\phi, r_\phi, \gamma \rangle$
	created from $\mdp$ and $\phi$
  with outcome function
  $\sigma_\phi$
  and reward weighting
  $w_{r_\phi}$
  such that
  $r_\phi = \sigma_\phi^\top w_{r_\phi}$
  and
  with the same reward weighting,
  i.e.
  $w_{r_\phi} = \rw$,
	is also plannable.
\end{lemma}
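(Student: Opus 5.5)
To show that $\mdp_\phi$ is plannable, I fix an arbitrary abstract policy $\pi_\phi$, an abstract state $\astate \in \Phi$ and a horizon $n$. I then exhibit an action sequence $(a_1,\dots,a_n)$ whose $n$-step expected discounted return from $\astate$ in $\mdp_\phi$ equals $v_{\pi_\phi}(\astate; \mdp_\phi)$. The idea is to carry the question down to the ground MDP and use the plannability of $\mdp$ there. Outcome equivalence then lets us bring the resulting action sequence back up.

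First, pick any ground state $s \in \phii(\astate)$ and let $\delta$ be the policy derived from $\pi_\phi$. By Lemma~\ref{thm:plannable_abstract_value_compatible}, $\pi_\phi$ is derived value--compatible, so
\begin{equation*}
v_{\pi_\phi}(\astate; \mdp_\phi) = v_\delta(s; \mdp).
\end{equation*}
Second, $\mdp$ is plannable, so for this $s$ and $n$ there is an action sequence $(a_1,\dots,a_n)$ with
\begin{equation*}
v_\delta(s) = \expected_p\left[\sum_{k=1}^n \gamma^{k-1}R_{t+k} \relmiddle| S_t=s, A_t=a_1,\dots,A_{t+n-1}=a_n\right].
\end{equation*}

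Third, I show that this open-loop return is the same in $\mdp_\phi$ from $\astate$. I rewrite the return as $\sum_{k=1}^n \gamma^{k-1}\,\expected[\sigma(S_{t+k-1},a_k,S_{t+k})]^\top \rw$. By Proposition~\ref{thm:expected_outcome_sequence_sequence_expectations}, the $k$-th expected outcome is exactly the $k$-th entry of the expected outcome sequence $\sigmaseq(s,(a_1,\dots,a_n))$. Proposition~\ref{thm:outcome_equivalent_abstract_ground}, used with $\mdpa=\mdpb=\mdp$ and $\phia=\phib=\phi$, gives $\sigmaseq_\phi(\astate,\aset^\infty) = \sigmaseq(s,\aset^\infty)$. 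Since $w_{r_\phi}=\rw$ and $r_\phi=\sigma_\phi^\top w_{r_\phi}$, the abstract $n$-step return of $(a_1,\dots,a_n)$ from $\astate$ is then term-by-term equal to the ground one. Chaining the three equalities proves the plannability condition for $\mdp_\phi$ at $\astate$, $n$ and $\pi_\phi$.

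The main obstacle is the third step. It needs linearity of expectation to move $\rw$ outside the expectation of the outcome at each step. It also needs to check that Proposition~\ref{thm:outcome_equivalent_abstract_ground} really equates the per-step expected outcomes, not just the reward. If the plannability definition is read as comparing the full value $v_\pi$ with a finite-$n$ return, that mismatch is inherited unchanged from the ground definition, so I match it exactly as stated. A smaller issue is that Lemma~\ref{thm:plannable_abstract_value_compatible} is proved for $k$-horizon values, so I may need to pass to the limit $k\to\infty$ using $\gamma<1$ and bounded rewards.
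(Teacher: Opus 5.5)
Your proposal is correct and follows the paper's proof. It uses the same three ingredients: plannability of the ground MDP to obtain the action sequence, Proposition~\ref{thm:outcome_equivalent_abstract_ground} with Proposition~\ref{thm:expected_outcome_sequence_sequence_expectations} and the shared reward weighting to equate the open-loop return from $s$ with the one from $\phi(s)$, and Lemma~\ref{thm:plannable_abstract_value_compatible} to equate $v_{\pi_\phi}(\phi(s))$ with $v_\delta(s)$. The only difference is order: the paper starts from an arbitrary ground policy and specialises to derived ones at the end, whereas you start from the abstract policy $\pi_\phi$, which matches the quantifiers in the plannability definition for the abstract MDP more directly.
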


\begin{proof}[\ref{thm:plannable_abstract_also_plannable}]
  Since
  $\mdp$ is plannable,
  we know that for all states
  $s \in \sset$
  there exists an action sequence
  $(a_1,...,a_n) \in \aset^{n}$
  with $n$ going to $\infty$
  in the limit such that the following holds:
  \begin{align*}
    v_{\pi}(s)
        &=
    \expected_{p} \left[ \sum^{n}_{k=1} \gamma^{k-1} R_{t+k} \relmiddle| S_t=s, A_t=a_1, ..., A_{t+n-1}=a_n\right]
  \end{align*}
  From
  Proposition~\ref{thm:outcome_equivalent_abstract_ground},
  we know that
  all abstract states $\astate \in \Phi$
  are outcome equivalent with their ground states
  $s \in \astate$.
  Because $\astate$ and $s$ are outcome equivalent,
  the expected outcomes are also the same for any subsequence
  of actions
  (Proposition~\ref{thm:expected_outcome_sequence_sequence_expectations}).
  Their $\gamma$-discounted sums must also be,
  and so we can rewrite the above equality
  in terms of abstract outcomes,
  and,
  consequently,
  in terms of abstract rewards.
  Let the following shorthand expectations be conditioned
  on the action sequence such that
  $
      A_t=a_1,\dots,
      A_{t+n-1}=a_n
  $:
  \begin{align*}
    v_{\pi}(s)
    &=
    \expected_{p}
    \left[
      \sum^n_{k=1}
      \gamma^{k-1}
      \sigma(S_{t+k-1}, A_{t+k-1}, S_{t+k}) \relmiddle| S_t=s
    \right]^\top
    \rw
    \\
    &=
    \expected_{p}
    \left[
      \sum^n_{k=1}
      \gamma^{k-1}
      \sigma(S_{t+k-1}, A_{t+k-1}, S_{t+k}) \relmiddle| S_t=s
    \right]^\top
    w_{r_\phi}
    &&
    w_{r_\phi} = \rw
    \\
    &=
    \expected_{p_\phi}
    \left[
      \sum^n_{k=1}
      \gamma^{k-1}
      \sigma_\phi(\astaterv_{t+n-1}, A_{t+n-1}, \astaterv_{t+n}) \relmiddle| \astaterv_t=\phi(s)
    \right]^\top
    w_{r_\phi}
    &&
    \text{(Proposition~\ref{thm:expected_outcome_sequence_sequence_expectations})}
    \\
    &=
    \expected_{p_\phi}
    \left[
      \sum^n_{k=1}
      \gamma^{k-1}
      R_{t+k} \relmiddle| \astaterv_t=\phi(s)
    \right]
  \end{align*}
  We have that the value function for every ground policy
  $\pi$ can be expressed as the expected reward
  given an abstract state and a sequence of actions.
  Some of these ground policies are derived from abstract policies.
  Lemma~\ref{thm:plannable_abstract_value_compatible}
  states that all abstract policies are \emph{derived value--compatible}.
  Putting this together,
  we have
  \begin{equation*}
    v_{\pi}(s)
    =
    \expected_{p_\phi}
    \left[
      \sum^n_{k=1}
      \gamma^{k-1}
      R_{t+k} \relmiddle| \astaterv_t=\phi(s)
    \right]
  \end{equation*}
  This proves
  that $\mdp_\phi$ is also \emph{plannable}
  because it is constructed from a plannable MDP
  and an outcome equivalent state abstraction.
  \qed
\end{proof}

\begin{lemma}
  \label{thm:plannable_outcome_equivalent_value_equivalent}
  Let
  $\mdp = \langle \sset, \aset, p, r, \gamma \rangle$
  be a
  \emph{plannable} MDP
  and let
  $\phi: \sset \mapsto \Phi$
  be an \emph{outcome equivalent} state abstraction.
  Then it holds that all states
  in the same equivalence class
  under $\phi$
  have the same value for any policy.
  Formally,
  $\forall \astate \in \Phi$,
  $\forall s, s' \in \astate$
  and for any policy $\pi$:
  \begin{equation*}
    v_\pi(s) = v_\pi(s')
  \end{equation*}
\end{lemma}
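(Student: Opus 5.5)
The plan is to reduce values to open-loop returns using plannability, and then to compare those returns using outcome equivalence. Throughout I use the setting of the surrounding results: $\mdp$ lies in a domain, so $r = \sigma^\top \rw$ for some outcome function $\sigma$ and reward weighting $\rw$. Fix $\astate \in \Phi$, two states $s, s' \in \astate$, a policy $\pi$ and a horizon $n$.

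\textbf{Step 1 (plannability).} By the definition of plannable, there is an action sequence $\aseq = (a_1,\dots,a_n)$ whose open-loop $n$-step discounted return from $s$ equals $v_\pi(s)$, with $n \to \infty$ in the limit as in the proof of Lemma~\ref{thm:plannable_abstract_also_plannable}. By linearity of expectation and $r = \sigma^\top \rw$, this return can be rewritten as
\begin{equation*}
\sum_{k=1}^{n} \gamma^{k-1}\, \expected_p\left[\sigma(S_{t+k-1},a_k,S_{t+k}) \relmiddle| S_t=s, A_t=a_1,\dots\right]^\top \rw .
\end{equation*}

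\textbf{Step 2 (Proposition~\ref{thm:expected_outcome_sequence_sequence_expectations}).} The $k$-th expectation in this sum is exactly the $k$-th entry of the expected outcome sequence $\sigmaseq(s,\aseq)$. So the open-loop return is a fixed linear functional of $\sigmaseq(s,\aseq)$, namely $\sum_k \gamma^{k-1}\sigmaseq(s,\aseq)_k^\top \rw$.

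\textbf{Step 3 (outcome equivalence).} Since $\phi(s)=\phi(s')$ and $\phi$ is outcome equivalent, $\sigmaseq(s,\aseq)=\sigmaseq(s',\aseq)$ for every $\aseq$ of every length; Proposition~\ref{thm:outcome_equivalent_length} makes this consistent across truncations. Hence the open-loop return of $\aseq$ is the same from $s$ and from $s'$.

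\textbf{Step 4 (pass to the limit).} Letting $n\to\infty$ with $\gamma<1$, the tails are bounded by $\gamma^n \max|r|/(1-\gamma)$, so the equality carries over to infinite-horizon values.

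\textbf{Main obstacle.} Step 3 requires that the sequence realizing $v_\pi(s)$ also realizes $v_\pi(s')$. Read literally, the plannability definition only gives, for each state, some sequence, possibly different for $s$ and $s'$, and the lemma quantifies over every policy $\pi$.

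I would first take the paper's gloss that ``any policy can be approximated by a single sequence of actions'' at face value. Under this reading the sequence is attached to $\pi$ and serves every state, so the same $\aseq$ is used at $s$ and at $s'$ and Steps 1--4 close the argument.

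If only a per-state sequence is available, I would try a two-sided argument. Let $\aseq_s$ realize $v_\pi(s)$ and $\aseq_{s'}$ realize $v_\pi(s')$. By Step 3, $\aseq_s$ achieves the return $v_\pi(s)$ from $s'$ as well, and $\aseq_{s'}$ achieves $v_\pi(s')$ from $s$. I would then try to use plannability to identify $v_\pi(s')$ with the return $\pi$ itself induces from $s'$, and conclude by symmetry.

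I expect this symmetric step to be the delicate point. For an arbitrary $\pi$ that acts differently at $s$ and $s'$, it can only be justified by the ``single sequence per policy'' reading, so I would state that reading explicitly as the interpretation of plannability being used.
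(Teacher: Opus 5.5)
Your Steps 1--3 are the paper's own proof. It too uses outcome equivalence, Proposition~\ref{thm:expected_outcome_sequence_sequence_expectations} and $r=\sigma^\top\rw$ to get equal discounted open-loop returns from $s$ and $s'$ for every action sequence. It then concludes $v_\pi(s)=v_\pi(s')$ in one line, ``by the definition of plannable''.

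The obstacle you isolate is real, and it is exactly what that line passes over. Your two-sided fallback cannot be repaired, because under the per-state quantifier order the lemma is false. A counterexample:
\begin{itemize}
\item Take states $s,s',z$ with $z$ absorbing, and actions $\{a,b\}$.
\item Both actions move $s$ and $s'$ deterministically to $z$, with $\sigma(\cdot,a,z)=1$, $\sigma(\cdot,b,z)=0$, all outcomes at $z$ zero, and reward weight $1$.
\item Every action sequence gives $s$ and $s'$ the same expected outcome sequence, so grouping $s$ with $s'$ is an outcome equivalent abstraction.
\item The MDP is deterministic and all reward arrives on the first step. So for every deterministic policy and every $n$, each state's own action sequence reproduces its value exactly. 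This is plannability in the per-state sense in which the paper declares deterministic MDPs plannable.
\item Yet $\pi(s)=a$, $\pi(s')=b$ gives $v_\pi(s)=1\neq 0=v_\pi(s')$.
\end{itemize}

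So the ``one sequence per policy, serving all states'' reading is not a convenience but a necessary strengthening of the definition, since it excludes exactly such policies. Under it, your Steps 1--4 form a complete proof. You should state it as an explicit hypothesis rather than present it as an interpretation of the informal gloss.

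It is also the version the paper actually needs. In the proof of Theorem~\ref{thm:plannable_outcome_equivalent_transfer_optimal}, the lemma is applied to a policy $\bar\pi$ with $\bar\pi(\cdot\mid s)\neq\bar\pi(\cdot\mid t)$, which is precisely the case the per-state reading cannot cover.

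If instead the formal definition is read fully literally, with stochastic policies and $n=1$ included, it forces $v_\pi(s)\in\{r(s,a):a\in\aset\}$ for every $\pi$. By continuity in $\pi$, all policies then share a single value function. The lemma does hold in that case, but only through this degeneracy, not by the argument you or the paper give.
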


\begin{proof}[\ref{thm:plannable_outcome_equivalent_value_equivalent}]
  Consider an abstract state $\astate \in \Phi$
  with two ground states
  $s, s' \in \astate$.
  They are outcome equivalent,
  meaning that for any action sequence
  $(a_1,...,a_n) \in \aset^{n}$
  (for any length $n$),
  the expected outcomes are the same
  when starting from either $s$ or $s'$.

	The following is analogous to
	Lemma~\ref{thm:plannable_outcome_equivalent_value_equivalent}.
  Let the following shorthand expectations be conditioned
  on the action sequence such that
  $
      A_t=a_1,\dots,
      A_{t+n-1}=a_n
  $:
  \begin{align*}
    \expected_{p}
    \left[
      \sigma(S_{t+n-1}, A_{t+n-1}, S_{t+n} \mid S_t=s)
    \right]
    =
    \expected_{p}
    \left[
      \sigma(S_{t+n-1}, A_{t+n-1}, S_{t+n} \mid S_t=s')
    \right]
  \end{align*}
  Because $s$ and $s'$ are outcome equivalent,
  the expected outcomes are also the same for any subsequence
  of actions
  (Proposition~\ref{thm:expected_outcome_sequence_sequence_expectations}).
  Their $\gamma$-discounted sums must also be.
  \begin{align*}
    \sum^n_{k=1}
    \gamma^{k-1}
    \expected_{p}
    \left[
      \sigma(S_{t+k-1}, A_{t+k-1}, S_{t+k}) \mid S_t=s
    \right]
    &=
    \sum^n_{k=1}
    \gamma^{k-1}
    \expected_{p}
    \left[
      \sigma(S_{t+k-1}, A_{t+k-1}, S_{t+k}) \mid S_t=s'
    \right]
    \\
    &\Updownarrow\\
    \expected_{p}
    \left[
      \sum^n_{k=1}
      \gamma^{k-1}
      \sigma(S_{t+k-1}, A_{t+k-1}, S_{t+k}) \relmiddle| S_t=s
    \right]^\top
    \rw
    &=
    \expected_{p}
    \left[
      \sum^n_{k=1}
      \gamma^{k-1}
      \sigma(S_{t+k-1}, A_{t+k-1}, S_{t+k}) \relmiddle| S_t=s'
    \right]^\top
    \rw
    \\
    &\Updownarrow\\
    \expected_{p}
    \left[
      \sum^n_{k=1}
      \gamma^{k-1}
      R_{t+k} \relmiddle| S_t=s
    \right]
    &=
    \expected_{p}
    \left[
      \sum^n_{k=1}
      \gamma^{k-1}
      R_{t+k} \relmiddle| S_t=s'
    \right]
  \end{align*}
  Now consider a policy $\pi$.
  Given that the above equality holds
  for all action sequences
  and $\mdp$ is plannable,
  it must be that
  $v_\pi(s) = v_\pi(s')$
  (by the definition of plannable).
  This concludes the proof.
  \qed
\end{proof}

\begin{lemma}
  \label{thm:plannable_outcome_equivalent_abstract_value_equivalent}
  Let
  $\mdp_\alpha=\langle \sset_\alpha, \aset, p_\alpha, r_\alpha, \gamma \rangle$
  and
  $\mdp_\beta = \langle \sset_\beta, \aset, p_\beta, r_\beta, \gamma \rangle$
  be two MDPs
  with respective outcome functions
  $\sigma_\alpha$ and $\sigma_\beta$
  and reward weightings
  $\rwa$ and $\rwb$
  such that
  $r_\alpha = \sigma_\alpha^\top \rwa$
  and
  $r_\beta = \sigma_\beta^\top \rwb$,
  and
  let
  $\phi_\alpha: \sset_\alpha \mapsto \asset_\alpha$
  and
  $\phi_\beta: \sset_\beta \mapsto \asset_\beta$
  be two state abstractions.
  Then for any abstract MDP
  $\mdp_\phia=\langle \Phia, \aset, p_\phia, r_\phib, \gamma \rangle$
  with outcome function
  $\sigma_\phia$
  and reward weighting
  $w_{r_\phia}$
  such that
  $r_\phia = \sigma_\phia^\top w_{r_\phia}$
  and
  such that
  $w_{r_\phia}=\rwa=\rwb$,
  it holds that all abstract policies
  for $\mdp_\phia$
  are \emph{derived value--compatible}
  with respect to $\mdpb$.
  Formally,
  for all abstract policies $\pi_\phia$,
  partially derived policies $\delta_\beta$,
  and for all states
  $s_\beta \in \phi_\beta^{-1}(\Phi_\alpha)$:
  \begin{equation*}
    v_{\delta_\beta}(s_\beta; \mdp_\beta) = v_{\pi_{\phi_\alpha}}(\phi_\beta(s_\beta); \mdp_{\phi_\alpha})
  \end{equation*}
\end{lemma}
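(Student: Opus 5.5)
The plan is to mirror the proof of Lemma~\ref{thm:plannable_abstract_value_compatible}, now across two tasks. Throughout I read the statement with the hypotheses of Theorem~\ref{thm:plannable_outcome_equivalent_transfer_optimal}: $\mdpa$ and $\mdpb$ are plannable, and $\phia,\phib$ are outcome equivalent. Without these the claim cannot hold; the example of \figref{fig:mt_outcome_not_value_equivalent} is already a counterexample to value compatibility when plannability fails.

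The bridge between the two tasks is Proposition~\ref{thm:outcome_equivalent_abstract_ground}. For every $\astate\in\Phia$ and every $s_\beta\in\phib^{-1}(\astate)$ it gives $\sigmaseq_\phia(\astate,\aset^\infty)=\sigmaseqb(s_\beta,\aset^\infty)$. By Proposition~\ref{thm:expected_outcome_sequence_sequence_expectations}, this equality holds termwise for every prefix of every action sequence. Since $w_{r_\phia}=\rwb$, it therefore transfers to expected $n$-step discounted rewards along any fixed action sequence.

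I will prove, by induction on the horizon $k$, that $v^k_{\delta_\beta}(s_\beta;\mdpb)=v^k_{\pi_\phia}(\phib(s_\beta);\mdp_\phia)$ for all $s_\beta$ in the transfer cover $\phib^{-1}(\Phia)$. The full statement then follows by letting $k\to\infty$, using $\gamma<1$.

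\emph{Base case.} The base case $k=1$ is the same calculation as in Lemma~\ref{thm:plannable_abstract_value_compatible}. Write $\astate=\phib(s_\beta)$.
\begin{itemize}
\item Since $s_\beta$ is in the transfer cover, the partially derived policy satisfies $\delta_\beta(\cdot\mid s_\beta)=\pi_\phia(\cdot\mid\astate)$.
\item By the single-step case of Proposition~\ref{thm:outcome_equivalent_abstract_ground} together with $w_{r_\phia}=\rwb$, we have $r_\phia(\astate,a)=r_\beta(s_\beta,a)$ for every action $a$.
\end{itemize}
Multiplying and summing over $a$ gives $v^1$ equality.

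\emph{Inductive step.} Expand $v^k_{\pi_\phia}(\astate)$ by the Bellman recursion and replace the abstract policy and reward exactly as in the base case. What remains is to show that
\[
\sum_{\astate'}p_\phia(\astate'\mid\astate,a)\,v^{k-1}_{\pi_\phia}(\astate')=\sum_{s'}p_\beta(s'\mid s_\beta,a)\,v^{k-1}_{\delta_\beta}(s').
\]
I would get this in two steps.
\begin{itemize}
\item Lemma~\ref{thm:plannable_outcome_equivalent_value_equivalent}, applied in $\mdpa$, makes $v^{k-1}_{\pi_\phia}$ insensitive to the choice of ground representative. Applied in $\mdpb$, it makes $v^{k-1}_{\delta_\beta}$ constant on each class $\phib^{-1}(\astate')$.
\item For successor classes lying in the transfer cover, the induction hypothesis then identifies the two sides class by class.
\end{itemize}

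\emph{Main obstacle.} The hardest step is controlling successors of $s_\beta$ that leave the transfer cover. On such states $\delta_\beta$ is unconstrained, so the induction hypothesis says nothing about them.

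My first attempt is to show that the cover is closed under transitions that have positive probability. The idea is to compare the two successor distributions, $p_\beta(\cdot\mid s_\beta,a)$ pushed through $\phib$ and $p_\phia(\cdot\mid\astate,a)$. Outcome equivalence makes these agree on all outcome-sequence statistics, and for the minimal (labelled) construction of Algorithm~\ref{alg:outcome_equivalent_abstraction} the abstract state is exactly its labelled set of outcome sequences. So every reachable successor class of $s_\beta$ should coincide with an element of $\Phia$.

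If closure cannot be established in that generality, I would fall back on plannability, in the spirit of Lemma~\ref{thm:plannable_abstract_also_plannable}. Express both $v_{\delta_\beta}(s_\beta)$ and $v_{\pi_\phia}(\astate)$ as expected discounted rewards along action sequences from $s_\beta$ and $\astate$. By Proposition~\ref{thm:outcome_equivalent_abstract_ground}, these sequence returns coincide for every action sequence. What would remain is to show that both sides pick the same sequence, and that is where I expect the argument to need care.
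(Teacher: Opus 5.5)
You are right to add plannability and outcome equivalence, since the paper's proof uses both. Even so, your argument does not close, and the claim carrying your primary route is false. Outcome equivalence of $\astate$ and $s_\beta$ matches only \emph{expected} outcome sequences. So $p_\beta(\cdot\mid s_\beta,a)$ pushed through $\phib$ can put mass on classes that occur nowhere in $\Phia$. The equality is between expectations over the successor mixture, not between its components, and the labelled construction of Algorithm~\ref{alg:outcome_equivalent_abstraction} does not change that. In \figref{fig:mt_outcome_not_value_equivalent}, for instance, the cover is $\{\beta_0\}$ although $\beta_0$ has successors. Even when the cover happens to be closed, your class-by-class identification silently assumes $p_\phia(\astate'\mid\astate,a)=\sum_{s'\in\phib^{-1}(\astate')}p_\beta(s'\mid s_\beta,a)$. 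Outcome equivalence does not give this either: it fixes the outcome statistics of the successor mixture, not how its mass is split among classes on which $\pi_\phia$ may act differently. Lemma~\ref{thm:plannable_outcome_equivalent_value_equivalent} cannot repair this, because it compares infinite-horizon values of ground states inside one MDP and says nothing about $v^{k-1}_{\pi_\phia}$ on $\Phia$.

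Your fallback is exactly the paper's proof. Proposition~\ref{thm:outcome_equivalent_abstract_ground} with $w_{r_\phia}=\rwb$ gives equal discounted returns from $\astate$ and $s_\beta$ along every fixed action sequence. The paper then invokes plannability of $\mdpb$ and of $\mdp_\phia$ (Lemma~\ref{thm:plannable_abstract_also_plannable}) and concludes $v_{\pi_\phia}(\astate)=v_{\delta_\beta}(s_\beta)$ ``by the definition of plannable''. The point you flag, that both values must be realised by the \emph{same} sequence, is precisely what that sentence skips. It cannot be supplied, because $\delta_\beta$ is free off the cover. Here is a counterexample with one outcome coordinate and $\rwa=\rwb=1$:
In $\mdpa$, action $b$ at $\alpha_0$ terminates with outcome $0$. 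Action $a$ moves deterministically (outcome $0$) to $\alpha_1$, where $a$ terminates with outcome $1/2$ and $b$ with outcome $0$.
In $\mdpb$, $b$ at $\beta_0$ terminates with outcome $0$. Action $a$ moves (outcome $0$) to $\beta_1$ or $\beta_2$ with probability $1/2$ each. At $\beta_1$, $a$ terminates with outcome $1$ and $b$ with $0$; at $\beta_2$, both actions terminate with outcome $0$.
Then $\alpha_0$ and $\beta_0$ are outcome equivalent, and $\mdpa$ is deterministic. In $\mdpb$, the value from $\beta_0$ of any deterministic policy $\pi$ is the return of the fixed sequence $(\pi(\beta_0),\pi(\beta_1))$, and from $\beta_1,\beta_2$ only one step remains. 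So both MDPs are plannable in the intended sense. Yet no state of $\mdpa$ has expected first outcome $1$ under $a$, so $\beta_1$ lies off the cover. With $\pi_\phia\equiv a$ you get $v_{\pi_\phia}(\phib(\beta_0))=\gamma/2$, while the partially derived policy playing $b$ at $\beta_1$ has $v_{\delta_\beta}(\beta_0)=0$. The statement therefore needs a hypothesis it lacks.

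Deterministic transitions suffice. Then $\sigmaseq(s,(a,a_2,\dots,a_n))$ is $\sigma(s,a,s')$ followed by $\sigmaseq(s',(a_2,\dots,a_n))$, so outcome-equivalent states have outcome-equivalent successors. With the labels of Algorithm~\ref{alg:outcome_equivalent_abstraction}, the cover is closed, and $p_\phia(\cdot\mid\astate,a)$ and the image of $p_\beta(\cdot\mid s_\beta,a)$ are the same point mass. Your horizon induction then goes through directly, without any appeal to plannability.
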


\begin{proof}[\ref{thm:plannable_outcome_equivalent_abstract_value_equivalent}]
  Consider an abstract state $\astate \in \Phia \cap \Phib$
  with a ground state
  $s \in \phibi(\astate)$.
  From Proposition~\ref{thm:outcome_equivalent_abstract_ground},
  we know that $\astate$ is outcome equivalent with $s$,
  meaning that for any action sequence
  $(a_1,...,a_n) \in \aset^{n}$
  (for any length $n$),
  the expected outcomes are the same
  when starting from either $\astate$ or $s$.

  Let the following shorthand expectations be conditioned
  on the action sequence such that
  $
      A_t=a_1,\dots,
      A_{t+n-1}=a_n
  $:
  \begin{align*}
    \expected_{p_\phia}
    \left[
      \sigma_\phia(\astaterv_{t+n-1}, A_{t+n-1}, \astaterv_{t+n}) \mid \astaterv_t=\astate
    \right]
    =
    \expected_{p}
    \left[
      \sigma(S_{t+n-1}, A_{t+n-1}, S_{t+n} \mid S_t=s)
    \right]
  \end{align*}
  Because $\astate$ and $s$ are outcome equivalent,
  the expected outcomes are also the same for any subsequence
  of actions
  (Proposition~\ref{thm:expected_outcome_sequence_sequence_expectations}).
  Their $\gamma$-discounted sums must also be.
  \begin{align*}
    \sum^n_{k=1}
    \gamma^{k-1}
    \expected_{p_\phia}
    \left[
      \sigma_\phia(\astaterv_{t+n-1}, A_{t+n-1}, \astaterv_{t+n}) \mid \astaterv_t=\astate
    \right]
    &=
    \sum^n_{k=1}
    \gamma^{k-1}
    \expected_{p}
    \left[
      \sigma(S_{t+k-1}, A_{t+k-1}, S_{t+k}) \mid S_t=s
    \right]
    \\
    &\Updownarrow \qquad
    w_{r_\phia} = \rwb
    \\
    \expected_{p_\phia}
    \left[
      \sum^n_{k=1}
      \gamma^{k-1}
      \sigma_\phia(\astaterv_{t+n-1}, A_{t+n-1}, \astaterv_{t+n}) \relmiddle| \astaterv_t=\astate
    \right]^\top
    w_{r_\phia}
    &=
    \expected_{p}
    \left[
      \sum^n_{k=1}
      \gamma^{k-1}
      \sigma(S_{t+k-1}, A_{t+k-1}, S_{t+k}) \relmiddle| S_t=s
    \right]^\top
    \rw
    \\
    &\Updownarrow\\
    \expected_{p_\phia}
    \left[
      \sum^n_{k=1}
      \gamma^{k-1}
      R_{t+k} \relmiddle| \astaterv_t=\astate
    \right]
    &=
    \expected_{p}
    \left[
      \sum^n_{k=1}
      \gamma^{k-1}
      R_{t+k} \relmiddle| S_t=s
    \right]
  \end{align*}
  Now consider a policy $\pi_\phia$
  and its partially derived policy
  $\delta$.
  Both
  $\mdp_\phia$ and $\mdpb$
  are plannable
	as
	$\mdpb$
	is given to be plannable
	and $\mdp_\phia$ is plannable
because it is constructed from a plannable MDP
  (Lemma \ref{thm:plannable_abstract_also_plannable}).
  Given that the above equality holds
  for all action sequences
  it must be that
  $v_{\pi_\phia}(\astate) = v_\delta(s)$
  (by the definition of plannable).
  This proves that all policies
  $\pi_\phia$
  for
  $\mdp_\phia$
  are
  \emph{derived value--compatible}
  w.r.t. $\mdpb$.
\end{proof}

\begin{customtheorem}{\ref{thm:plannable_outcome_equivalent_transfer_optimal}}
  
\end{customtheorem}

\begin{proof}[\ref{thm:plannable_outcome_equivalent_transfer_optimal}]
  \label{proof:plannable_outcome_equivalent_transfer_optimal}
  Let
  $\mdpa$
  and
  $\mdpb$
  be two \emph{plannable} MDPs
  and let
  $\phi_\alpha: \sset_\alpha \mapsto \asset_\alpha$
  and
  $\phi_\beta: \sset_\beta \mapsto \asset_\beta$
  be two state abstractions
  such that $\Phia \cup \Phib$.

  In order to prove that
  the abstract MDP
  $\mdp_\phia$
  is
  \emph{transfer optimal}
  for
  $\mdpb$,
  we need to prove two properties
  (from the definition of transfer optimality):
  \begin{enumerate}
    \item
      all optimal abstract policies
      $\pi^*_{\phi_\alpha}: \Phi_\alpha \mapsto \probset(\aset)$
      of
      $\mdp_{\phi_\alpha}$
      are \emph{derived value--compatible}
      with respect to
      $\mdp_\beta$; and
    \item
      the \emph{guaranteed transfer value}
      of
      $\mdp_{\phi_\alpha}$
      for $\mdp_\beta$
      is maximal
      (i.e.
      the optimal policy for $\mdp_\beta$ can be partially derived from
      every optimal abstract policy for $\mdp_{\phi_\alpha}$).
  \end{enumerate}

  \paragraph{}
  First we prove \emph{derived value--compatibility}
  for all optimal abstract policies.
  This is a direct consequence of
  Lemma~\ref{thm:plannable_outcome_equivalent_abstract_value_equivalent},
  which states that all
  abstract policies are
  \emph{derived value--compatible},
  which is a stronger property.

  \paragraph{}
  For the second part of the proof,
  we prove that the optimal policy for $\mdp_\beta$
  can be partially derived from every optimal abstract policy
  $\pi_\phia$ for $\mdp_\phia$.
  We prove this part by contradiction,
  by assuming there exists
  some policy $\bar{\pi}$
  that is not derived from any $\pi_\phia$
  with a higher value than any partially derived policy
  $\delta$.
  We proceed in the same vain as the
  \emph{transfer value--control trade-off} theorem
  (Theorem~\ref{thm:transfer_value_control_tradeoff}).
  For it to be possible that $\bar\pi$
  has a higher value than any derived policy $\delta$,
  there must exist two states
  $s, t \in \ssetb$
	such that
	$\phib(s) = \phib(t) \in \Phia$,
	and
	$\bar\pi(\cdot | s) \neq \bar\pi(\cdot | t)$.
	That is,
	$\bar\pi$ has more control leading to an increase in value
	compared to $\delta$,
	i.e.
	$v_{\bar\pi}(s) > v_{\bar\pi}(t)$
	and $v_\delta(s) = v_\delta(t)$.
	Lemma~\ref{thm:plannable_outcome_equivalent_value_equivalent}
outcome equivalent states in a plannable MDP
have the same value for any policy,
i.e.
	$v_{\bar\pi}(s) = v_{\bar\pi}(t)$.
This is a direct contradiction and so
we conclude that there is no optimal policy
$\bar\pi$
that is not a derived policy.

Putting the two properties together,
we conclude that $\mdp_\phia$
is \emph{transfer optimal}
for $\mdpb$.

\qed
\end{proof}

\newpage
\section{Option Discovery}
\label{app:option_discovery}
Most of the below is due to
\citet{Daniel2016},
albeit reformulated.
Major changes are indicated.

\paragraph{}
Consider a demonstrated trajectory
as a sequence of states and actions
as experienced by an agent
in some MDP:
$${\tau = \langle s_1, a_1,\dots,s_T,a_{T},s_{T+1} \rangle},$$
The goal is to discover a hierarchical agent
that employs a high-level policy over a set of options
  $$\Omega = \left\lbrace \omega \middle| \omega=(\sset_\omega, \pi_\omega, \beta_\omega) \right\rbrace$$
that is most likely
to give rise to this same trajectory.
Since we deal with multiple tasks
with distinct state sets,
we assume
that for each task there is an
associated
state representation $\rep: \sset \mapsto \sset_\omega$.
The method outlined here works for any shared state set $\sset_\omega$,
though in our case
$\rep$ will always be an Outcome-Predictive State Representation,
so $\rep=\phi_{\sigma,k}$ with $\sset_\omega=\Phi_{\sigma,k}$.
That is,
all options operate on the same abstract OPSR.

Different from previous work,
we also consider an auxiliary option token
$\nulloption$
which represents ``no option'',
giving rise to the extended option set
\begin{equation}
  \Omega^+ = \Omega \cup \left\lbrace \nulloption{} \right\rbrace
\end{equation}
This allows us to write
the latent trajectory
describing the hierarchical agent's option occupancy as
\begin{equation}
  \label{eq:latent_trajectory}
  {\tau_L = \langle \omega_1,\dots,\omega_{T} \rangle \in {\Omega^+}^T }
\end{equation}
with tokens indicating
the active option $\omega_t$ at each timestep $t$.

\begin{algorithm}[t]
\caption{Pseudo-code of hierarchical generative model}
\label{alg:generative_hierarch}
\begin{algorithmic}
  \State $s_1 \sim p_{\text{init}}, \omega_{0} \gets \nulloption$
  \For{$t \gets 1\dots T$}
  \State draw $\beta_t \sim \beta_{\omega_{t-1}}(\rho_{\omega_{t-1}}(s_t))$
      \Comment{$\beta_{\nulloption}$ always returns $1$}
      \If{$\beta_t$}
        \State $\omega_t \gets \nulloption{}$
        \Comment{option terminates}
      \Else
        \State $\omega_t \gets \omega_{t-1}$
        \Comment{option continues}
      \EndIf
    \If{$\omega_t = \nulloption{}$}
    \State draw $\omega_t \sim \pi_H(\cdot|s_t)$
        \Comment{Draw new option or $\nulloption$}
    \EndIf
    \State draw $a_t \sim \pi_{\omega_t}(\cdot|\rho_{\omega_t}(s_t))$
    \State draw $s_{t+1} \sim p(\cdot|\rho_{\omega_{t}}(s_t), a_t)$
  \EndFor
\end{algorithmic}
\end{algorithm}

\paragraph{}
We assume that the two trajectories
$\tau$ and $\tau_L$
are generated by the generative model
described in
Algorithm~\ref{alg:generative_hierarch}.
This allows us to rewrite the probability of a trace as
\begin{align}
  \Pr(\tau) &= {\sum_{\tau_L} \Pr(\tau, \tau_L)} \\
  &= \sum_{t=1}^{T}
                            \Pr(\omega_t|s_t,\omega_{t-1})
                            \pi_{\omega_t}(a_t|s_t)
                            p(s_{t+1}|s_t,a_t)
  \label{eq:prob_trajectory}
\end{align}
The term
$\Pr(\omega_t|s_t,\omega_{t-1})$
is dependent on whether
an option was previously in use:
\begin{equation}
  \Pr(\omega_t|s_t,\omega_{t-1})
  = \begin{cases}
    \pi_H(\omega_t|s_t), & \omega_{t-1} = \nulloption{}\\
    (1-\beta_{\omega_{t-1}}(s_t))\delta_{\omega_{t-1}=\omega_t}
    + \beta_{\omega_{t-1}}(s_t)
    \pi_H(\omega_t|s_t) & \omega_{t-1} \neq \nulloption{}
  \end{cases}
  \label{eq:discovery_option_trans}
\end{equation}

Both option control and termination policies
are parameterized by parameters
$\theta_\omega = (\theta_{\pi_\omega}, \theta_{\beta_\omega})$,
and the high-level policy is parameterized by
$\theta_{\pi_H}$.
For example,
these could be the weights of neural networks.
Jointly, we write
$\theta = (\theta_{\pi_H}, (\theta_\omega)_{\omega \in \Omega})$%
.

We now phrase the problem
of option discovery
as maximizing the log-likelihood
of the demonstrated trajectory.
\begin{equation}
  \theta_{\textsc{max}} = \argmax_\theta L[\theta; \tau]
  = \argmax_\theta \log \Pr_\theta(\tau)
\end{equation}
The original Expectation-Gradient algorithm that performs this maximization
is due to
\citet{Daniel2016},
along with its extension by
\citet{Fox2017}
to allow for neural networks.
These versions
contain a more complicated representation
of the latent trajectory which included termination tokens.
This greatly complicates
both the representation and the implementation of the algorithm.
Below we present
the simpler yet equivalent algorithm
that results from removing termination tokens from the latent trajectory.
\begin{align}
  \nabla_\theta L[\theta; \tau]
  &=
  \nabla_\theta \log \Pr(\tau) \\
  &=
  \frac{\nabla_\theta \Pr(\tau)}{\Pr(\tau)}
  && \nabla \log x = \frac{\nabla x}{x} \\
  &=
  \sum_{\tau_L} \frac{\nabla_\theta \Pr(\tau, \tau_L)}{\Pr(\tau)} \\
  &=
  \sum_{\tau_L} \frac{\Pr(\tau, \tau_L)}{\Pr(\tau)} \nabla_\theta\log \Pr(\tau, \tau_L)
  && \nabla x = x \nabla \log x \\
  &=
  \sum_{\tau_L} \Pr(\tau_L | \tau) \nabla_\theta\log \Pr(\tau, \tau_L) \\
  &=
  \label{eq:expect_tau}
  \expected_{\tau_L|\tau; \theta}
  \left[ \nabla_\theta \log \Pr(\tau_L, \tau) \right]
\end{align}
\paragraph{}
Expand \eqref{eq:expect_tau}
using \eqref{eq:prob_trajectory}.
Note how the environment's transition probabilities drop away
in the derivative as they do not rely on $\theta$.
\begin{align}
  &\expected_{\tau_L|\tau; \theta}
  \left[ \nabla_\theta \log \Pr(\tau_L, \tau) \right]
  = \nonumber \\
  \label{eq:expect_tau_expanded}
  &\sum_{t=1}^{T-1}\sum_{\omega_t}\sum_{\omega_{t-1}}
  \Pr(\omega_t, \omega_{t-1}|\tau)
  \nabla_\theta \log\Pr(\omega_t|\omega_{t-1},s_t)
  +
  \sum_{t=1}^{T-1}\sum_{\omega_t}
  \Pr(\omega_t|\tau)
  \nabla_\theta \log\pi_{\omega_{t}}(a_t|s_t,\omega_t)
\end{align}
\subsubsection*{Baum-Welch forward-backward algorithm}
Equation \eqref{eq:expect_tau_expanded}
relies on two posteriors:
\begin{align}
  \label{eq:marginal_posterior}
  u_t(\omega)&=\Pr({\bbomega}_t=\omega|\tau)\\
  \label{eq:joint_posterior}
  v_t(\omega,\omega')&=\Pr({\bbomega}_t=\omega', \bbomega_{t-1}=\omega|\tau)
\end{align}
\paragraph{}
We can model the problem as a
Hidden Markov Model (HMM)
and make use of a modified version
of the Baum-Welch algorithm
to make computing these posteriors computationally feasible.
To sketch the HMM,
consider that the observed action at
each time step relies only on the state
and the latent active option.

The Baum-Welch algorithm is based on the following observation.
We start by rewriting
equation \eqref{eq:marginal_posterior}.
\begin{align}
  \Pr(\omega_t|\tau)
  &= \frac{\Pr(\tau|\omega_t)\Pr(\omega_t)}{\Pr(\tau)}
  && \text{Bayes' rule}
  \\
  &= \frac{\Pr(\tau_{1:t}|\omega_t)\Pr(\tau_{t+1:T})|\omega_t)\Pr(\omega_t)}{\Pr(\tau)}
  && \tau_{1:t} \perp \tau_{t+1:T} | \omega_t
  \\
  &=
  \label{eq:marginal_posterior_expanded}
  \frac{%
  \Pr(\tau_{1:t}, \omega_t)\Pr(\tau_{t+1:T}|\omega_t)}
  {\Pr(\tau)}
  && \text{}
\end{align}
Equation \eqref{eq:marginal_posterior_expanded}
gives us the two main entities of the algorithm.
\begin{align}
  \alpha_t(\omega_t)
  &= \Pr(\tau_{1:t},\omega_t) \\
  \beta_t(\omega_t)
  &= \Pr(\tau_{t+1:T}|\omega_t)
\end{align}
These can be computed efficiently
with a recursive forward and backward pass
respectively,
following the dynamic programming paradigm.
Finally,
the marginal and joint posteriors can be recovered.
\begin{align}
  \Pr(\omega_t|\tau)
  &= z^{-1} \alpha_t(\omega_t)\beta_t(\omega_t)\\
  \Pr(\omega_t, \omega_{t+1}|\tau)
  &= z^{-1} \alpha_t(\omega_t)\beta_{t+1}(\omega_{t+1})
  \Pr(\omega_{t+1}|\omega_t,s_{t+1})
  \Pr(a_{t+1}|\omega_{t+1},s_{t+1})
\end{align}
where $z=\Pr(\tau)$ is the normalisation constant
first introduced in
\eqref{eq:marginal_posterior_expanded}.

Assuming $\theta$ fixed,
we now have a closed-form and exact expression
for the gradient in
\eqref{eq:expect_tau_expanded}.
Expectation-Gradient
simply alternates between
solving the expectation in
\eqref{eq:expect_tau_expanded}
which gives rise to a gradient.
This gradient can now be climbed
with a gradient ascent method.

\end{document}